\pdfoutput=1
\documentclass[journal]{IEEEtran}
%

\usepackage{amsmath,amssymb, amsthm}

\usepackage{epstopdf}

\usepackage{subcaption}
\usepackage{amsfonts}
\usepackage{verbatim}
\usepackage{tikz}
\usetikzlibrary{decorations.pathreplacing,calc}
\usepackage{enumitem}

\usepackage{layout}
\usepackage{amsmath, bm}
\usepackage{amsthm}
\usepackage{amssymb} 
\usepackage{url}
\usepackage{color}

\usepackage{algorithmicx}
\usepackage{algorithm}
\usepackage{algpseudocode}

\usepackage{epsfig}
\usepackage{caption}
\usepackage{stfloats}

\usepackage{url}
\usepackage{breakurl}

\newcommand{\stepsize}{ h}
\newcommand{\decrease}{\rho}
\newcommand{\primal}{P}
\newcommand{\z}{{z}}
\newcommand{\x}{{x}}
\newcommand{\y}{{y}}

\newcommand{\setn}{[n]}
\newcommand{\nz}{\text{nnz}}
\newcommand{\proxl}{\text{prox}^{l}}


\newcommand\tagthis{\addtocounter{equation}{1}\tag{\theequation}}


\usepackage[normalem]{ulem}

\newcommand{\add}[1]{#1}


\newcommand{\eqdef}{\stackrel{\text{def}}{=}}

\newcommand{\R}{\mathbb{R}}

\newcommand{\vc}[2]{#1^{#2}}




\DeclareMathOperator{\prox}{prox}         

\DeclareMathOperator{\Exp}{\mathbf{E}}           

\DeclareMathOperator{\dom}{dom}         



\DeclareMathOperator{\support}{support}




\newtheorem{assumption}{Assumption}
\newtheorem{lemma}{Lemma}

\newtheorem{theorem}{Theorem}

\theoremstyle{plain}

\newtheorem{cor}[theorem]{Corollary}

\theoremstyle{definition}

\captionsetup{justification=centering,margin=3cm}

%

\hyphenation{op-tical net-works semi-conduc-tor}

\begin{document}

\add{\bstctlcite{IEEEexample:BSTcontrol}}
%
\title{Mini-Batch Semi-Stochastic Gradient Descent \\ in the Proximal Setting}
%
%
%

\author{Jakub~Kone\v{c}n\'{y}, 
         Jie~Liu,
         Peter~Richt\'{a}rik, 
         Martin~Tak\'a\v{c}
\thanks{Jakub~Kone\v{c}n\'{y} and Peter Richt\'{a}rik are with the School of Mathematics, University of Edinburgh, United Kingdom, EH9 3FD.}
\thanks{Jie~Liu and Martin Tak\'a\v{c} are with the Department of Industrial and Systems Engineering,
Lehigh University, Bethlehem, PA 18015, USA.}
\thanks{Manuscript received April 15, 2015; revised ---.}}

\maketitle

\begin{abstract}
We propose mS2GD: a method incorporating a mini-batching scheme for improving the theoretical complexity and practical performance of semi-stochastic gradient descent  (S2GD). We consider the problem of minimizing a strongly convex  function represented as the sum of an average of a large number of smooth convex functions, and a simple nonsmooth convex regularizer. Our method first performs a deterministic step (computation of the gradient of the objective function  at the starting point), followed by a large number of stochastic steps. The process is repeated a few times with the last iterate becoming the new starting point. The novelty of our method is in introduction of mini-batching into the computation of stochastic steps. In each step, instead of choosing a single function, we sample $b$ functions, compute their gradients, and compute the direction based on this. We analyze the complexity of the method and show that it benefits from two speedup effects. First,  we prove that as long as $b$ is below a certain threshold, we can reach any predefined accuracy with less overall work than without mini-batching. Second, our mini-batching scheme admits a simple parallel implementation, and hence is suitable for further acceleration by parallelization.
\end{abstract}

\begin{IEEEkeywords}
mini-batches, proximal methods, empirical risk minimization, semi-stochastic gradient descent, sparse data, stochastic gradient descent, variance reduction.
\end{IEEEkeywords}

%
\IEEEpeerreviewmaketitle


\section{Introduction} \label{sec:Intro}
%
%
%
%

\add{

\IEEEPARstart{I}{n} this work we are concerned with the problem of minimizing the sum of two convex functions,
\begin{equation}\label{Px1}
\min_{\x\in\R^d} \{\primal(\x) := F(\x) + R(\x)\},
\end{equation}
where the first component, $F$, is smooth,  and the second component, $R$, is  possibly nonsmooth (and extended real-valued, which allows for the modeling of constraints).

In the last decade, an intensive amount of research was conducted into algorithms for  solving problems of the form  \eqref{Px1}, largely motivated by the realization that the underlying problem has a considerable modeling power. One of the most popular and practical methods  for \eqref{Px1}  is the accelerated proximal gradient method of Nesterov \cite{nesterov2007acc}, with its most successful variant being FISTA \cite{fista}.

In many applications in optimization, signal processing and machine learning,  $F$ has an additional structure. In particular, it is often the case that $F$ is the {\em average of a   number of  convex functions:}
\begin{equation}\label{Px2}
F(\x) = \frac{1}{n} \sum_{i=1}^n f_i(\x).
\end{equation}

Indeed, even one of the most basic optimization problems---least squares regression---lends itself to a natural representation of the form \eqref{Px2}. 

\subsection{Stochastic methods.}

For problems of the form \eqref{Px1}+\eqref{Px2}, and especially when $n$ is large and when a solution of low to medium accuracy is sufficient, deterministic methods do not perform as well as classical {\em stochastic}\footnote{Depending on conventions used in different communities, the terms {\em randomized} or {\em sketching} are used instead of the word {\em stochastic}. In signal processing, numerical linear algebra and theoretical computer science, for instance, the terms {\em sketching} and {\em randomized} are used more often. In machine learning and optimization, the terms {\em stochastic} and {\em randomized} are used more often. In this paper, stochasticity does not refer to a data generation process, but to randomization embedded in an algorithm which is applied to a deterministic problem. Having said that, the deterministic problem {\em can} and {\em often does} arise as a sample average approximation of stochastic problem (average replaces an expectation), which further blurs the lines between the terms.}  methods. The prototype method in this category is stochastic gradient descent (SGD), dating back to the 1951 seminal work of Robbins and Monro~\cite{RM1951}. SGD  selects an index $i\in \{1,2,\dots,n\}$ uniformly at random, and then updates the variable $\x$ using $\nabla f_i(\x)$ --- a stochastic estimate of $\nabla F(\x)$. Note that the computation of $\nabla f_i$ is $n$ times cheaper than the computation of the full gradient $\nabla F$. For problems where $n$ is very large, the per-iteration savings  can be extremely large, spanning several orders of magnitude. 

These savings do not come for free, however (modern methods, such as the one we propose, overcome this -- more on that below). Indeed, the stochastic estimate of the gradient embodied by $\nabla f_i$ has a non-vanishing variance. To see this, notice that even when started from an optimal solution $x_*$, there is no reason for $\nabla f_i(x_*)$ to be zero, which means that SGD drives  away from the optimal point. Traditionally, there have been two ways of dealing with this issue. The first one consists in choosing a decreasing sequence of stepsizes. However, this means that a much larger number of iterations is needed. A second approach is to use  a subset (``minibatch'') of indices $i$, as opposed to a single index, in order to form a better stochastic estimate of the gradient. However, this results in a method which performs more work per iteration. In summary, while traditional approaches manage to decrease the variance in the stochastic estimate, this comes at a cost.

\subsection{Modern stochastic methods}

Very recently, starting with the SAG \cite{SAG}, SDCA \cite{SDCA}, SVRG \cite{SVRG} and S2GD \cite{S2GD}  algorithms from year 2013, it has transpired that neither decreasing stepsizes nor mini-batching are necessary to resolve the non-vanishing variance issue inherent in the vanilla SGD methods. Instead, these modern stochastic\footnote{These methods are randomized algorithms. However, the term ``stochastic''  (somewhat incorrectly) appears in their names for historical reasons, and quite possibly due to their aspiration to improve upon {\em stochastic} gradient descent (SGD).} method are able to dramatically improve upon SGD in various different ways, but without having to resort to the usual variance-reduction techniques (such as decreasing stepsizes or mini-batching) which carry with them considerable costs drastically reducing their power. Instead, these modern methods were able to improve upon SGD  without any unwelcome side effects. This development led to a revolution in the area  of first order methods for solving problem \eqref{Px1}+\eqref{Px2}. Both the theoretical complexity and practical efficiency of these modern methods vastly outperform prior gradient-type methods.

In order to achieve $\epsilon$-accuracy, that is,
\begin{equation}
\label{eq:epsilonaccuracy}
\Exp [\primal(x_k) - \primal(x_*)] \leq \epsilon [ \primal(x_0) - \primal(x_*) ],
\end{equation}
modern stochastic methods such as SAG, SDCA, SVRG and S2GD  require only \begin{equation}\label{eq:modern_complexity} {\cal O}((n+\kappa)\log(1/\epsilon))\end{equation} units of work, where $\kappa$ is a condition number associated with $F$, and one unit of work corresponds to the computation of the gradient of $f_i$ for a random index $i$, followed by a call to a prox-mapping involving $R$.
More specifically, $\kappa=L/\mu$, where $L$ is a uniform bound on the Lipschitz constants of the gradients of functions $f_i$ and $\mu$ is the strong convexity constant of $P$. These quantities will be defined precisely in Section~\ref{sec:analysis}.

The complexity bound \eqref{eq:modern_complexity} should be contrasted with that of proximal gradient descent (e.g., ISTA), which requires $O(n\kappa \log(1/\epsilon))$ units of work, or FISTA, which requires $O(n\sqrt{\kappa} \log(1/\epsilon))$ units of work\footnote{However, it should be remarked that the condition number $\kappa$ in these latter methods is slightly different from that appearing in the bound \eqref{eq:modern_complexity}.}.  Note that while all these methods enjoy linear convergence rate, the modern stochastic methods can be many orders of magnitude faster than classical deterministic methods. Indeed, one can have \[n+\kappa \ll n\sqrt{\kappa} \leq n\kappa.\] Based on this, we see that these modern methods always beat (proximal) gradient descent ($n+\kappa \ll n\kappa$), and also  outperform FISTA as long as $\kappa \leq {\cal O}(n^2)$.  In machine learning, for instance, one usually has $\kappa \approx n$, in which case the improvement is by a factor of $\sqrt{n}$ when compared to FISTA,  and by a factor of $n$ over ISTA. For applications where $n$ is massive, these improvements are indeed dramatic.

For more information about modern dual and primal   methods we refer the reader to the literature on  randomized coordinate descent methods \cite{nesterovRCDM, richtarik, PCDM, necoara2014random, approx,SDCA, marecek2014distributed,necoara2013distributed, richtarik2013distributed, fercoq2014fast,ALPHA, combettes2015} and stochastic gradient methods \cite{SAG,  zhangsgd, ma2015adding, COCOA, takac2013ICML, nitanda, ALPHA, rosasco2014}, respectively.

\subsection{Linear systems and sketching.}

In the case when $R\equiv 0$, all stationary points (i.e., points satisfying $\nabla F(x)=0$) are optimal for   \eqref{Px1}+\eqref{Px2}. In the special case when the functions $f_i$ are convex quadratics of the form $f_i(x)= \tfrac{1}{2}(a_i^T x - b_i)$, the equation $\nabla F(x)=0$ reduces to the linear system $A^T Ax = A^T b$, where $A=[a_1,\dots,n]$. Recently, there has been considerable interest in designing and analyzing  randomized methods for solving linear systems; also known under the name of {\em sketching} methods. Much of this work was done independently from the developments in (non-quadratic) optimization, despite the above connection between optimization and linear systems. A randomized version of the classical Kaczmarz method was studied in a seminal paper by Strohmer and Vershynin \cite{Strohmer2009}. Subsequently, the method was extended and improved upon in several ways \cite{Needell2010, Zouzias2012, 
Ma2015, Oswald2015}. The randomized Kaczmarz method is equivalent to SGD with a specific stepsize choice \cite{NeedellWard2015,GowerRichtarik2015-linear}. The first randomized coordinate descent method, for linear systems, was analyzed  by Lewis and Leventhal~\cite{Leventhal2008}, and subsequently generalized in various ways by numerous authors (we refer the reader to \cite{ALPHA} and the references therein).
Gower and Richt\'{a}rik \cite{GowerRichtarik2015-linear} have recently studied randomized iterative methods for linear systems in a general {\em sketch and project} framework, which in special cases includes randomized Kaczmarz, randomized coordinate descent, Gaussian descent, randomized Newton, their block variants, variants with importance sampling, and also an infinite array  of new specific  methods. For approaches of a combinatorial flavour, specific to diagonally dominant systems, we refer to the influential work of Spielman and Teng  \cite{Spielman2006}.

\section{Contributions}

In this paper we equip moderns stochastic methods---methods which already enjoy the fast rate 
\eqref{eq:modern_complexity}---with the ability to process data in {\em mini-batches}. None of the {\em primal}\footnote{By a primal method we refer to an algorithm which operates directly to solve \eqref{Px1}+\eqref{Px2} without explicitly operating on the dual problem. {\em Dual} methods have very recently   been analyzed in the mini-batch setting. For a review of such methods we refer the reader to  the paper describing the QUARTZ method \cite{QUARTZ} and the references therein.} modern methods have been analyzed in the mini-batch setting. This paper fills this gap in the literature. 

While we have argued above that the modern methods, S2GD included, do not have the ``non-vanishing variance'' issue that SGD does, and hence do not {\em need} mini-batching for that purpose,  mini-batching is still useful.  In particular,  we develop and analyze the complexity of mS2GD (Algorithm~\ref{alg:mS2GD}) --- a mini-batch proximal variant of {\em semi-stochastic gradient descent} (S2GD) \cite{S2GD}. While the S2GD method was analyzed in the $R=0$ case only, we develop and analyze our method in the proximal\footnote{Note that the Prox-SVRG method   \cite{proxsvrg} can also handle the composite problem \eqref{Px1}.} setting  \eqref{Px1}. We show that mS2GD enjoys several benefits when compared to previous modern methods. First, it trivially admits a parallel implementation, and hence enjoys a speedup in clocktime in an HPC environment. This is critical for applications with massive datasets and is the main motivation and advantage of our method. Second, our results show that in order to attain a  specified accuracy $\epsilon$, mS2GD can get by with fewer gradient evaluations than S2GD. This is formalized in Theorem~\ref{thm:optimalM}, which predicts more than linear speedup up to a certain threshold mini-batch size after which the complexity deteriorates. Third, compared to \cite{proxsvrg}, our method does not need to average the iterates produced in each inner loop;  we instead simply continue from the last one. This is the approach employed in S2GD \cite{S2GD}.

}


\section{The Algorithm}
\label{sec:algorithms}

In this section we first briefly motivate the mathematical setup of deterministic and stochastic proximal gradient methods in Section \ref{sec:algorithms}A, followed by the introduction of semi-stochastic gradient descent in Section~\ref{sec:algorithms}B. We will the be ready to describe the mS2GD method in Section~\ref{sec:algorithms}C.

\subsection{Deterministic and stochastic proximal gradient methods}

The classical {\em deterministic proximal gradient} approach \cite{fista,combettes2011,parikh2014} to solving \eqref{Px1}  is to form a sequence $\{\y_{t}\}$ via 
\begin{align*}
\y_{t+1} = \arg \min_{\x\in \R^d} U_t(x),
\end{align*} 
where $U_t(x) \eqdef F(\y_t) + \nabla F (\y_{t})^T (\x-\y_{t}) + \tfrac{1}{2\stepsize} \|\x-\y_{t}\|^2 + R(x)$.
Note that in view of Assumption~\ref{ass1}, which we shall use in our analysis in Section~\ref{sec:analysis},  $U_{t}$ is an upper bound on $\primal$  whenever $\stepsize>0$ is a stepsize parameter satisfying $1/\stepsize \geq L$. This procedure can be compactly written using the {\em proximal operator} as follows:
\begin{equation*}
\y_{t+1} = \prox_{\stepsize R} (\y_{t} - \stepsize \nabla F (\y_{t})),
\end{equation*} 
where
\begin{equation*}
\prox_{hR}(\z) \eqdef \arg\min_{\x \in\R^d} \{ \tfrac 12  \|\x-\z\|^2 + h R(\x)\}.
\end{equation*}

In a large-scale setting it is more efficient to instead consider  the \emph{stochastic proximal  gradient} approach, in which the proximal operator is applied to a stochastic gradient step:
\begin{equation}\label{eqn:prox-svrg update}
y_{t+1} = \prox_{\stepsize R}(y_{t} - \stepsize G_{t}),
\end{equation}
where $G_{t}$ is a stochastic estimate of the gradient $\nabla F(y_t)$. 

\subsection{Semi-stochastic methods}

Of particular relevance to our work are the SVRG \cite{SVRG}, S2GD \cite{S2GD} and Prox-SVRG  \cite{proxsvrg} methods where the stochastic estimate of $\nabla F(y_t)$ is of the form
\begin{equation}\label{eq:sjs8js}
G_{t} = \nabla F (x) + \tfrac{1}{nq_{i_t}}(\nabla f _{i_t}(\y_{t}) - \nabla f_{i_t}(x)),
\end{equation}
where  $x$ is an ``old'' reference point for which the gradient $\nabla F(x)$ was already computed in the past, and $i_t \in \setn \eqdef \{1,2,\dots,n\}$ is a random index equal to $i$ with probability $q_i>0$.  Notice that $G_{t}$ is an unbiased estimate of the gradient of $F$ at $y_t$: \[\Exp_{i_t} [G_{t}] \overset{\eqref{eq:sjs8js}}{=} \nabla F(x) + \sum_{i=1}^n q_i \tfrac{1}{n q_i} (\nabla f_i(y_t) - \nabla f_i(x)) \overset{\eqref{Px2}}{=} \nabla F(y_t).\]

Methods such as  S2GD, SVRG, and  Prox-SVRG update the points $y_t$ in an inner loop, and the reference point $x$ in an outer loop (``epoch'') indexed by $k$. With this new outer iteration counter we will have $x_k$ instead of $x$, $y_{k,t}$ instead of $y_t$ and $G_{k,t}$ instead of $G_t$. This is the notation we will use in the description of our algorithm in Section~\ref{sec:algorithms}C. The outer loop ensures that the squared norm of $G_{k,t}$ approaches zero as $k,t\to \infty$ (it is easy to see that this is equivalent to saying that the stochastic estimate $G_{k,t}$ has a diminishing variance), which ultimately leads to extremely fast convergence.

\subsection{Mini-batch S2GD}

We are now ready to describe the mS2GD method\footnote{\add{
A more detailed algorithm and the associated analysis (in which we benefit from the knowledge of lower-bound on the strong convexity parameters of the functions $F$ and $R$) can be found in the arXiv preprint~\cite{konecny2015mini}.
The more general algorithm mainly differs in $t_k$ being chosen according to a geometric  probability law which depends on the estimates of the convexity constants.
}}
 (Algorithm~\ref{alg:mS2GD}).

\begin{algorithm}[H]
\caption{mS2GD}
\label{alg:mS2GD}
\begin{algorithmic}[1]
\State \textbf{Input:} $m$ (max \# of stochastic steps per epoch); $\stepsize>0$ (stepsize); $\x_0 \in\R^d$ (starting point);  mini-batch size $b \in \setn$ 
\For {$k=0,1, 2, \dots$}
    \State Compute and store $g_{k} \leftarrow \nabla F(\x_{k}) = \tfrac{1}{n}\sum_{i} \nabla f_i(\x_{k})$ 
    \State Initialize the inner loop: $\y_{k,0} \gets \x_{k}$
    \State Choose $t_k  \in \{1,2,\dots,m\}$  \add{uniformly at random}
    \For {$t=0$ to $t_k-1$}
	    \State Choose mini-batch $A_{kt}\subseteq \setn$ of size $b$, \newline {\color{white}.} \qquad \qquad uniformly at random 
        \State 
\label{step:8}        
        Compute a stochastic estimate of $\nabla F(\y_{k,t})$: \newline
        {\color{white}ij} \quad \quad $G_{k,t} \gets g_k + \frac{1}{b}\sum_{i\in A_{kt}}(\nabla f_{i}(\y_{k,t}) - \nabla f_{i}(\x_{k})) $
        \State
\label{step:9}        
         $\y_{k,t+1} \gets \prox_{\stepsize R}(\y_{k,t} - \stepsize G_{k,t})$
     \EndFor
    \State  Set $\x_{k+1} \leftarrow \y_{k,t_k}$
\EndFor
\noindent
\end{algorithmic}
\end{algorithm}


The algorithm includes an outer loop, indexed by epoch counter $k$, and an inner loop, indexed by $t$. Each epoch is started by computing $g_k$, which is the  (full) gradient of $F$ at $x_k$. It then immediately  proceeds to the inner loop. The inner loop is run for $t_k$ iterations,  
\add{where $t_k$ is chosen uniformly at random from  $ \{1,\dots,m\}$.} Subsequently, we run $t_k$ iterations in the inner loop (corresponding to Steps 6--10). Each new iterate is given by the proximal update~\eqref{eqn:prox-svrg update}, however with the stochastic estimate of the gradient $G_{k,t}$ in~\eqref{eq:sjs8js}, which is formed by using a {\em mini-batch} of examples $A_{kt}\subseteq [n]$ of size $|A_{kt}|=b$. Each inner iteration requires $2b$ units of work\footnote{It is possible to finish each iteration with only $b$ evaluations for  component gradients, namely $\{\nabla f_i(y_{k,t})\}_{i\in A_{kt}}$, at the cost of having to store $\{\nabla f_i(x_k)\}_{i\in\setn}$, which is exactly the way that SAG~\cite{SAG} works. This speeds up the algorithm; nevertheless, it is impractical for big $n$.}.

\section{Analysis}
\label{sec:analysis}
 
In this section, we lay down the assumptions, state our main complexity result, and  comment on how to optimally choose the parameters of the method.

\subsection{Assumptions}

Our analysis is performed under the following two assumptions.

\begin{assumption}\label{ass1}
Function  $R:\R^d\to \R\cup \{+\infty\}$ (regularizer/proximal term) is  convex and closed. The functions  $f_i:\R^d\to \R$  have Lipschitz continuous gradients with constant $L > 0$. That is, 
$
\| \nabla f _i(\x)-\nabla f _i(\y) \|\leq L \|\x-\y\|,
$
for all $x,y\in \R^d$, where $\|\cdot\|$ is the $\ell_2$-norm.
\end{assumption}

Hence,  the gradient of $F$ is also Lipschitz continuous with the same constant $L$. 

\begin{assumption}\label{ass2}
 $\primal$ is strongly convex with parameter $\mu>0$. That is for all $\x,\y \in \dom(R)$ and $ \xi \in \partial{\primal(\x)}$, 
\begin{equation}\label{strongconv}
\primal(\y) \geq \primal(\x) + \xi^T(\y-\x) + \tfrac\mu2 \|\y-\x\|^2,
\end{equation}
 where $\partial \primal(\x)$ is the subdifferential of $\primal$ at $\x$.
\end{assumption}

Lastly, by $\mu_F \geq 0$ and $\mu_R\geq 0$ we denote the strong convexity constants of $F$ and $R$, respectively. We allow both of these quantities to be equal to $0$, which simply means that the functions are convex (which we already assumed above). Hence, this is not necessarily an additional assumption.

\subsection{Main result}

We are now ready to formulate our complexity result.
 
\begin{theorem}\label{s2convergence}
Let Assumptions~\ref{ass1} and~\ref{ass2} be satisfied, let $\x_* \eqdef \arg\min_\x \primal(\x)$ and choose $b\in \{1,2\dots,n\}$. Assume  that
\add{ $0<\stepsize \leq 1/L$, $4h L \alpha(b)<1$ and that $m,h$ are further chosen so that
\begin{equation}\label{s2rho}
\decrease \eqdef 
\tfrac{    1
  }
  {
  m 
\stepsize \mu
(
1
-
  4\stepsize  L \alpha(b)  
)
  }
+
\tfrac{      
  4\stepsize  L \alpha(b) 
\left(      
 m
+ 
  1
\right)  
  }
  {
  m 
(
1
-
  4\stepsize   L \alpha(b)  
)
  } < 1,
\end{equation}}
where $\alpha(b) \eqdef \frac{n-b}{b (n-1)} $. Then mS2GD  has linear convergence in expectation with rate $\decrease$:
\begin{equation*}
\Exp [\primal(\x_k) - \primal(\x_*)] \leq \decrease^k [\primal(\x_0)-\primal(\x_*)].
\end{equation*}
\end{theorem}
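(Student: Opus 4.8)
The plan is to adapt the progressive variance-reduction analysis of SVRG/Prox-SVRG to mini-batch sampling, so that the finite-population factor $\alpha(b)=\tfrac{n-b}{b(n-1)}$ emerges from the second moment of the mini-batch gradient estimate. Throughout I fix an epoch $k$ and abbreviate the estimation error by $\Delta_{k,t}\eqdef G_{k,t}-\nabla F(\y_{k,t})$; the identity $\Exp_{A_{kt}}[G_{k,t}]=\nabla F(\y_{k,t})$ recorded after \eqref{eq:sjs8js} shows that $\Exp_{A_{kt}}[\Delta_{k,t}]=0$, so $G_{k,t}$ is an unbiased estimator and the analysis reduces to controlling its variance.

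First, and most importantly, I would establish the variance bound. Since $A_{kt}$ is drawn uniformly without replacement from $\setn$ with $|A_{kt}|=b$, the second moment of the mini-batch average of the centered terms $\nabla f_i(\y_{k,t})-\nabla f_i(\x_k)$ contracts, relative to the single-sample case ($b=1$, $\alpha(1)=1$), by precisely the finite-population correction factor $\alpha(b)$. Coupling this with Assumption~\ref{ass1} and the standard smoothness-and-convexity inequality that bounds $\tfrac1n\sum_i\|\nabla f_i(\z)-\nabla f_i(\x_*)\|^2$ by a constant multiple of the suboptimality $\primal(\z)-\primal(\x_*)$ (applied at $\z=\y_{k,t}$ and at $\z=\x_k$), I expect to obtain
\begin{equation*}
\Exp_{A_{kt}}\|\Delta_{k,t}\|^2 \;\le\; 4L\,\alpha(b)\bigl[(\primal(\y_{k,t})-\primal(\x_*))+(\primal(\x_k)-\primal(\x_*))\bigr].
\end{equation*}
This single inequality is where the entire mini-batch speedup originates: $\alpha(b)$ decays like $1/b$, so larger batches shrink the variance and relax the requirement $4hL\alpha(b)<1$.

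Next I would write down a one-step estimate for the proximal update $\y_{k,t+1}=\prox_{\stepsize R}(\y_{k,t}-\stepsize G_{k,t})$. Using the optimality characterization of the prox operator together with the upper-bound property of $U_t$ from Section~\ref{sec:algorithms}A (valid because $1/\stepsize\ge L$), and handling the cross term $\Delta_{k,t}^T(\x_*-\y_{k,t+1})$ by splitting off the variance-free update $\prox_{\stepsize R}(\y_{k,t}-\stepsize\nabla F(\y_{k,t}))$ (whose inner product with the zero-mean $\Delta_{k,t}$ vanishes in expectation) and bounding the remainder via nonexpansiveness of the prox, I would reach a recursion of the form
\begin{equation*}
\Exp\|\y_{k,t+1}-\x_*\|^2 + 2\stepsize\,\Exp[\primal(\y_{k,t+1})-\primal(\x_*)] \;\le\; \|\y_{k,t}-\x_*\|^2 + 2\stepsize^2\,\Exp\|\Delta_{k,t}\|^2,
\end{equation*}
into which the variance bound is substituted to convert $\Exp\|\Delta_{k,t}\|^2$ into the two function-value gaps.

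Finally I would telescope this recursion over $t=0,\dots,m-1$ (with $\y_{k,0}=\x_k$), drop the nonnegative terminal distance, and take expectation over the uniformly random stopping index $t_k\in\{1,\dots,m\}$. The $\primal(\y_{k,t+1})$ and $\primal(\y_{k,t})$ gaps telescope to leave the effective coefficient $2\stepsize(1-4\stepsize L\alpha(b))$ on the bulk terms; the uniform law over $t_k$ identifies $\tfrac1m\sum_{t}[\primal(\y_{k,t})-\primal(\x_*)]$ with $\Exp[\primal(\x_{k+1})-\primal(\x_*)]$; and the $m$ copies of $\primal(\x_k)-\primal(\x_*)$ coming from the variance bound, together with the extra copy from $\y_{k,0}=\x_k$, produce the factor $(m+1)$. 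Invoking strong convexity (Assumption~\ref{ass2}) in the form $\tfrac\mu2\|\x_k-\x_*\|^2\le\primal(\x_k)-\primal(\x_*)$ to convert the initial distance into a function gap then yields exactly the epoch contraction $\Exp[\primal(\x_{k+1})-\primal(\x_*)]\le\decrease\,[\primal(\x_k)-\primal(\x_*)]$ with $\decrease$ as in \eqref{s2rho}, and iterating over $k$ gives the claim. The main obstacle is the first step: getting the without-replacement second moment exactly right so that $\alpha(b)$ appears with the correct constant, and cleanly coupling the two reference points $\y_{k,t}$ and $\x_k$ through the smoothness-plus-convexity bound; the prox one-step lemma and the telescoping/averaging are comparatively mechanical once the variance lemma is secured.
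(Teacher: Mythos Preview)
Your proposal is correct and follows essentially the same route as the paper: the variance bound $\Exp\|\Delta_{k,t}\|^2\le 4L\alpha(b)[(\primal(\y_{k,t})-\primal(\x_*))+(\primal(\x_k)-\primal(\x_*))]$ via the without-replacement sampling lemma, the one-step prox recursion with the cross term $\Delta_{k,t}^T(\x_*-\y_{k,t+1})$ handled by inserting the full-gradient prox point $\bar\y_{k,t+1}=\prox_{\stepsize R}(\y_{k,t}-\stepsize\nabla F(\y_{k,t}))$ and invoking nonexpansiveness, then summation over $t$, uniform averaging over $t_k$, and strong convexity to close. Your accounting of the coefficient $2\stepsize(1-4\stepsize L\alpha(b))$ and of the $(m+1)$ factor (from $m$ variance terms plus the $t=0$ boundary) matches the paper exactly.
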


\add{
Notice that for any fixed $b$, by properly adjusting the parameters $h$ and $m$ we can force $\rho$ to be arbitrarily small. Indeed, the second term can be made arbitrarily small by choosing $h$ small enough. Fixing the resulting $h$, the first term can then be made arbitrarily small by choosing $m$ large enough. This may look surprising, since this means that only a single outer loop ($k=1$) is needed in order  to obtain a solution of any prescribed accuracy. While this is indeed the case, such a choice of the parameters of the method ($m$, $h$, $k$) would not be optimal -- the resulting workload would to be too high as the complexity of the method would  depend sublinearly on $\epsilon$. In order to obtain a logarithmic dependence on $1/\epsilon$, i.e., in order to obtain linear convergence, one needs to perform $k=O(\log(1/\epsilon))$ outer loops, and set the parameters $h$ and $m$ to appropriate values (generally, $h={\cal O}(1/L)$ and $m={\cal O}(\kappa)$).

} 

\subsection{Special cases: $b=1$ and $b=n$}

In the special case with  $b=1$ (no mini-batching), we get $\alpha(b)=1$, and the rate given by \eqref{s2rho} exactly recovers the rate achieved by Prox-SVRG \cite{proxsvrg} (in the case when the Lipschitz constants of $\nabla f_i$ are all equal). The rate is also identical to the rate of S2GD \cite{S2GD} (in the case of $R=0$, since S2GD was only analyzed in that case). \add{If we set the number of outer iterations to  $k =\lceil \log(1/\epsilon)\rceil$, choose the stepsize as $h =  \tfrac{1}{(2+4e)L}$, where $e=\exp(1)$, and choose $m=43 \kappa$, then the total workload of mS2GD  for achieving \eqref{eq:epsilonaccuracy}  is $(n+43 \kappa)\log(1/\epsilon)$ units of work. Note that this recovers the fast rate \eqref{eq:modern_complexity}.  

In the batch setting, that is when $b=n$, we have $\alpha(b)=0$ and hence $\rho = 1/(mh \mu)$. By choosing $k=\lceil \log (1/\epsilon)\rceil$, $h=1/L$, and $m=2\kappa$, we obtain the rate ${\cal O}\left(n\kappa \log(1/\epsilon)\right)$. This is the standard rate of (proximal) gradient descent. 

Hence, by modifying the mini-batch size $b$ in mS2GD, we interpolate between the fast rate  of S2GD and the slow rate of GD.}


\subsection{Mini-batch speedup} \label{sec:speedup}

\add{

In this section we will derive formulas for good choices  of the parameter $m,h$ and $k$ of our method as a function of $b$. Hence, throughout this section we shall consider $b$ fixed. 


Fixing $0<\rho<1$, it is easy to see that in order for $x_k$ to be an $\epsilon$-accurate solution (i.e., in order for \eqref{eq:epsilonaccuracy} to hold), it suffices to choose $k\geq (1-\rho)^{-1}\log(\epsilon^{-1})$. Notice that the total workload  mS2GD will do in order to arrive at $x_k$ is \[ k(n+2m) \approx (1-\rho)^{-1}\log(\epsilon^{-1}) (n+2m)\] units of work. If we now consider $\rho$ fixed (we may try to optimize for it later), then clearly the total workload is proportional to $m$. The free parameters of the method are the stepsize $h$ and the inner loop size $m$. Hence, in order to set the parameters so as to minimize the workload (i.e., optimize the complexity bound), we would like to (approximately) solve the optimization problem
\[\min m \quad \text{subject to} \quad 0< h \leq \tfrac{1}{L}, \; h < \tfrac{1}{4L\alpha(b)}, \; \rho \; \text{is fixed}.\]

Let $(\stepsize_*^b, m_*^b)$ denote the optimal pair (we highlight the dependence on $b$ as it will be useful). Note that if $m_*^b \leq m_*^1 / b$ for some $b>1$, then mini-batching can help us reach the $\epsilon$-solution with smaller overall workload. The following theorem presents the formulas for $\stepsize_*^b$ and $m_*^b$.

\begin{theorem}
\label{thm:optimalM}
Fix $b$ and $0<\rho<1$ and let
\begin{equation*}
\tilde \stepsize^b\ \eqdef\ \sqrt{ \left( \tfrac{1+\decrease}{\decrease\mu} \right)^2 + \tfrac1{4\mu\alpha(b)L}} - \tfrac{1+\decrease}{\decrease\mu}.
\end{equation*}
If $\tilde \stepsize^b \leq \frac1L$, then $\stepsize_*^b = \tilde \stepsize^b$ and
\begin{align*}
 m_*^b 
=
\tfrac{2\kappa}{\rho} \left\{ \left( 1+\tfrac1\rho \right)4\alpha(b) + \sqrt{
  \tfrac{4\alpha(b)}{ \kappa} +   \left( 1+\tfrac1\rho \right)^2[4\alpha(b)]^2}
 \right\},
 \tagthis\label{eq:vfewdfwaefvawvgfeefewafa}
\end{align*}
where $\kappa \eqdef \frac{L}{\mu}$ is the condition number. If $\tilde{h}^b > \tfrac{1}{L}$, then  $\stepsize_*^b = \frac1L$
and
\begin{equation}\label{eq:fasfawefwafewa}
m_*^b
  = \tfrac{\kappa + 4 \alpha(b) }
  { \rho -4 \alpha(b)   (1+\rho)}.
\end{equation}

\end{theorem}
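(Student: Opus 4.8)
The plan is to treat the defining relation \eqref{s2rho} as an equality that pins down $\rho$, solve it for $m$ as a function of $h$, and then minimize the resulting one-variable function over the feasible range of $h$. Writing $c \eqdef 4L\alpha(b)$, clearing the denominator $m(1-hc)$ in \eqref{s2rho} and collecting the terms carrying $m$ gives the explicit form
\[
m(h) \;=\; \frac{\tfrac{1}{h\mu} + hc}{\rho - hc(\rho+1)}.
\]
On the feasible range the numerator is positive, and since $0<\rho<1$ the denominator is positive exactly for $h < \rho/(c(\rho+1))$; this bound is automatically smaller than $1/c$, so the theorem's restriction $4hL\alpha(b)<1$ is implied and never binding. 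Because $m(h)\to+\infty$ both as $h\to 0^+$ (from the $1/(h\mu)$ term) and as $h$ tends to the pole $\rho/(c(\rho+1))$, the function $m$ has an interior minimizer, so it suffices to locate its stationary point.

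First I would set $m'(h)=0$, i.e.\ $N'(h)D(h)=N(h)D'(h)$ with $N,D$ the numerator and denominator above. Since $D'(h)=-c(\rho+1)$ this reads $N'(h)D(h)+c(\rho+1)N(h)=0$; upon expanding, the terms linear in $h$ (the $\pm hc^2(\rho+1)$ contributions) cancel, and after multiplying through by $h^2\mu$ the condition collapses to the quadratic
\[
c\rho\mu\,h^2 + 2c(\rho+1)\,h - \rho = 0.
\]
This has a single sign change in its coefficients, hence a unique positive root; factoring $c$ out of the discriminant, dividing through by $c\rho\mu$, and substituting $c=4L\alpha(b)$ and $1/\mu=\kappa/L$ shows that this root is precisely $\tilde h^b$. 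Combined with the boundary behaviour above, this identifies $\tilde h^b$ as the global minimizer of $m$ on its natural domain.

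It remains to impose the box constraint $h\le 1/L$ and to evaluate $m$ at the optimum. If $\tilde h^b\le 1/L$, the unconstrained minimizer is admissible and $h_*^b=\tilde h^b$; otherwise $\tilde h^b>1/L$ forces $1/L<\tilde h^b<\rho/(c(\rho+1))$, so $h=1/L$ is feasible, and since $m$ is decreasing on $(0,\tilde h^b)\supseteq(0,1/L]$, the constrained minimum is attained at $h_*^b=1/L$. In this latter case substituting $h=1/L$ into $m(h)$ gives $1/(h\mu)=\kappa$ and $hc=4\alpha(b)$, which is \eqref{eq:fasfawefwafewa} immediately (and the strict inequality just noted guarantees its positivity).

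The main obstacle is the closed-form evaluation \eqref{eq:vfewdfwaefvawvgfeefewafa} of $m(\tilde h^b)$, which is unpleasant by brute substitution. I would sidestep this using that at a stationary point $m_*^b = N/D = N'/D'$: with $D'=-c(\rho+1)$ constant and $N'=c-1/(h^2\mu)$, eliminating $h_*^2$ via the quadratic yields the intermediate form $m_*^b = 2ch_*/(\rho-2c(\rho+1)h_*)$. Writing $S\eqdef\sqrt{c^2(\rho+1)^2+c\rho^2\mu}$ so that $h_*=(S-c(\rho+1))/(c\rho\mu)$, the identity $S^2-c^2(\rho+1)^2=c\rho^2\mu$ makes the denominator telescope to $(S-c(\rho+1))^2/c$, leaving $m_*^b=2c/(S-c(\rho+1))$. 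Rationalizing by $S+c(\rho+1)$ turns this into $m_*^b=2\bigl(S+c(\rho+1)\bigr)/(\rho^2\mu)$, and substituting $c=4L\alpha(b)$, $\mu=L/\kappa$ and factoring out $4\alpha(b)$ recovers \eqref{eq:vfewdfwaefvawvgfeefewafa}.
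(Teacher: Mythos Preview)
Your proof is correct and follows essentially the same approach as the paper: solve \eqref{s2rho} for $m$ as a function of $h$, show $m(h)\to+\infty$ at both ends of the feasible interval, and locate the unique interior stationary point via the first-order condition. Your treatment is in fact a bit tidier in two places: you deduce $\tilde h^b\in I_h$ from the uniqueness of the positive root together with the boundary behaviour (the paper verifies the inequality by direct algebra in Claim~\#1), and you avoid brute-force substitution when evaluating $m(\tilde h^b)$ by exploiting the stationarity identity $m_*^b=N'/D'$, whereas the paper simply says to plug $\tilde h^b$ into $m(h)$.
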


Note that if $b=1$, then 

Equation~\eqref{eq:vfewdfwaefvawvgfeefewafa} suggests that as long as the condition $\tilde \stepsize^b \leq \frac1L$ holds, $m_*^b$ is decreasing at a rate  faster than $1/b$. Hence, we can  find the solution with less overall work when using a minibatch of size $b$ than when using a minibatch of size $1$.

}

\subsection{Convergence rate}

\add{
In this section we study the total workload of mS2GD in the regime of small mini-batch sizes.

\begin{cor}\label{thm:minibatch}
Fix $\epsilon\in (0,1)$, choose the number of outer iterations equal to \[ k = \left\lceil \log(1 / \epsilon) \right\rceil,\]  and fix the target decrease in Theorem~\ref{thm:optimalM} to satisfy $\rho = \epsilon^{1 / k}$. Further, pick a mini-batch size satisfying $1 \leq b \leq 29$,  let the stepsize $h$ be as  in~\eqref{eqn:stepsizehb} and let $m$ be as in \eqref{eqn:maxiter}. Then in order for mS2GD to find $x_k$ satisfying \eqref{eq:epsilonaccuracy},  mS2GD needs at most
\begin{equation}\label{eqn:complexity}
(n + 2b m^b) \lceil \log(1 / \epsilon) \rceil
\end{equation}
units of work, where $bm^b = {\cal O}(\kappa)$, which leads to the overall complexity of
$$ \mathcal{O} \left( (n + \kappa) \log(1 / \epsilon) \right)$$
units of work.
\end{cor}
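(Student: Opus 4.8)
The plan is to treat Corollary~\ref{thm:minibatch} as a direct specialization of Theorem~\ref{thm:optimalM} to the small-minibatch regime, where the dominant task is to verify that $b m^b = {\cal O}(\kappa)$ uniformly over $1 \le b \le 29$. First I would dispose of the outer-loop count: with $k = \lceil \log(1/\epsilon) \rceil$ and $\rho = \epsilon^{1/k}$, by the conclusion of Theorem~\ref{s2convergence} we have $\Exp[\primal(\x_k)-\primal(\x_*)] \le \rho^k [\primal(\x_0)-\primal(\x_*)] = \epsilon [\primal(\x_0)-\primal(\x_*)]$, so \eqref{eq:epsilonaccuracy} holds. Since each epoch costs $n$ units for the full gradient plus $2b$ units for each of the at most $m^b$ inner steps, the total workload is at most $(n + 2 b m^b) k = (n + 2 b m^b)\lceil \log(1/\epsilon)\rceil$, which is exactly \eqref{eqn:complexity}. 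Here $m^b$ and $h^b$ are the optimized quantities from Theorem~\ref{thm:optimalM}, plugged in via \eqref{eqn:maxiter} and \eqref{eqn:stepsizehb}.

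The substance is the estimate $b m^b = {\cal O}(\kappa)$. I would first fix the branch of Theorem~\ref{thm:optimalM} that applies: for small $b$ one expects $\tilde h^b \le 1/L$ to hold (the extra $1/(4\mu\alpha(b)L)$ under the square root is small because $\alpha(b)$ is not tiny for small $b$), so the relevant formula is \eqref{eq:vfewdfwaefvawvgfeefewafa}. Multiplying by $b$ gives
\begin{equation*}
b m_*^b = \tfrac{2\kappa}{\rho}\left\{ \left(1+\tfrac1\rho\right) 4\, b\,\alpha(b) + b\sqrt{\tfrac{4\alpha(b)}{\kappa} + \left(1+\tfrac1\rho\right)^2 [4\alpha(b)]^2}\right\}.
\end{equation*}
The key observation is that $b\,\alpha(b) = \tfrac{b(n-b)}{b(n-1)} = \tfrac{n-b}{n-1} \le 1$ is bounded above by the constant $1$ for every $b$, so the first term inside the braces is ${\cal O}(\kappa)$ with an absolute constant. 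For the square-root term I would factor out $\alpha(b)$ and bound $b\sqrt{\alpha(b)} = \sqrt{b}\sqrt{b\,\alpha(b)} \le \sqrt{b} \le \sqrt{29}$, which is where the explicit cutoff $b \le 29$ enters: it keeps $\sqrt{b}$ a harmless constant while guaranteeing the branch condition. Combining, $b m_*^b \le C\kappa$ for an absolute constant $C$ depending only on $\rho$ (which is itself ${\cal O}(1)$ since $\rho = \epsilon^{1/k} \in (0,1)$ is bounded away from $0$ by the choice of $k$), giving $b m^b = {\cal O}(\kappa)$ and hence the overall complexity ${\cal O}((n+\kappa)\log(1/\epsilon))$ after substituting into \eqref{eqn:complexity}.

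The main obstacle I anticipate is making the constant genuinely uniform and confirming the branch selection across the whole range $1 \le b \le 29$. The cutoff $29$ is presumably chosen precisely so that $\tilde h^b \le 1/L$ is guaranteed (otherwise one would fall into the \eqref{eq:fasfawefwafewa} branch, whose denominator $\rho - 4\alpha(b)(1+\rho)$ could become nonpositive and invalidate the bound); I would need to check that the threshold condition of Theorem~\ref{thm:optimalM} is actually satisfied there, which ties the specific value $29$ to the relationship between $\rho$, $\alpha(b)$, and $\kappa$. I would also need to confirm that $\rho = \epsilon^{1/k}$ with $k = \lceil \log(1/\epsilon)\rceil$ yields a $\rho$ bounded away from both $0$ and $1$ by universal constants, so that the factors $\tfrac{1}{\rho}$ and $(1+\tfrac1\rho)$ do not secretly hide an $\epsilon$-dependence; this follows because $1/k \approx 1/\log(1/\epsilon)$ forces $\epsilon^{1/k} \to 1/e$, but writing down the clean two-sided bound is the one genuinely delicate estimate in the argument.
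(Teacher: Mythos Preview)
Your proposal is correct and follows essentially the same route as the paper: verify $\rho^k\le\epsilon$, count $(n+2bm^b)$ units per epoch, and bound $bm^b={\cal O}(\kappa)$ using the identity $b\,\alpha(b)=(n-b)/(n-1)\le 1$ inside formula~\eqref{eq:vfewdfwaefvawvgfeefewafa}. The paper resolves your two flagged obstacles in the way you anticipate but more cleanly than you fear: it shows the branch condition $\tilde h^b\le 1/L$ is equivalent to $b<b_0$ and computes $b_0\approx 8(e{+}1)\approx 29.75$ (whence the cutoff $29$); and it sidesteps any two-sided estimate on $\epsilon^{1/k}$ by noting that the referenced formulas \eqref{eqn:maxiter}--\eqref{eqn:stepsizehb} already hard-code $1/\rho=e$, so the achieved rate is exactly $1/e$ and only the one-sided bound $e^{-k}\le\epsilon$ is needed.
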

\begin{proof}
Available in Appendix \ref{thm:proof3}.
\end{proof}

This result shows that as long as the mini-batch size is small enough, the total work performed by mS2GD is the same as in the  $b=1$ case. If the $b$ updates can be performed in parallel, then this leads to linear speedup. 
}

 \begin{figure*}[htbp]
 \centering
  \includegraphics[width=5.5cm,height=4.5cm]{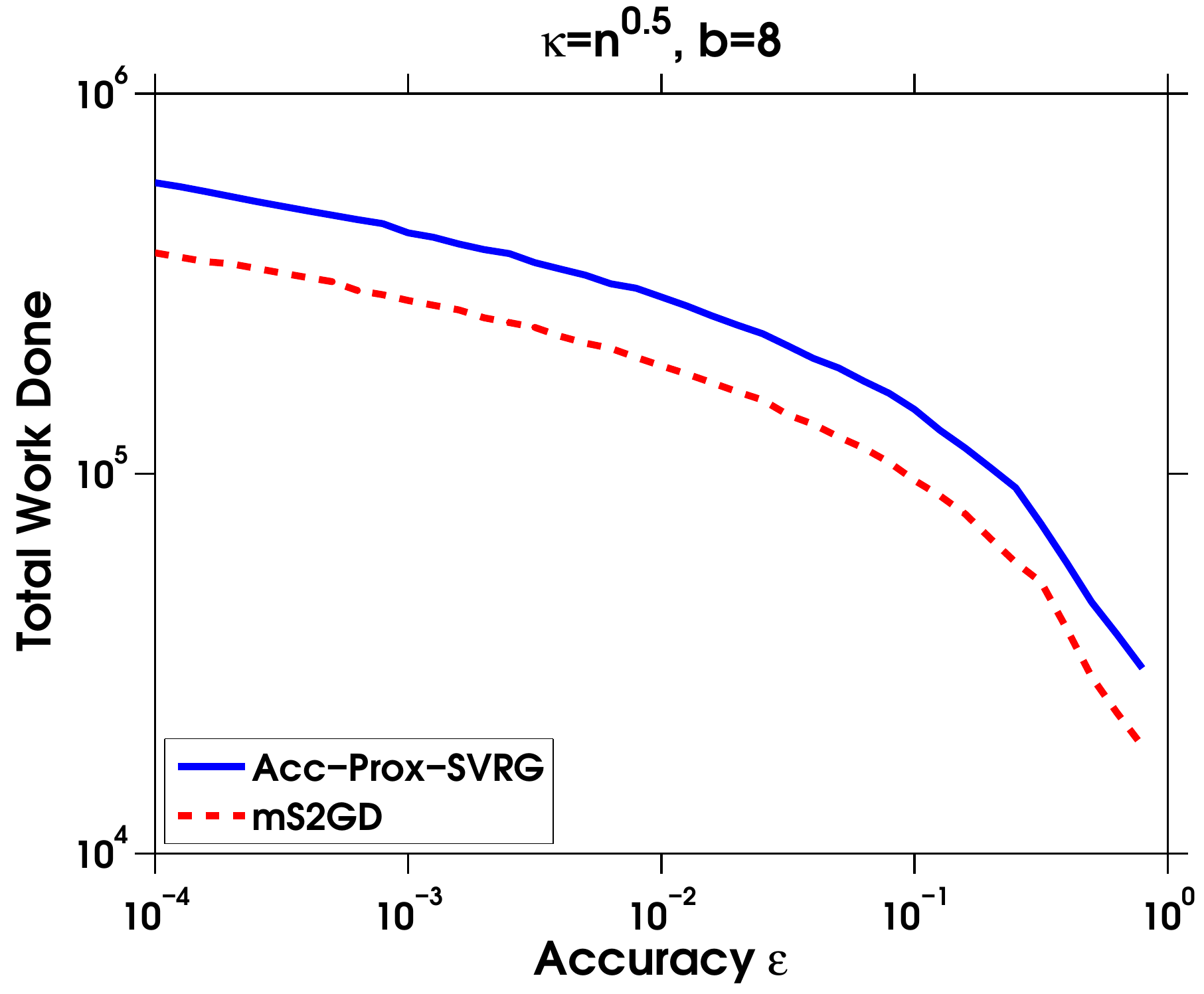}
  \includegraphics[width=5.5cm,height=4.5cm]{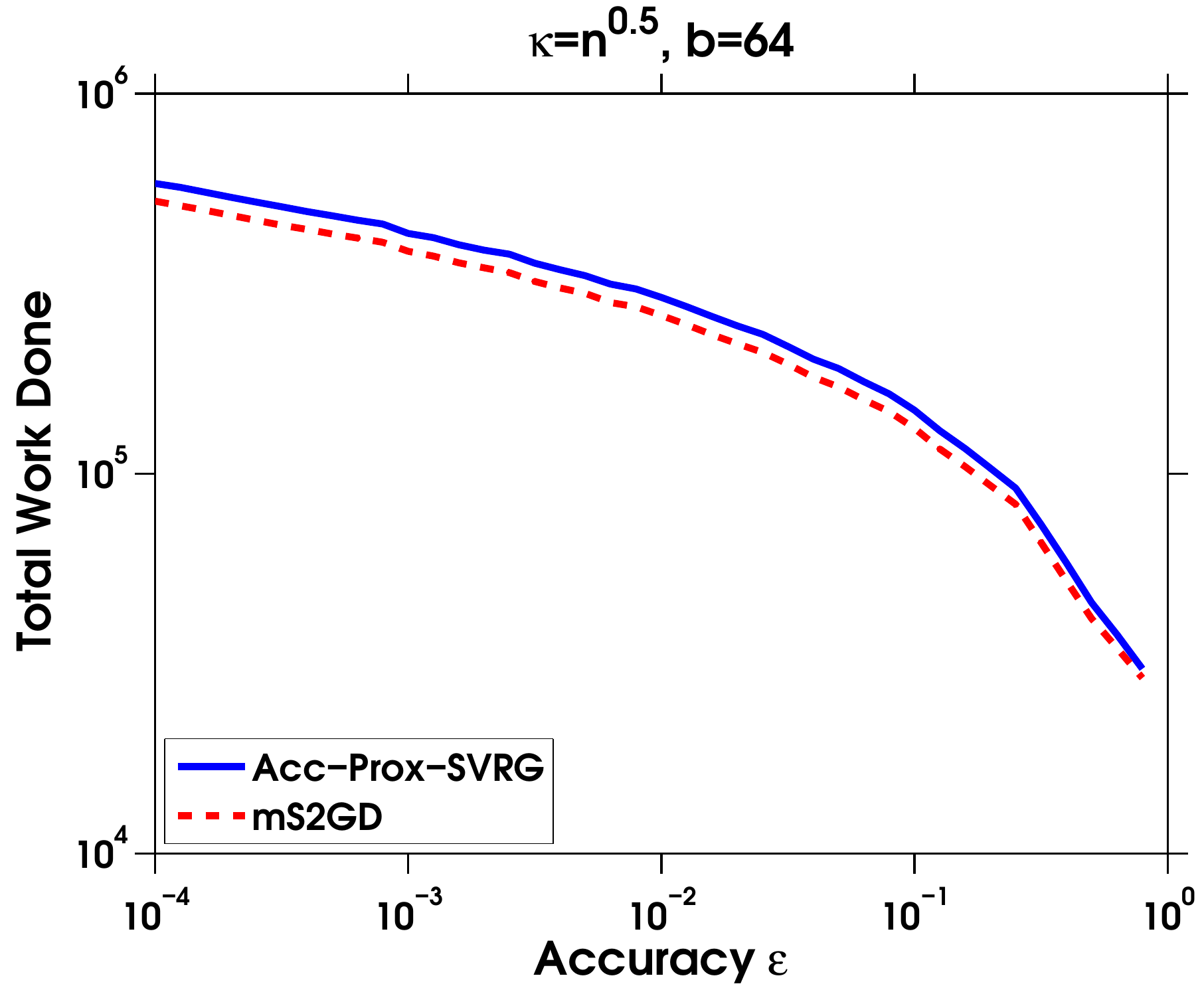}
  \includegraphics[width=5.5cm,height=4.5cm]{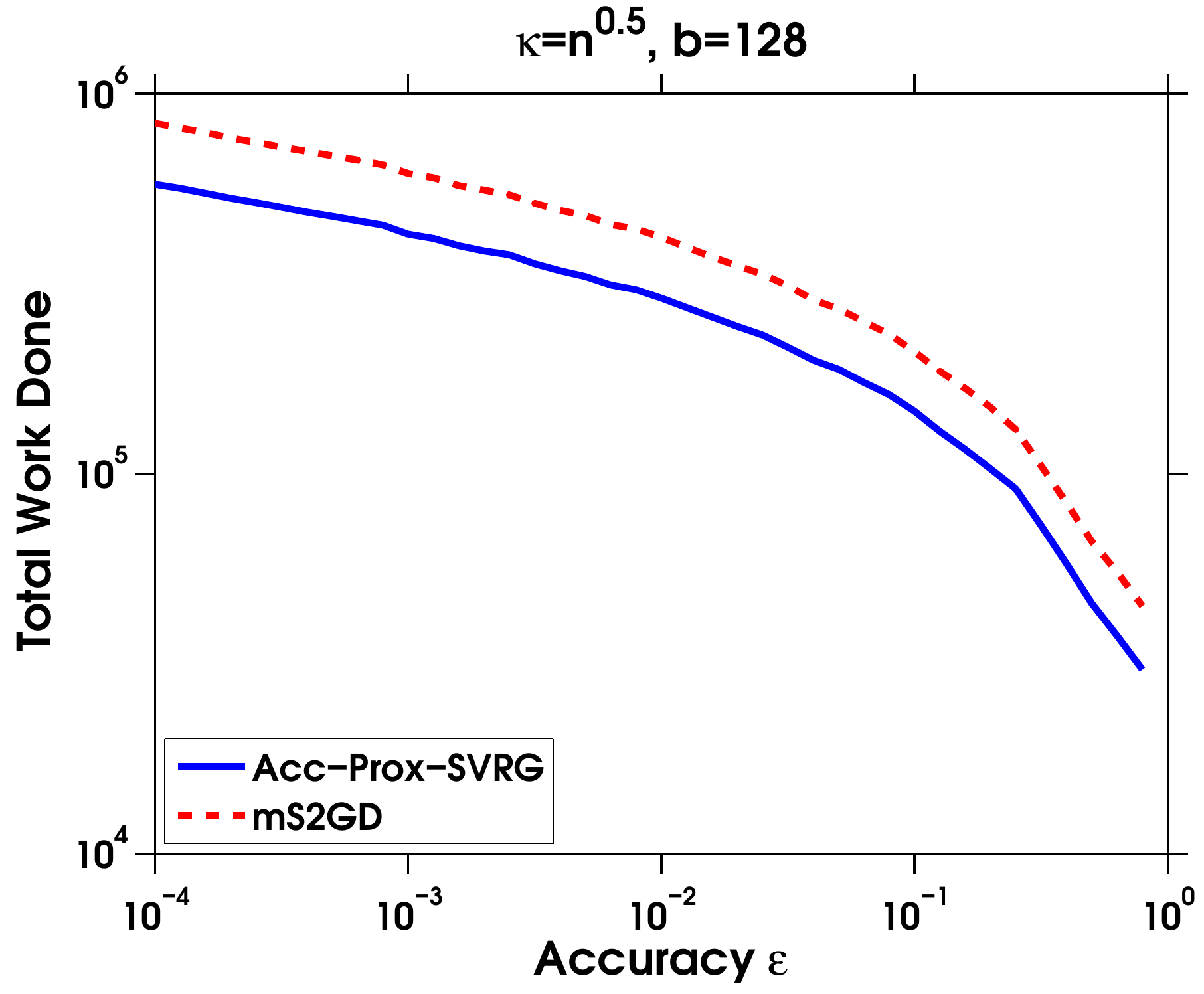}
  \includegraphics[width=5.5cm,height=4.5cm]{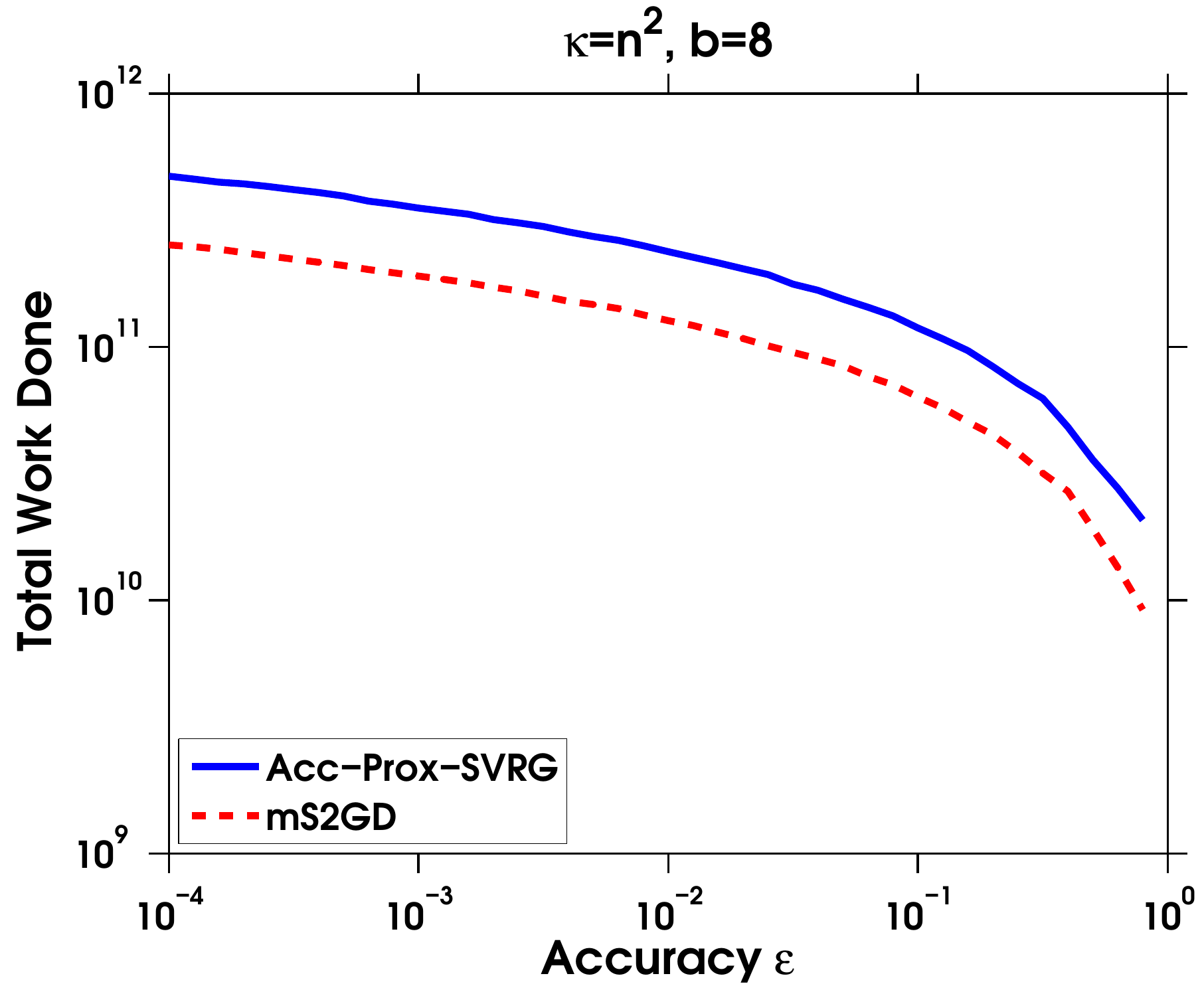}
  \includegraphics[width=5.5cm,height=4.5cm]{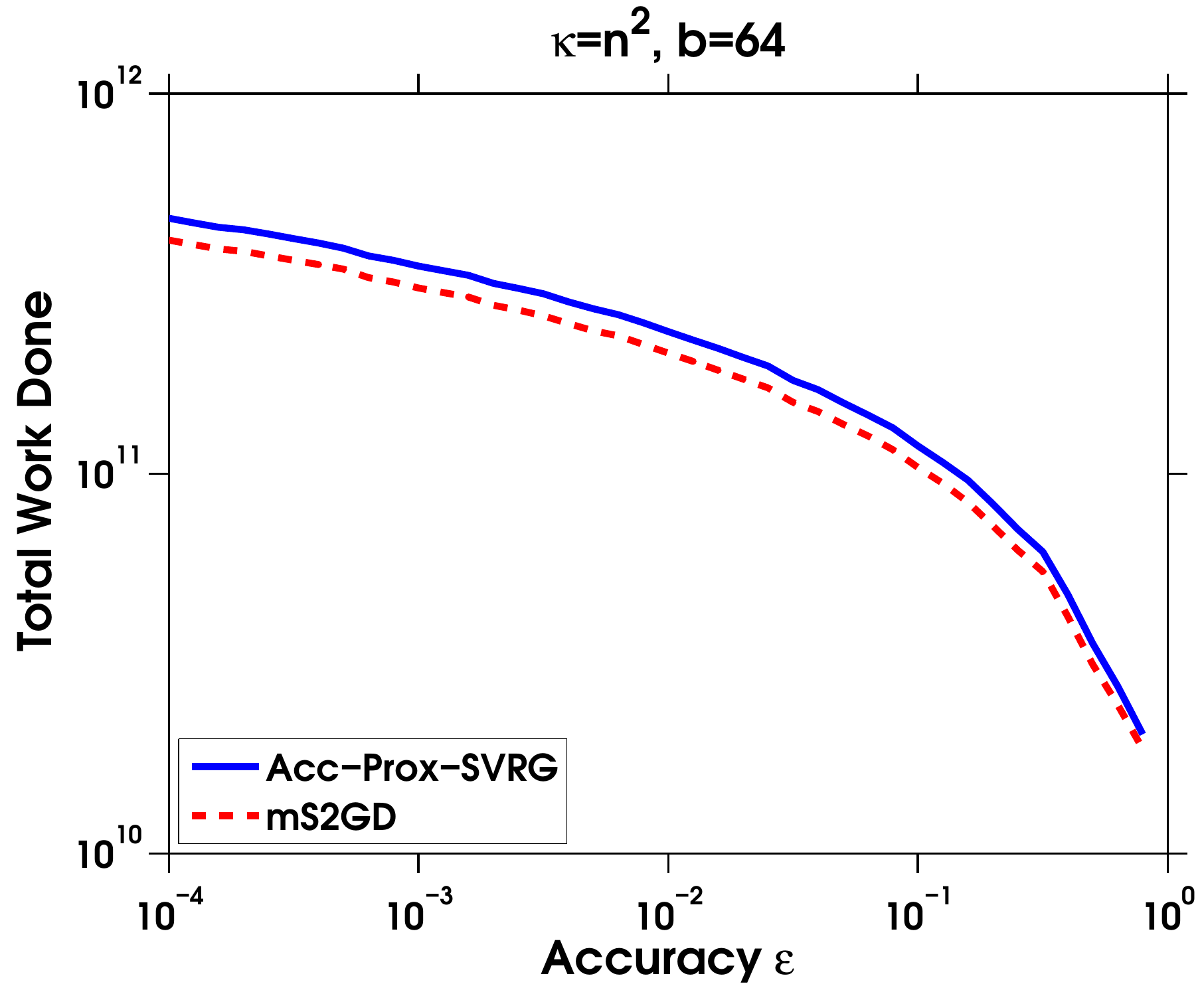}
  \includegraphics[width=5.5cm,height=4.5cm]{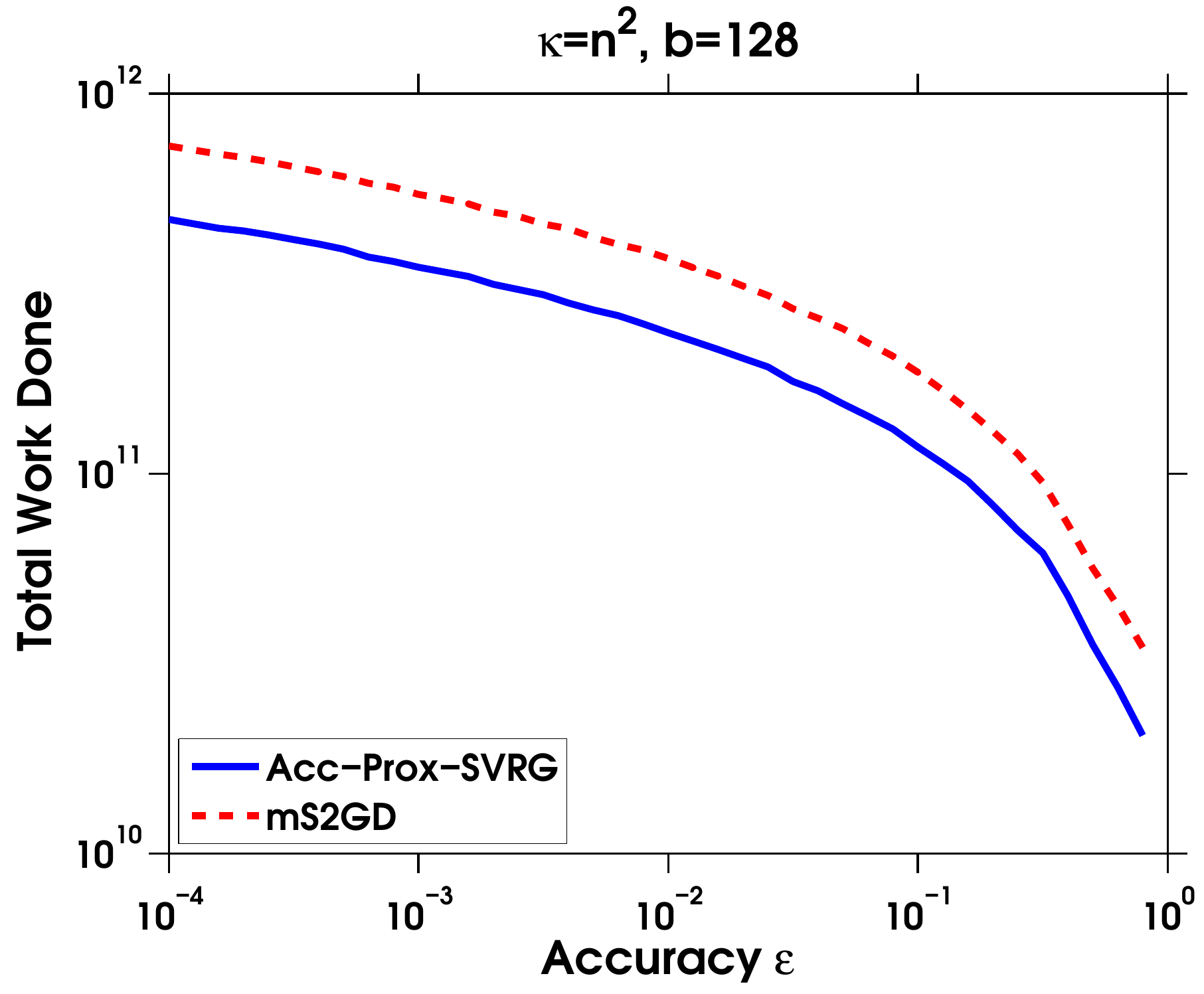}
 \caption{\footnotesize Complexity of Acc-Prox-SVRG and mS2GD in terms of total work done \add{for  $n=10,000$, and small ($\kappa=\sqrt{n}$; top row) and large ($\kappa = n^2$; bottom row) condition number.}}
 \label{figure:acc-prox-svrg}
 \end{figure*}

\subsection{Comparison with Acc-Prox-SVRG}
\add{
The Acc-Prox-SVRG~\cite{nitanda} method of Nitanda, which  was not available online before the first version of this paper appeared  on arXiv,  incorporates both a mini-batch scheme and Nesterov's acceleration~\cite{nesterov2004convex, nesterov2007acc}. 
The author claims that when $b< \lceil b_0 \rceil$, with the threshold $b_0$ defined as $\frac{8\sqrt{\kappa}n}{\sqrt{2} p(n-1) +8\sqrt{\kappa}}$, the overall complexity of the method is 
\[\mathcal{O} \left( 
\left( 
n + \tfrac{n-b}{n-1}\kappa 
\right) 
\log(1/\epsilon) 
\right);\]
and otherwise it is
\[\mathcal{O} \left( 
\left( 
n + b\sqrt{\kappa} 
\right) 
\log(1/\epsilon)  
\right).\]
 This suggests that acceleration will only be realized when the mini-batch size is large, while for small $b$, Acc-Prox-SVRG achieves the same overall complexity, $\mathcal{O} \left((n+\kappa)\log(1/\epsilon) \right)$, as mS2GD.
 
We will now take a closer look at  the theoretical results given by Acc-Prox-SVRG and mS2GD, for each $\epsilon\in(0,1)$. In particular, we shall numerically minimize the total work of mS2GD, i.e.,
\[\left(
 n+2b\lceil m^b\rceil
 \right)
 \left\lceil
\log(1/\epsilon) / \log(1/\rho)
  \right\rceil,\]
  over $\rho\in(0,1)$ and $h$ (compare this with \eqref{eqn:complexity}); and compare these results with similar fine-tuned quantities for Acc-Prox-SVRG.\footnote{$m^b$ is the best choice of $m$ for Acc-Prox-SVRG and mS2GD, respectively. Meanwhile, $h$ is within the safe upper bounds for both methods.}

Fig.~\ref{figure:acc-prox-svrg} illustrates these theoretical complexity bounds  for both ill-conditioned and well-conditioned data. With small-enough mini-batch size $b$, mS2GD is better than Acc-Prox-SVRG. However, for a large mini-batch size $b$, the situation reverses because of the acceleration inherent in Acc-Prox-SVRG.\footnote{We have experimented with different values for $n, b$ and $\kappa$, and this result always holds.} Plots with $b=64$ illustrate the cases where we cannot observe any differences between the methods.

Note however that accelerated methods are very prone to error accumulation. Moreover, it is not clear that an efficient implementation of Acc-Prox-SVRG is possible for sparse data.  As shall show  in the next section, mS2GD allows for such an implementation.
}


\section{Efficient implementation for sparse data}\label{sec:implementation_sparse}


\add{
Let us make the following assumption about the structure of functions $f_i$ in \eqref{Px2}.
\begin{assumption}
The functions $f_i$ arise as the composition of a univariate smooth function $\phi_i$ and an inner product with a datapoint/example $a_i \in \R^d$:
$f_i(x)  = \phi(a_i^Tx)$  for $i=1,\dots,n$.
 \label{asm:structure}
\end{assumption}
Many functions of common practical interest satisfy this assumption including linear and logistic regression. 
Very often, especially for large scale datasets, the data are extremely sparse, i.e.
the vectors $\{a_i\}$ contains many zeros.
Let us denote the number of non-zero coordinates of $a_i$ by $\omega_i = \|a_i\|_0 \leq d $
and the set of indexes corresponding to  non-zero coordinates by
$\support(a_i)=\{j : a_i^{j} \neq 0\} $,
where $a_i^{j}$ denotes the $j^{th}$ coordinate of vector $a_i$.
\begin{assumption} \label{asm:structure2}
The regularization function $R$ is separable.
\end{assumption}
This includes the most commonly used  regularization functions as 
$\frac{\lambda}{2} \|x\|^2$
or $\lambda \|x\|_1$.
}

Let us take a brief detour and look at the classical SGD algorithm
with $R = 0$. The update 
would be of the form
\begin{equation}
  x_{j+1} \gets x_{j}
   -\stepsize \phi'_i(a_i^T x_j) a_i = x_{j} - \stepsize \nabla f_i(x_j). 
   \label{adsfawlfcawefcava}
\end{equation}
If evaluation of the univariate function $\phi'_i$  takes $O(1)$ amount of work, the computation of
$\nabla f_i$ will account for $O(\omega_i)$ work.
Then the update 
\eqref{adsfawlfcawefcava}
would cost $O(\omega_i)$ too, which implies that the classical SGD method can naturally benefit from sparsity of data.

Now, let us get back to the Algorithm~\ref{alg:mS2GD}.
Even under the sparsity assumption and structural Assumption \ref{asm:structure}
the Algorithm~\ref{alg:mS2GD} suggests that each inner iteration will cost $O(\omega + d ) \sim O(d)$
because $g_k$ is in general fully dense and hence in 
Step \ref{step:9} of Algorithm~\ref{alg:mS2GD}
we have to update all $d$ coordinates.

However, in this Section, we will 
introduce and describe the implementation trick 
which is based on ``lazy/delayed'' updates.
The main idea of this trick is not to perform Step  \ref{step:9}
of Algorithm~\ref{alg:mS2GD}
for all coordinates, but only for 
coordinates $j \in \cup_{i \in A_{kt}} \support(a_i)$.
The algorithm is described in  Algorithm 
\ref{alg:mS2GDsparse}. 
\begin{algorithm}[H]
\caption{"Lazy" updates for mS2GD \\
(these replace steps 6--10 in Algorithm ~\ref{alg:mS2GD})}
\label{alg:mS2GDsparse}
\begin{algorithmic}[1]
    \State $\chi^{(j)} \gets 0$ for $j=1,2,\dots,d$
    \For {$t=0$ to $t_k-1$}
	\State Choose mini-batch $A_{kt}\subseteq \setn$ of size $b$, \newline {\color{white}.} \qquad \qquad uniformly at random
        \For {$i\in A_{kt}$} 
            \For {$j\in \support(a_i)$}
            \label{asdfasfsafdsafa}
                \State $y_{k,t}^{j} \gets\prox^{t-\vc{\chi}{j}}  [ y_{k,{\chi^{j} }}^{j}, g_k^{j},  R, h]$
                \State $\vc{\chi}{j} \gets t$
            \EndFor
            \label{asdfasfsafdsafa2}
        \EndFor 
        \State $\y_{k,t+1} \gets \y_{k,t} -\frac{\stepsize}{b}\sum_{i\in A_{kt}}a_i( \phi'_{i}(\y_{k,t}^Ta_i) -  \phi'_{i}(\x_{k}^Ta_i))$ 
     \EndFor
        \For {$j = 1$ to $d$} 
        \label{asdfsafdasfa}
            \State $\vc{y_{k,t_k}}{j} \gets \prox^{t_k-\vc{\chi}{j}}
            [\vc{y_{k,\vc{\chi}{j}}}{j}, \vc{g_k}{j}, R, h]$
        \EndFor 
\label{asdfsafdasfa2}        
\end{algorithmic}
\end{algorithm}
To explain the main idea behind the lazy/delayed updates,
consider that it happened that
during the fist $\tau$ iterations, 
the value of the fist coordinate in all datapoints which we have used was
0.
Then given the values of $\vc{y_{k,0}}{1}$
and $\vc{g_k}{1}$ we can compute the  true value
of  $\vc{y_{k,t}}{1}$ easily.
We just need to apply the $\prox$ operator $\tau$ times, i.e. 
$
\vc{y_{k,\tau}}{1}
 = \prox^\tau_1 [ y_{k,0} , g_k ,R,h],
$
where 
the function 
$\prox^\tau_1$ is described in Algorithm \ref{asdfsafdsaf}.
\begin{algorithm}
\caption{$\prox^{\tau}_j[y,g,R,h]$}
\label{asdfsafdsaf}
\begin{algorithmic}
\State $\tilde y_0 = y$
\For{
    $s=1,2,\dots, \tau$}
\State $\tilde y_s
  \gets 
  \prox_{hR}( \tilde y_{s-1}  
   - h 
   g  )$
  \EndFor
\State {\bf return}   $\vc{\tilde y_\tau}{j}$
\end{algorithmic}
\end{algorithm}

The vector $\chi$ in Algorithm \ref{alg:mS2GDsparse}
is enabling us to keep track of the iteration when corresponding coordinate of $y$ was updated for the last time.
E.g. if in iteration $t$ we will be updating the $1^{st}$ coordinate for the first time, $\vc{\chi}{1}=0$
and after we compute and update the true value of $\vc{y}{1}$, its value will be set to 
$\vc{\chi}{1}=t$.
Lines \ref{asdfasfsafdsafa}-\ref{asdfasfsafdsafa2}
in 
 Algorithm \ref{alg:mS2GDsparse}
 make sure 
 that the coordinates of $y_{k,t}$ which will be read and used afterwards
 are up-to-date. 
At the end of the inner loop, we
 will updates all coordinates of $y$ to the most recent value (lines \ref{asdfsafdasfa}-\ref{asdfsafdasfa2}). Therefore, those lines make sure that the $y_{k,t_k}$ of Algorithms \ref{alg:mS2GD} and \ref{alg:mS2GDsparse} will be the same.

However, one could claim that we are not saving any work, as when needed, we still have to compute the proximal operator many times. Although this can be true for a general function $R$, for particular cases,
$R(x) = \frac{\lambda}{2} \|x\|^2$
and $R(x) = \lambda \|x\|_1^2$, we provide following Lemmas which give a closed form expressions for the $\prox^\tau_j$ operator. 

\begin{figure*}[!htbp]
   \centering
    \includegraphics[width=0.40\textwidth]{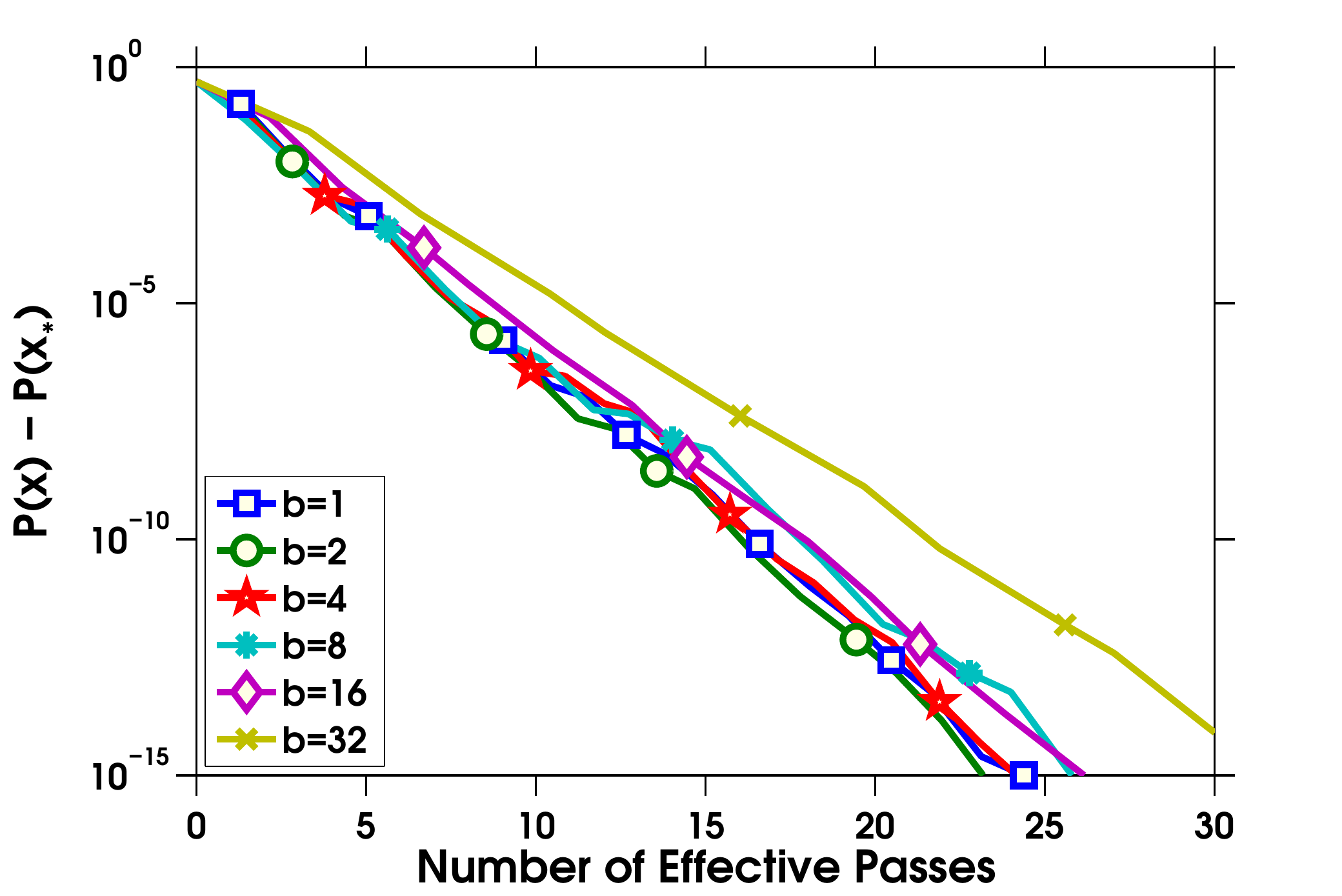}
    \includegraphics[width=0.40\textwidth]{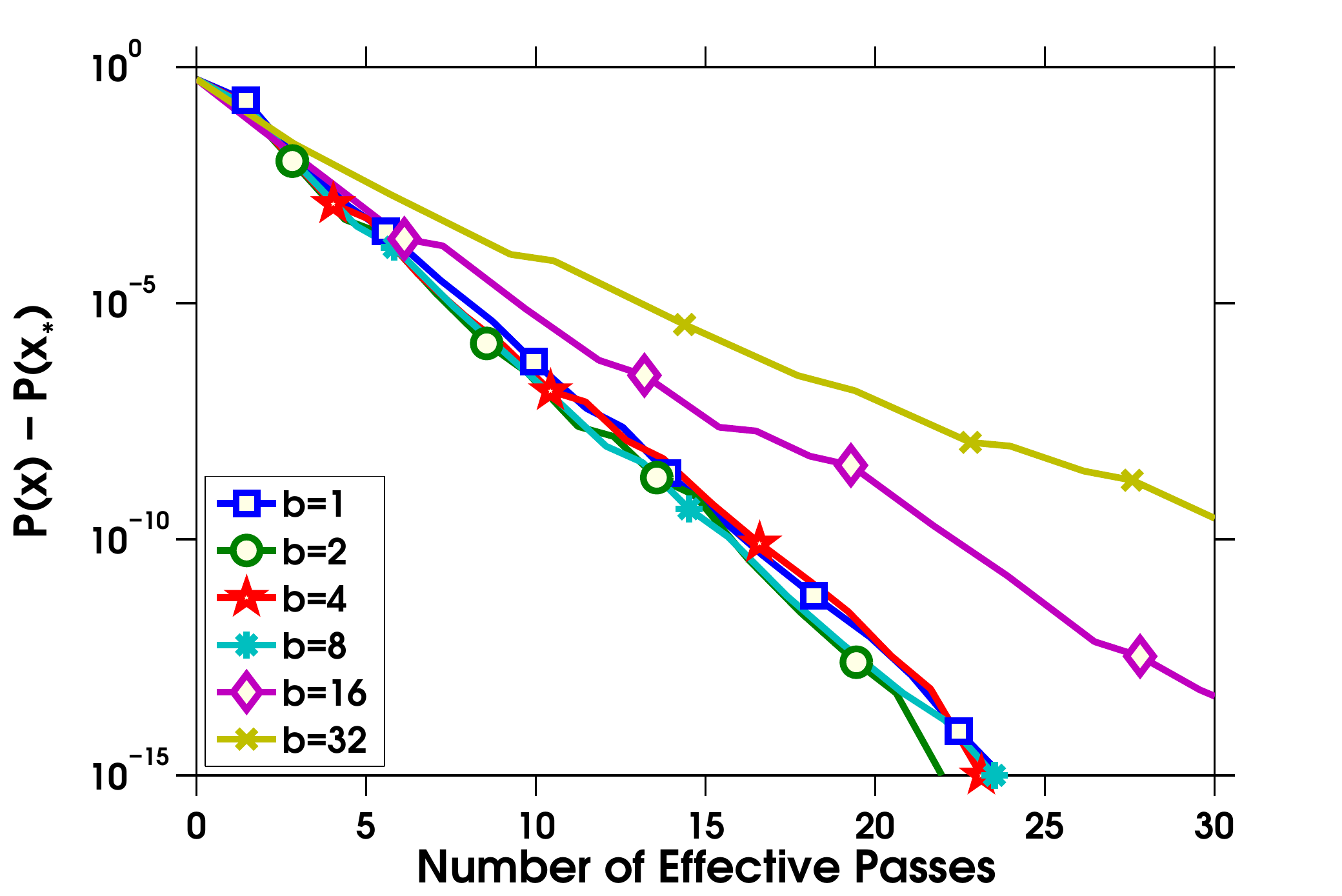}
    \caption{\footnotesize Comparison of mS2GD with different mini-batch sizes on \emph{rcv1} (left) and \emph{astro-ph} (right).}
  \label{fig:MBSpeedup} 
\end{figure*}

\begin{figure*}[!htbp]
    \centering
    \includegraphics[width=0.40\textwidth]{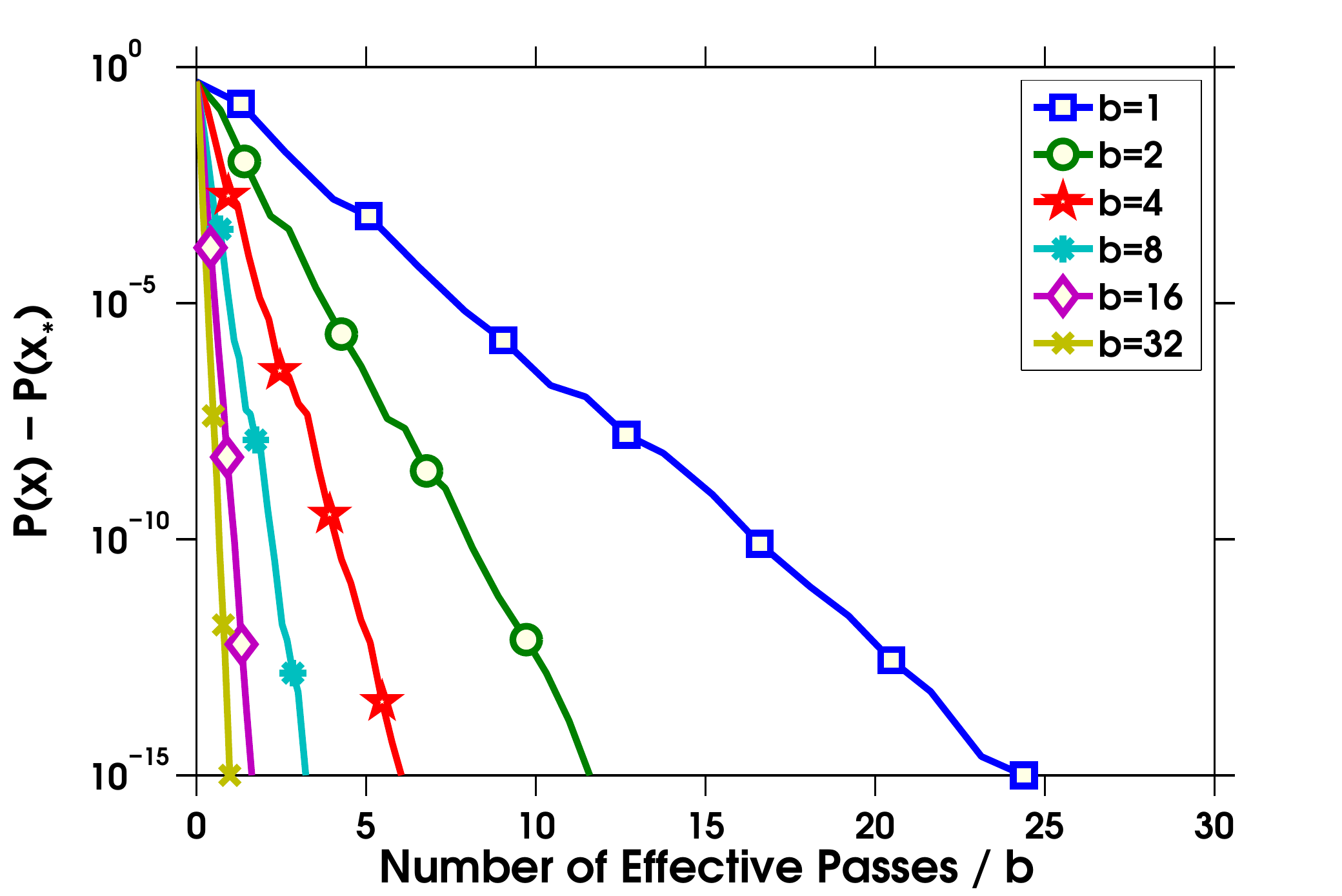}
    \includegraphics[width=0.40\textwidth]{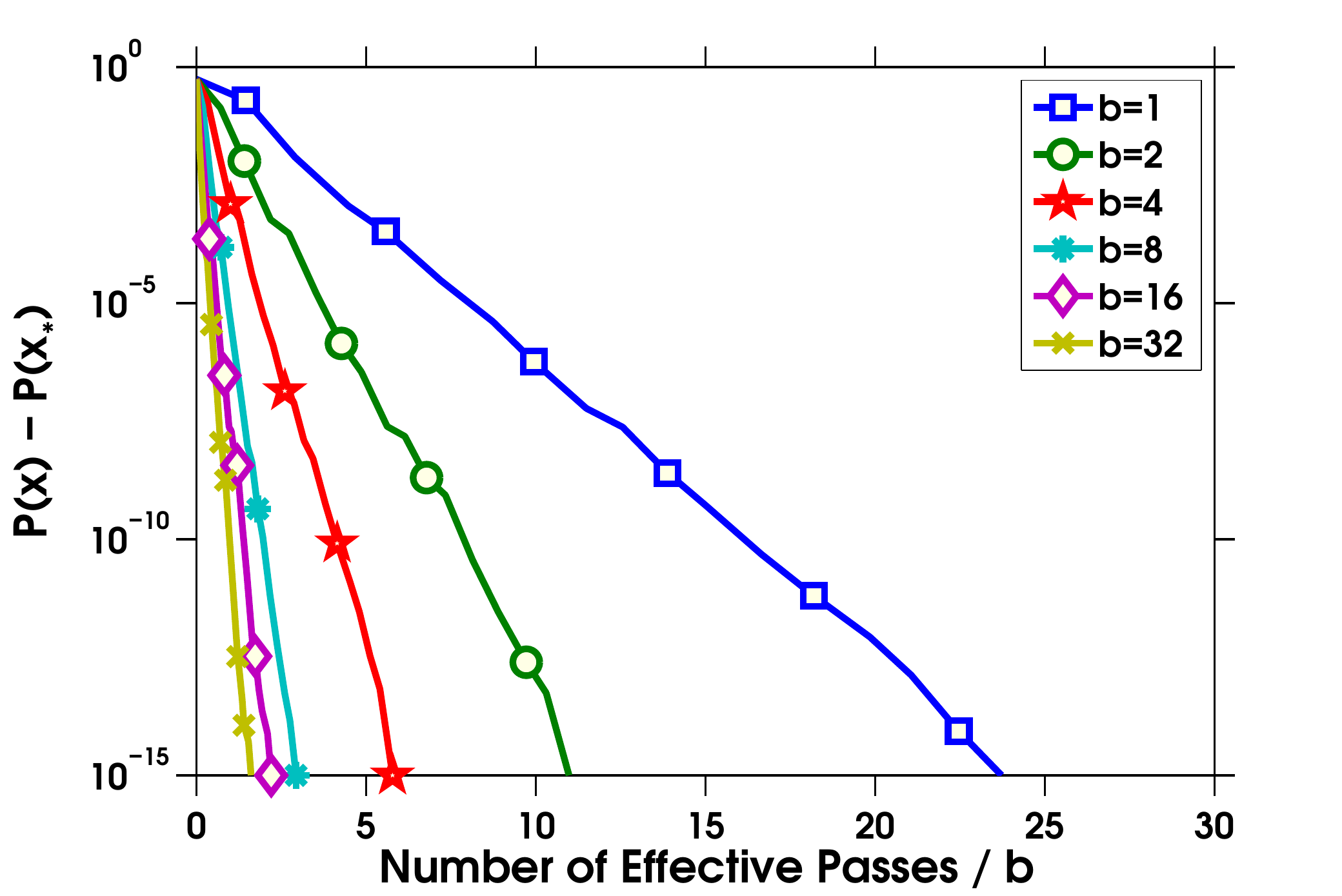}
  \caption{\footnotesize Parallelism speedup   for \emph{rcv1} (left) and \emph{astro-ph} (right) in theory (unachievable in practice).}
  \label{fig:MBSpeedup with parallelism} 
\end{figure*}

\begin{lemma}[Proximal Lazy Updates with $\ell_2$-Regularizer]\label{lemma:L2regularizer}
If $R(x) = \frac{\lambda}{2} \|x\|^2$
with $\lambda > 0$ then
\begin{align*}
\prox_j^\tau[ y, g, R, h]
= \beta^{\tau}  y^j - \tfrac{\stepsize\beta}{1-\beta}
                \left(1 - \beta^{\tau} \right)  \vc{g}{j} ,
\end{align*}
where  $\beta \eqdef 1/(1+\lambda \stepsize)$.
\end{lemma}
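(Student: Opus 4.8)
The plan is to first compute $\prox_{hR}$ in closed form for the quadratic regularizer, then exploit separability to reduce the $\tau$-fold iteration of Algorithm~\ref{asdfsafdsaf} to a scalar linear recurrence, and finally solve that recurrence via a geometric series.

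First I would compute a single proximal step. For $R(x)=\tfrac{\lambda}{2}\|x\|^2$, the defining minimization $\prox_{hR}(z)=\arg\min_x\{\tfrac12\|x-z\|^2+\tfrac{h\lambda}{2}\|x\|^2\}$ has first-order optimality condition $(x-z)+h\lambda x=0$, giving $x=z/(1+\lambda h)=\beta z$ with $\beta\eqdef 1/(1+\lambda h)$ exactly as in the statement. Hence one proximal gradient step with fixed gradient $g$ acts as the affine contraction $z\mapsto\beta(z-hg)$. Since $R$ is separable across coordinates, $\prox_{hR}$ acts coordinatewise and the iteration in Algorithm~\ref{asdfsafdsaf} decouples. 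Fixing coordinate $j$ and writing $u_s\eqdef \tilde y_s^{(j)}$ and (abusing notation) $g\eqdef g^{(j)}$, the algorithm becomes the scalar recurrence $u_s=\beta(u_{s-1}-hg)$ with initial value $u_0=y^{(j)}$, and the claim of the lemma is precisely that $u_\tau=\beta^\tau y^{(j)}-\tfrac{h\beta}{1-\beta}(1-\beta^\tau)g^{(j)}$.

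Finally I would solve the recurrence by unrolling: $u_\tau=\beta^\tau u_0-hg\sum_{s=1}^{\tau}\beta^s$. Summing the geometric series $\sum_{s=1}^{\tau}\beta^s=\tfrac{\beta(1-\beta^\tau)}{1-\beta}$, which is valid because $0<\beta<1$ for $\lambda,h>0$, yields the stated expression. Equivalently, one may identify the fixed point $u^\star=-h\beta g/(1-\beta)$ of the map and write $u_\tau=\beta^\tau(u_0-u^\star)+u^\star$, which expands to the same formula. There is essentially no obstacle here; the only points requiring care are getting the single-step contraction factor $\beta$ correct and indexing the geometric sum properly (it runs from $s=1$ to $\tau$, not from $s=0$), which is what produces the prefactor $\beta/(1-\beta)$ rather than $1/(1-\beta)$ in the final expression.
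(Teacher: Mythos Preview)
Your proof is correct and follows essentially the same approach as the paper: identify $\prox_{hR}(z)=\beta z$ with $\beta=1/(1+\lambda h)$, unroll the recurrence $\tilde y_s=\beta(\tilde y_{s-1}-hg)$, and sum the resulting geometric series $\sum_{s=1}^\tau\beta^s$. The paper's proof is just a two-line version of what you wrote, without the explicit mention of separability or the fixed-point viewpoint.
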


\begin{lemma}[Proximal Lazy Updates with \add{$\ell_1$-}Regularizer]\label{lemma:L1regularizer}
Assume that $R(x) =  \lambda  \|x\|_1 $
with $\lambda > 0$.
Let us define 
$M$ and $m$ as follows,
\begin{equation*}
M=[\lambda+ \vc{g}{j}]h,  \qquad m = -[\lambda - \vc{g}{j}]h,
\end{equation*}
and let $[\cdot]_+ \eqdef \max\{\cdot, 0\}$.
Then the value of
$\prox_j^\tau[ y, g, R, h]$
can be expressed based on one of the 3 situations described below:
\begin{enumerate}
\item If $\vc{g}{j} \geq \lambda$, then by letting $p \eqdef \left\lfloor \frac{\vc{y}{j}}{M}\right\rfloor$, the operator can be defined as
\begin{flalign*}
&\prox_j^\tau[ y, g, R, h] 
\\&=\begin{cases}
\vc{y}{j} - \tau M, &\text{if }p\geq\tau,\\
\min\{y^j - [p]_+ M, m\} -(\tau - [p]_+)m, &\text{if }p<\tau.
\end{cases} 
\end{flalign*}

\item If $-\lambda < \vc{g}{j} < \lambda$, then the operator can be defined as
\begin{flalign*}
\prox_j^\tau[ y, g, R, h] &
&=\begin{cases}
\max\{ \vc{y}{j} - \tau M, 0\}, & \text{if } \vc{y}{j}\geq 0,\\
\min\{ \vc{y}{j} - \tau m, 0\}, & \text{if }\vc{y}{j}< 0.
\end{cases}
\end{flalign*}

\item If $\vc{g}{j}  \leq -\lambda$, then by letting $q \eqdef \left\lfloor\frac{\vc{y}{j}}{m}\right\rfloor$, the operator can be defined as
\begin{flalign*}
&\prox_j^\tau[ y, g, R, h]  
\\&=\begin{cases}
\vc{y}{j} - \tau m, &\text{if }q\geq\tau,\\
 \max\{y^j - [q]_+ m, M\} -(\tau-[q]_+)M, &\text{if }q<\tau.
\end{cases} 
\end{flalign*}
\end{enumerate}

\end{lemma}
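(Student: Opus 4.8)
The plan is to reduce the lemma to a one-dimensional recursion and then analyze the resulting piecewise-linear dynamical system by a case analysis. First I would invoke the separability of $R$ (Assumption~\ref{asm:structure2}): since $R(x)=\lambda\|x\|_1 = \lambda\sum_j |x^{(j)}|$, the proximal map $\prox_{hR}$ decouples across coordinates, so the operator $\prox_j^\tau[y,g,R,h]$ of Algorithm~\ref{asdfsafdsaf} depends only on the scalars $y^{(j)}$ and $g^{(j)}$. Writing $u_0 = y^{(j)}$, $g = g^{(j)}$, and $u_s = \prox_{h\lambda|\cdot|}(u_{s-1} - hg)$, the quantity to compute is $u_\tau$. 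Using the closed form of the scalar soft-thresholding operator, $\prox_{h\lambda|\cdot|}(z) = \sign(z)\max\{|z|-h\lambda,0\}$, I would rewrite the one-step map in terms of $M = [\lambda + g]h$ and $m = -[\lambda - g]h$ as
\[
u \mapsto \begin{cases} u - M, & u > M,\\ 0, & m \leq u \leq M,\\ u - m, & u < m.\end{cases}
\]

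Then I would split into the three regimes appearing in the statement according to the sign of $g$ relative to $\lambda$, which is exactly what fixes the ordering of $M$ and $m$ around $0$: in Case~1 ($g\geq\lambda$) we have $M > m \geq 0$; in Case~2 ($-\lambda < g < \lambda$) we have $M > 0 > m$; in Case~3 ($g \leq -\lambda$) we have $0 \geq M > m$. In each regime the map has a constant ``drift'': for Case~1 the iterate is driven downward (subtract $M$ while above $M$, then subtract $m$ once below), for Case~3 it is driven upward (the mirror image, which I would obtain from Case~1 by the substitution $u\mapsto -u$, $g\mapsto -g$), and in Case~2 the point $0$ is a globally attracting fixed point reached in finitely many steps. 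The core of the argument is to track, by induction on $s$, in which of the three linear pieces $u_s$ lies, and to identify the precise step at which the trajectory crosses from one piece to another.

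The key bookkeeping is the phase transition. In Case~1 (with $y^{(j)}>0$), the map subtracts $M$ as long as $u_{s-1} > M$; the number of such steps is exactly $p = \lfloor y^{(j)}/M\rfloor$, after which $u_p = y^{(j)} - pM \in [0,M)$. From there the iterate either lands in $[m,M)$ and is sent to $0$, or already lies in $[0,m)$; in both situations every subsequent step subtracts $m$. I would verify that the single expression $\min\{y^{(j)} - [p]_+M,\,m\} - (\tau-[p]_+)m$ reproduces $u_\tau$ in both sub-cases at once --- the $\min\{\cdot,m\}$ selecting the correct entry value into the downward $m$-drift --- while $p\geq\tau$ is precisely the regime in which all $\tau$ steps are still subtractions of $M$, giving $y^{(j)} - \tau M$. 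The truncation $[p]_+ = \max\{p,0\}$ handles $y^{(j)} < 0$, where $p < 0$ and the iterate subtracts $m$ from the outset. Case~3 follows by the sign-flip symmetry, with $q = \lfloor y^{(j)}/m\rfloor$ playing the role of $p$ and $\max\{\cdot,M\}$ replacing $\min\{\cdot,m\}$. Case~2 is the simplest: since $0\in(m,M)$, once the iterate enters $[m,M]$ it is pinned to $0$ forever, so $u_\tau = \max\{y^{(j)} - \tau M,0\}$ for $y^{(j)}\geq 0$ and $u_\tau = \min\{y^{(j)} - \tau m, 0\}$ for $y^{(j)}<0$, the clamp encoding the finite arrival time at $0$.

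I expect the main obstacle to be the boundary edge cases in the induction: verifying that $p = \lfloor y^{(j)}/M\rfloor$ counts the subtraction steps correctly when $y^{(j)}/M$ is an integer (so that the last iterate lands exactly on the boundary $u = M$ and is then sent to $0$), and confirming that the compact formulas with $[p]_+$ and $\min/\max$ agree with the trajectory in every combination of sign of $y^{(j)}$ and position of $u_p$ relative to $m$. These are routine but must be checked exhaustively, since an off-by-one error in the floor would silently corrupt the closed form.
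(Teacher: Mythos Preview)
Your proposal is correct and follows essentially the same approach as the paper: both reduce to the scalar soft-thresholding recursion, rewrite the one-step map as the three-piece affine map in terms of $M$ and $m$, split into the three regimes of $g^{(j)}$ relative to $\pm\lambda$, and then track in Case~1 how many steps subtract $M$ before the trajectory enters the $m$-drift (with $p=\lfloor y^{(j)}/M\rfloor$ as the counter), verifying the compact $\min/\max$ and $[p]_+$ formulas against each sub-case. The paper carries out the sub-case enumeration explicitly for both Case~1 and Case~3 rather than invoking the sign-flip symmetry you use, but this is a cosmetic difference; your boundary concerns (integer $y^{(j)}/M$, the $[p]_+$ truncation for $y^{(j)}<0$) are exactly the checks the paper performs.
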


The proofs of Lemmas~\ref{lemma:L2regularizer} and~\ref{lemma:L1regularizer} are available in APPENDIX~\ref{section:proxl}.

{\em Remark:} Upon completion of the paper, we learned that similar ideas of lazy updates were proposed in \cite{langford2009} and \cite{carpenter2008} for online learning and multinomial logistic regression, respectively. However, our method can be seen as a more general result applied to a stochastic gradient method and its variants under Assumptions  \ref{asm:structure} and  \ref{asm:structure2}.

 \begin{figure*}[!htbp]
\centering
 \includegraphics[width=0.40\textwidth]{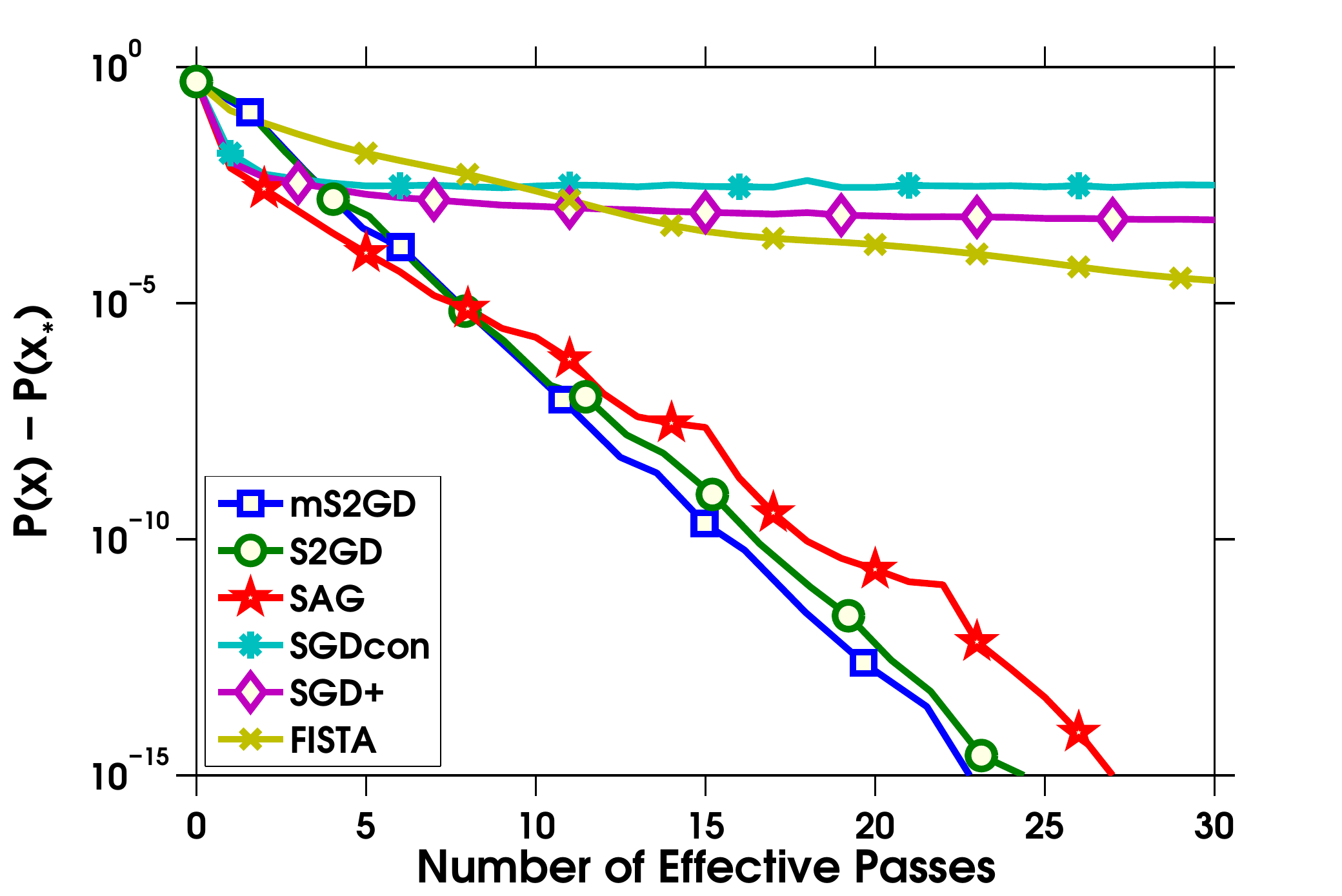}
 \includegraphics[width=0.40\textwidth]{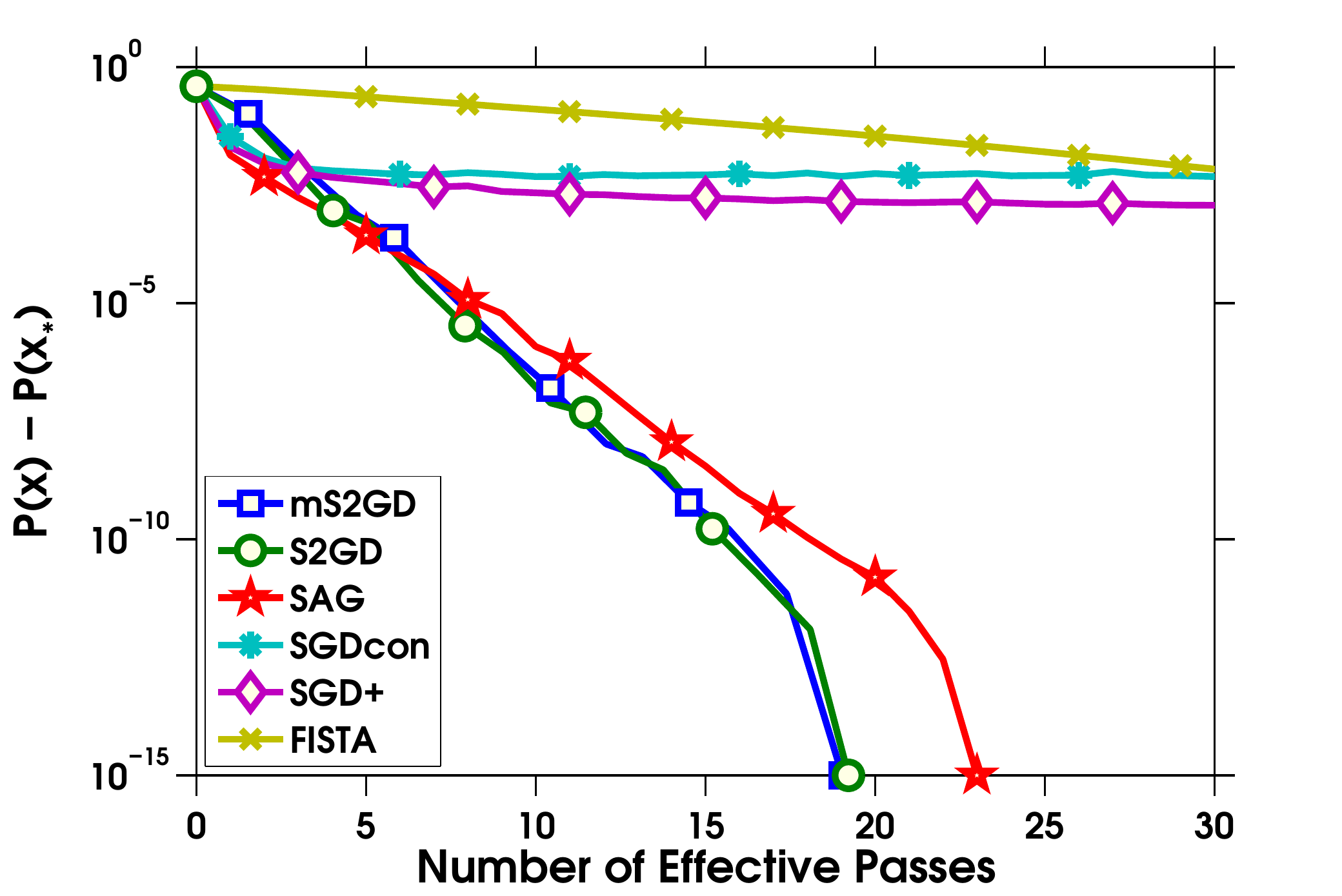}
 \\
 \includegraphics[width=0.40\textwidth]{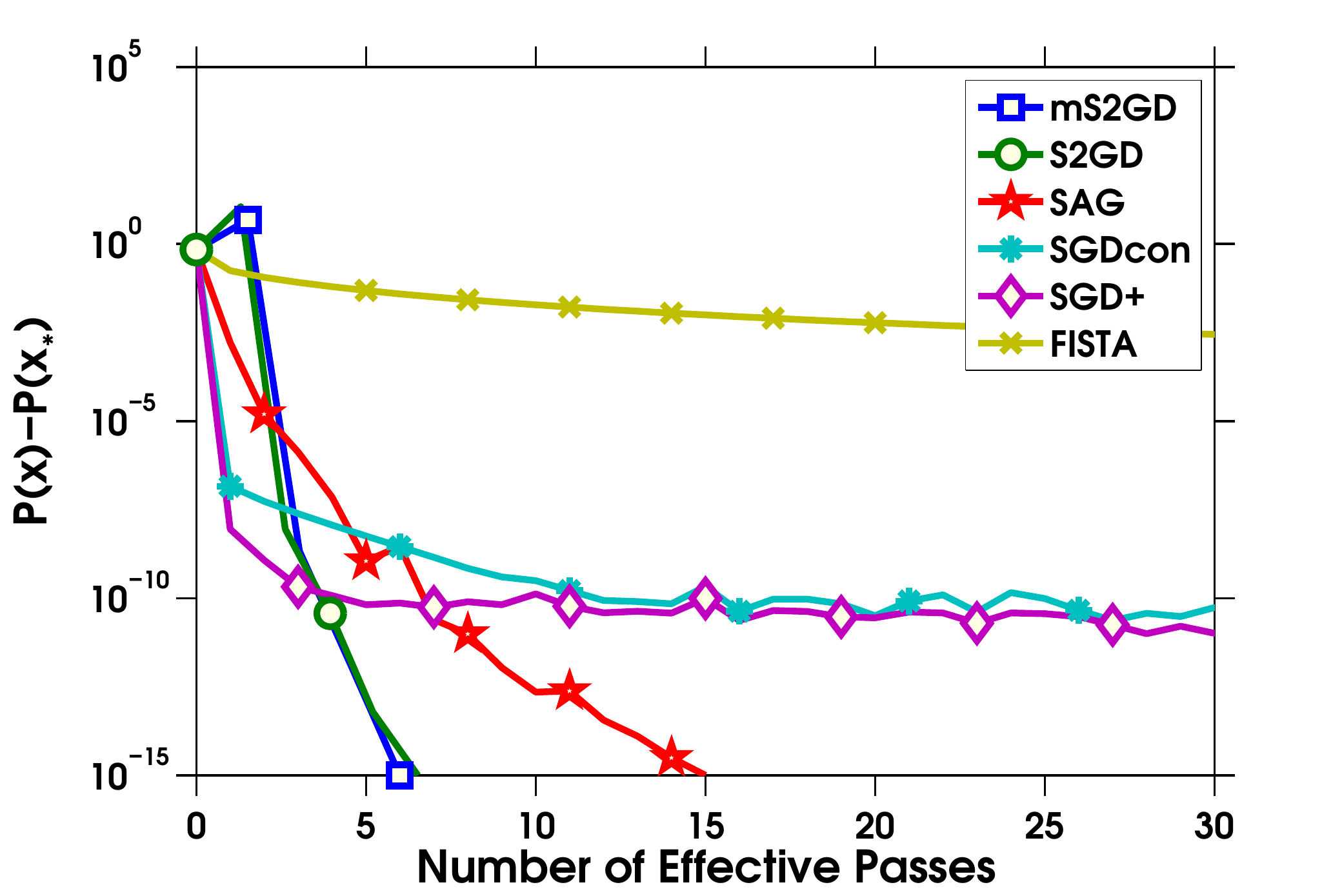}
 \includegraphics[width=0.40\textwidth]{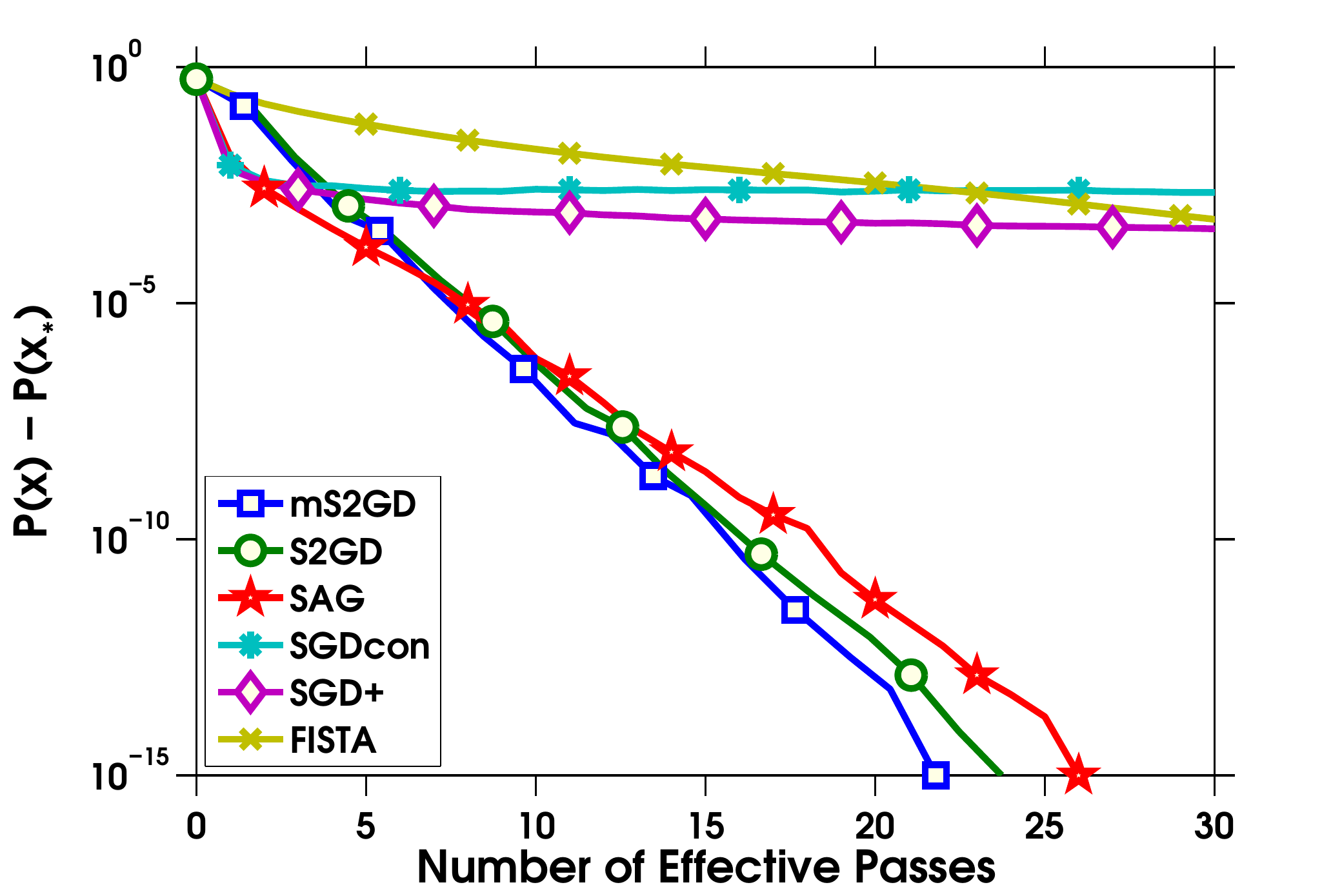}
 \caption{\footnotesize Comparison of several algorithms on four datasets: \emph{rcv1} (top left), \emph{news20} (top right), \emph{covtype} (bottom left) and \emph{astro-ph} (bottom right). We have used  mS2GD with $b=8$.}
   \label{fig:comparison_algorithms}
 \end{figure*}


\section{Experiments}

\add{In this section we perform numerical experiments to illustrate the properties and performance of our algorithm.  In Section~ \ref{mS2GD_properties} we study the total workload and parallelization speedup of mS2GD as a function of the mini-batch size $b$. In  Section~\ref{mS2GD_algorithms} we compare  mS2GD with several other algorithms.
Finally, in Section~\ref{sec:deblur}  we briefly illustrate that our method can be efficiently applied to a deblurring problem.
}

In Sections~\ref{mS2GD_properties} and \ref{mS2GD_algorithms} we conduct experiments with $R(x)=\tfrac{\lambda}{2}\|x\|^2$ and $F$ of the form \eqref{Px2}, where $f_i$ is the logistic loss function:
\begin{equation}\label{logisticL2}
f_i(x) = \log [1+\exp(-b_i a_i^Tx)].
\end{equation}
These functions are often used in machine learning, with  $(a_i, b_i) \in \R^d\times \{+1,-1\}$, $i=1,\dots,n$, being a training dataset of example-label pairs. The resulting optimization problem  \eqref{Px1}+\eqref{Px2} takes the form
\begin{equation}\label{L2problem}
\textstyle P(x) = \tfrac1n\sum_{i=1}^n f_i(x) + \tfrac{\lambda}{2}\|x\|^2,
\end{equation}
and is used in machine learning for binary classification. In these sections we have performed experiments on four publicly available binary classification datasets, namely \emph{rcv1,  news20, covtype}~\footnote{\emph{rcv1, covtype} and \emph{news20} are available at \burl{http://www.csie.ntu.edu.tw/~cjlin/libsvmtools/datasets/}.} and \emph{astro-ph}~\footnote{Available at \burl{http://users.cecs.anu.edu.au/~xzhang/data/}.}.


In the logistic regression problem, the Lipschitz constant of function $\nabla f_i$ is equal to $L_i = \|a_i\|^2/4$. Our analysis assumes (Assumption~\ref{ass1}) the same constant $L$ for all functions. Hence, we have $L = \max_{i\in\setn} L_i$. We set the regularization parameter $\lambda = \frac1n$ in our experiments, resulting in the problem having the condition number $\kappa = \frac{L}{\mu} = \mathcal{O}(n)$. In Table~\ref{table: datasets} we  summarize  the four datasets, including the sizes $n$, dimensions $d$, their sparsity levels as a proportion of nonzero elements, and the Lipschitz constants $L$.

\begin{table}[H]
\small
\centering
\begin{tabular}{|c|c|c|c|c|c|}
\hline
Dataset  & $n$ & $d$ & Sparsity & $L$  \\
\hline \hline 
\emph{rcv1} & 20,242 & 47,236 & 0.1568\% & 0.2500\\
\hline 
\emph{news20} & 19,996  & 1,355,191  & 0.0336\% & 0.2500\\
\hline 
\emph{covtype} & 581,012 &  54  & 22.1212\% & 1.9040\\
\hline 
\emph{astro-ph} & 62,369  & 99,757  & 0.0767\% & 0.2500\\
\hline
\end{tabular}
\captionsetup{justification=centering,margin=0.5cm}
\caption{Summary of datasets used for experiments.}
\label{table: datasets}
\end{table}
 

\subsection{Speedup of mS2GD}\label{mS2GD_properties}
Mini-batches allow mS2GD to be accelerated on a  computer with a parallel processor. In Section~\ref{sec:speedup}, we have shown in  that up to some threshold mini-batch size, the total workload of mS2GD remains unchanged. Figure~\ref{fig:MBSpeedup} compares the best performance of mS2GD used with various  mini-batch sizes on datasets \emph{rcv1} and \emph{astro-ph}. An effective  pass (through the data) corresponds to   $n$ units of work. Hence,  the evaluation of a gradient of $F$ counts as one effective pass. In both cases, by increasing the mini-batch size to $b=2, 4, 8$, the performance of mS2GD is the same or better than that of S2GD ($b=1$) without any parallelism. 

Although for larger mini-batch sizes mS2GD would be obviously worse, the results are still promising with parallelism. In Figure~\ref{fig:MBSpeedup with parallelism},we show the ideal speedup---one that  would be achievable if we could always evaluate the $b$ gradients  in parallel in exactly the same amount of time as it would take to evaluate a single gradient.\footnote{In practice, it is impossible to ensure that the times of evaluating different component gradients are the same.}.

\subsection{mS2GD vs other algorithms}\label{mS2GD_algorithms}
In this part, we implemented the following algorithms to conduct a numerical comparison:\\
1) \textbf{SGDcon}: Proximal stochastic gradient descent method with a constant step-size which gave the best performance in hindsight.\\
2) \textbf{SGD+}: Proximal stochastic gradient descent with variable step-size $h=h_0/(k+1)$, where $k$ is the number of effective passes, and $h_0$ is some initial constant step-size. \\
3) \textbf{FISTA}: Fast iterative shrinkage-thresholding algorithm proposed in~\cite{fista}. \\
4) \textbf{SAG}: Proximal version of the stochastic average gradient algorithm~\cite{SAG}. Instead of using $h=1/16L$, which is analyzed in the reference, we used a constant step size.\\
5) \textbf{S2GD}: Semi-stochastic gradient descent method proposed in~\cite{S2GD}. We applied proximal setting to the algorithm and used a constant stepsize.\\
6) \textbf{mS2GD}:  mS2GD with mini-batch size $b=8$. Although a safe step-size is given in our theoretical analyses in Theorem~\ref{s2convergence}, we ignored the bound, and used a constant step size.

In all cases, unless otherwise stated, we have used the best constant stepsizes in hindsight.

Figure~\ref{fig:comparison_algorithms} demonstrates the superiority of mS2GD over other  algorithms in the test pool on the four datasets described above. For mS2GD, the best choices of parameters with $b=8$ are given in Table~\ref{table: parameters}.
\begin{table}[H]
\small
\centering
\begin{tabular}{|c|c|c|c|c|c|}
\hline
Parameter & \emph{rcv1} & \emph{news20} & \emph{covtype} & \emph{astro-ph} \\
\hline \hline 
$ m$ & 0.11$n$ & 0.10$n$ & 0.07$n$ & 0.08$n$\\
\hline
  $h$ & $5.5/L$  & $6/L$  & $350/L$ & $6.5/L$\\
\hline
\end{tabular}
\captionsetup{justification=centering,margin=0.5cm}
\caption{Best choices of parameters in mS2GD.}
\label{table: parameters}
\end{table}

\subsection{Image deblurring}
\label{sec:deblur}
 \add{
In this section we utilize the Regularization Toolbox 
\cite{hansen2007regularization}
\footnote{Regularization Toolbox 
available for Matlab can be obtained from \url{http://www.imm.dtu.dk/~pcha/Regutools/} .} }
We use the \emph{blur} function 
available therein to obtain the original image and generate a blurred image (we choose following values of parameters for blur function: $N=256$, band=9, sigma=10). The purpose of the blur function is to generate a test problem with an atmospheric turbulence blur. In addition, an additive Gaussian white noise with stand deviation of $10^{-3}$ is added to the blurred image. This forms our testing image as a vector $b$. The image dimension of the test image is $256\times 256$, which means that $n=d=65,536$.  We would not expect our method to work particularly well on this problem since mS2GD works best when $d\ll n$. However, as we shall see, the method's performance is on a par with the performance of the best methods in our test pool.

Our  goal is to reconstruct (de-blur) the original image $x$ by solving a LASSO problem: $\min_x \| Ax - b\|_2^2+\lambda \|x\|_1$. We have chosen  $\lambda =10^{-4}$. In our implementation, we normalized the objective function by $n$, and hence our objective value  being optimized is in fact $\min_x \frac1n\| Ax - b\|_2^2+\lambda \|x\|_1$, where $\lambda = \frac{10^{-4}}{n}$,
similarly as was done in \cite{fista}.

Figure~\eqref{fig:afcewvcawfecvwa} shows the original test image (left) and a blurred image with added Gaussian noise (right).
Figure~\ref{fig:asfvfrawfrvwa} compares the mS2GD algorithm with SGD+, S2GD and FISTA. We run all algorithms for 100 epochs and plot the error. The plot suggests that SGD+ decreases the objective function very rapidly at beginning, but slows down after 10-20 epochs.

 \begin{figure} 
\centering

 \includegraphics[width=0.16\textwidth]{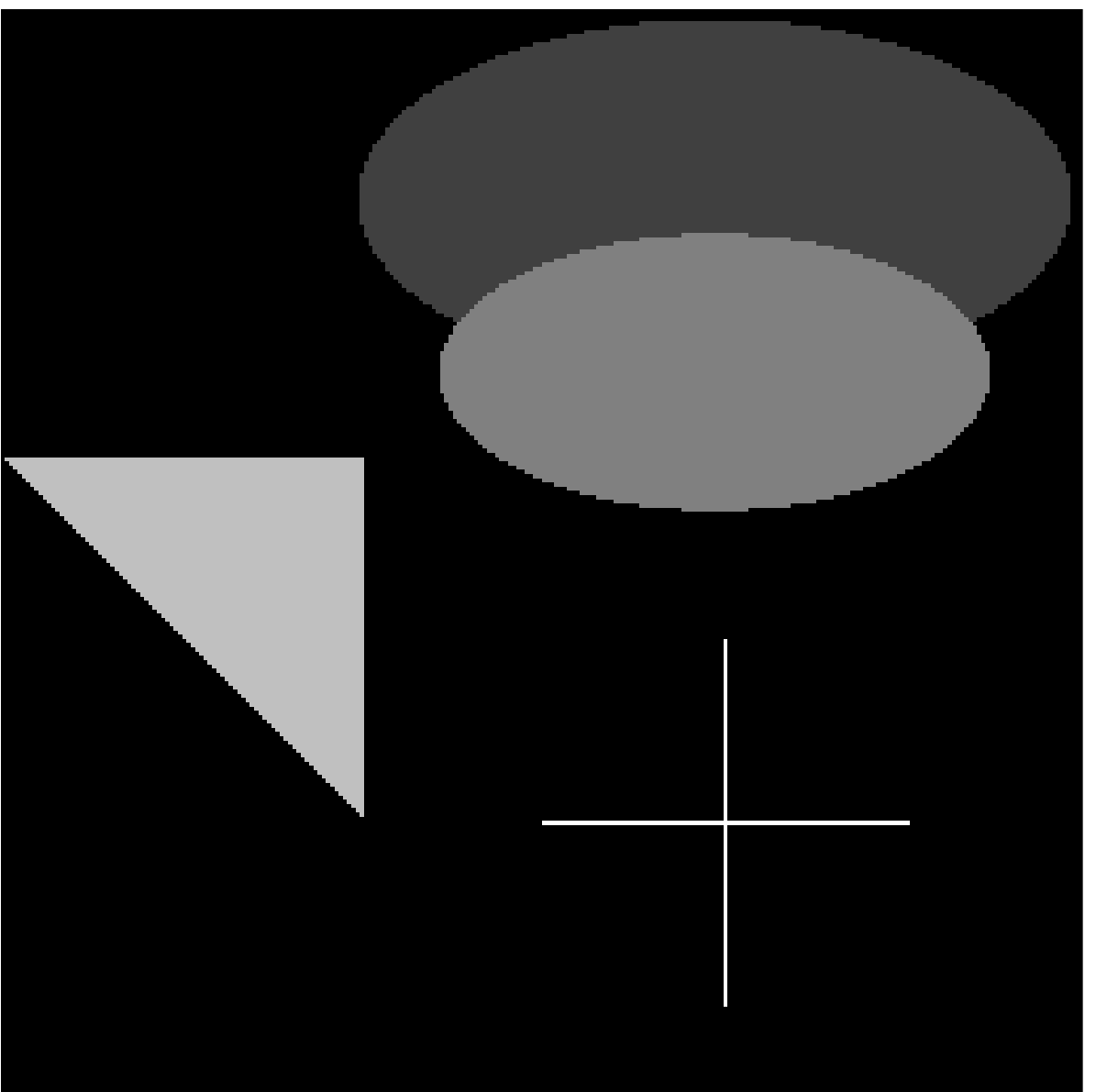}
 \includegraphics[width=0.16\textwidth]{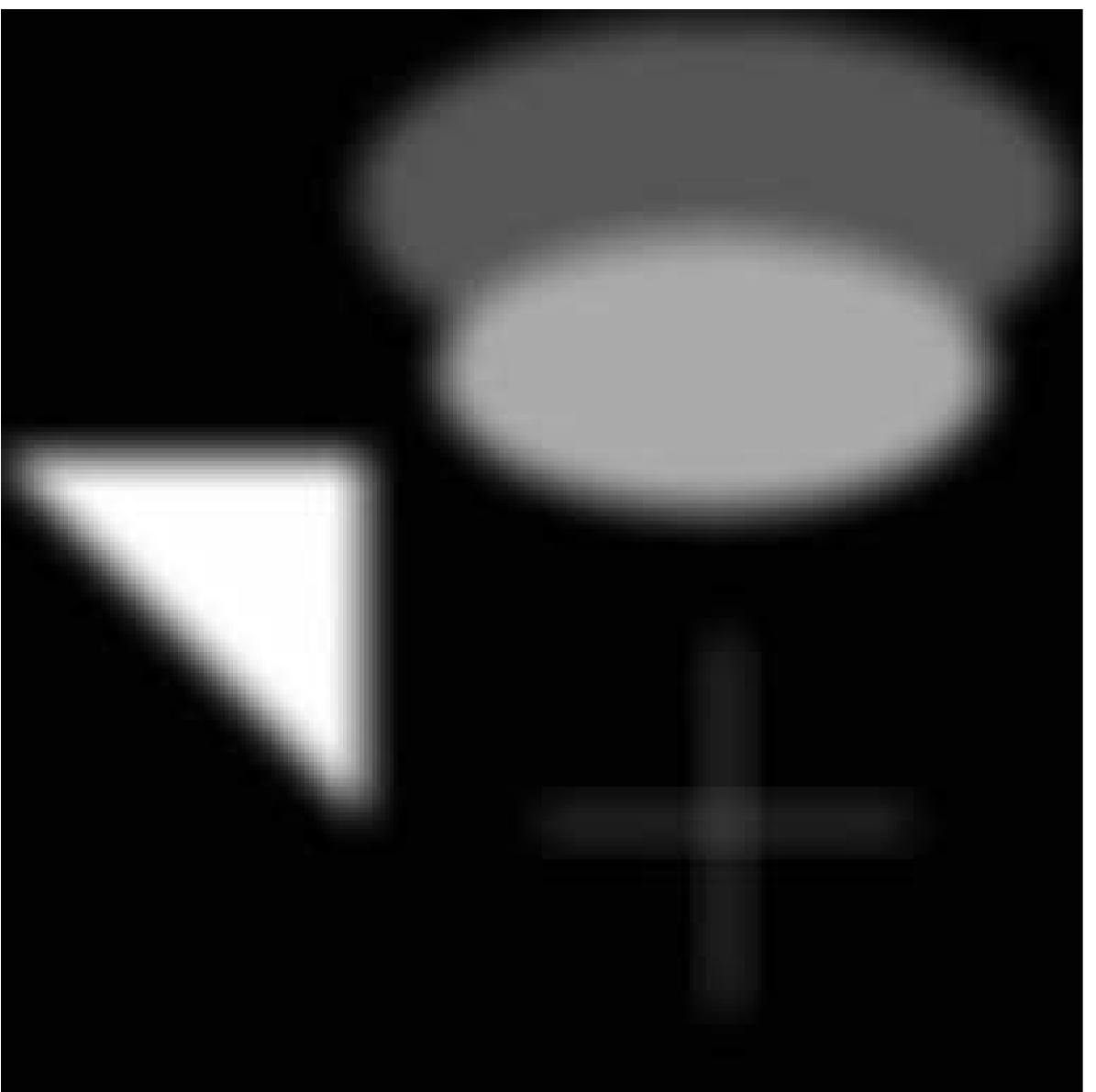}
  \caption{\footnotesize Original (left) and blurred \& noisy (right) test image.}
  \label{fig:afcewvcawfecvwa}
 \end{figure}

  \begin{figure} 

\centering
 \includegraphics[width=0.40\textwidth]{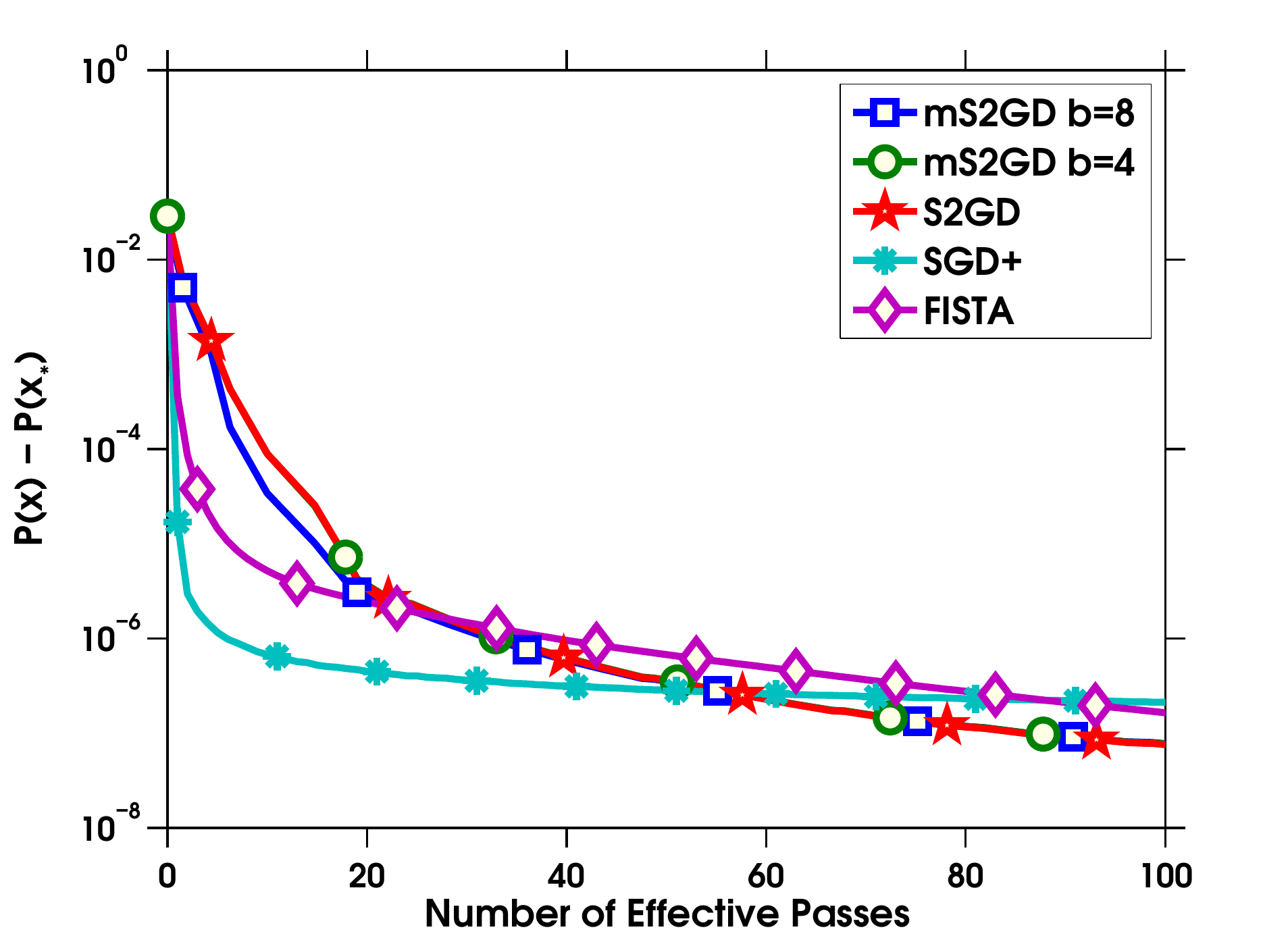}
\captionsetup{width=0.5\textwidth}
  \caption{\footnotesize Comparison of several algorithms for the de-blurring problem.}
 \label{fig:asfvfrawfrvwa}
 \end{figure}
 
 Finally, Fig.~\ref{fig:adsfaewfawefa}
 shows the reconstructed image after
 $T=20, 60, 100$ epochs.
 
  \begin{figure*}
\centering
    \begin{tabular}{r l}
    $T = 20$ \ &
        \begin{subfigure}{0.16\textwidth}
        \centering
        \caption*{FISTA}
        \includegraphics[width=1\textwidth]{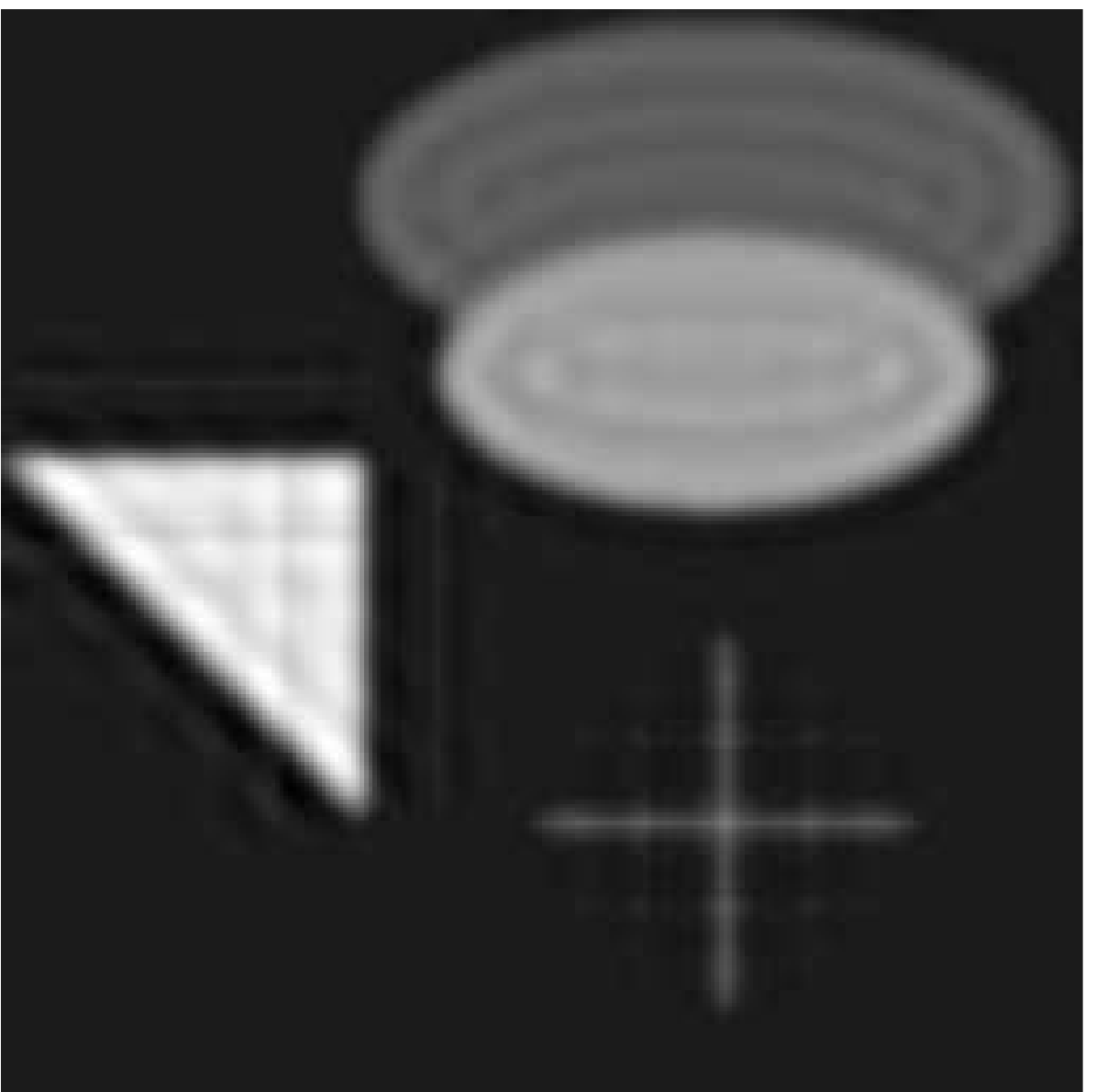}
        \end{subfigure}
        \begin{subfigure}{0.16\textwidth}
        \centering
        \caption*{SGD+}
        \includegraphics[width=1\textwidth]{FISTAsol20-eps-converted-to.pdf}
        \end{subfigure}
        \begin{subfigure}{0.16\textwidth}
        \centering
        \caption*{S2GD}
        \includegraphics[width=1\textwidth]{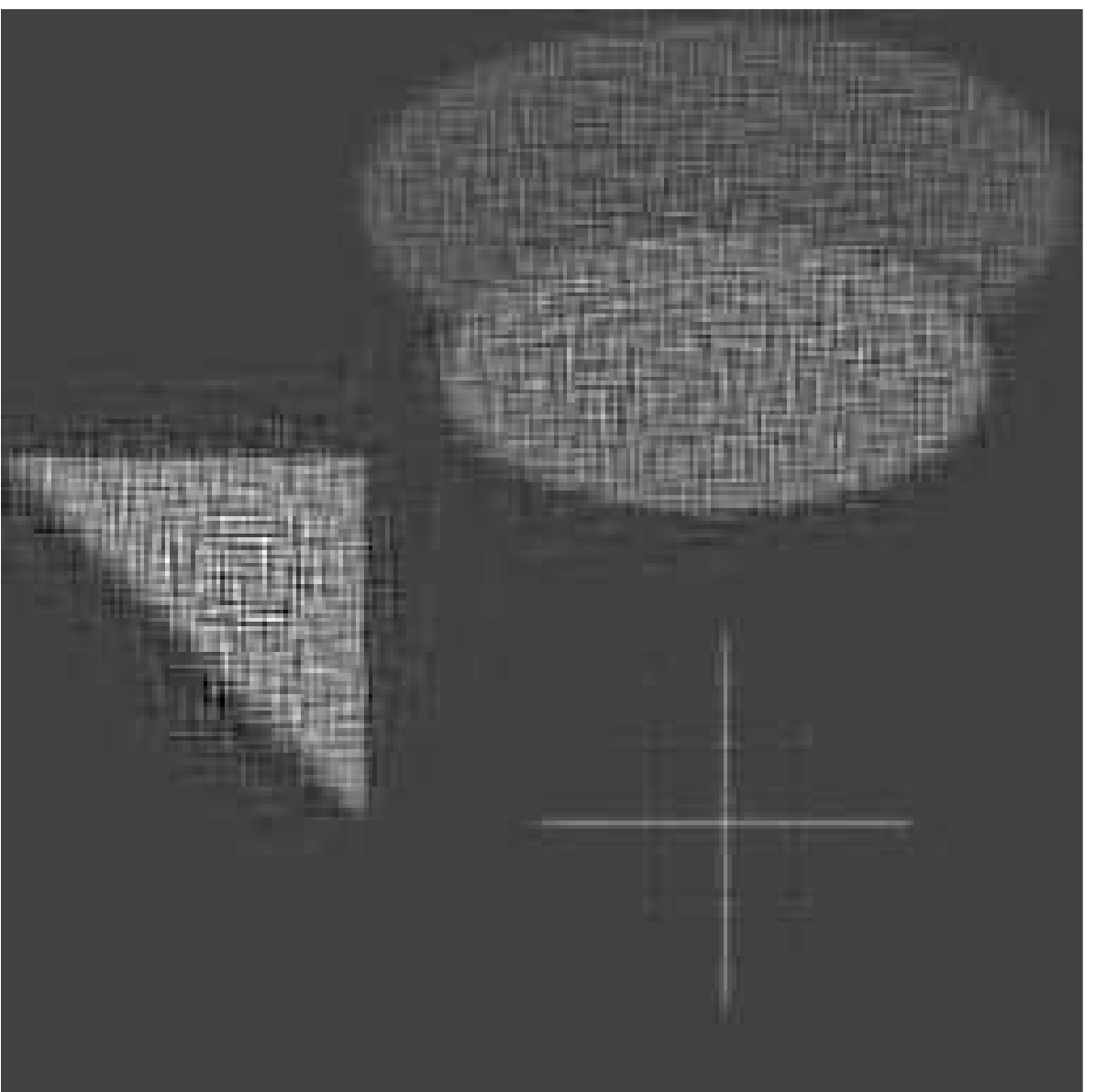}
        \end{subfigure}
        \begin{subfigure}{0.16\textwidth}
        \centering
        \caption*{mS2GD $(b=4)$}
        \includegraphics[width=1\textwidth]{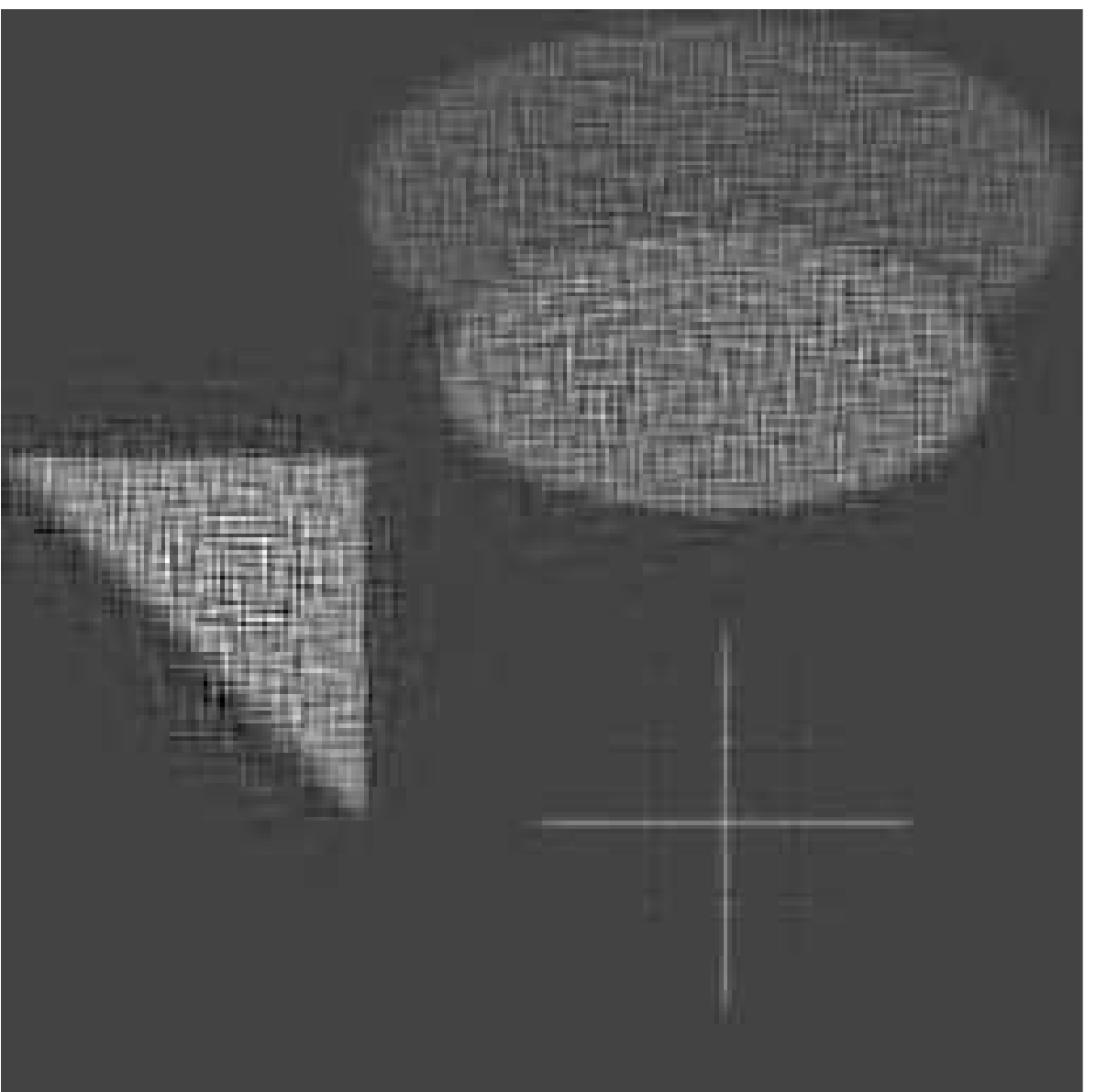}
        \end{subfigure}
        \begin{subfigure}{0.16\textwidth}
        \centering
        \caption*{mS2GD $(b=8)$}
        \includegraphics[width=1\textwidth]{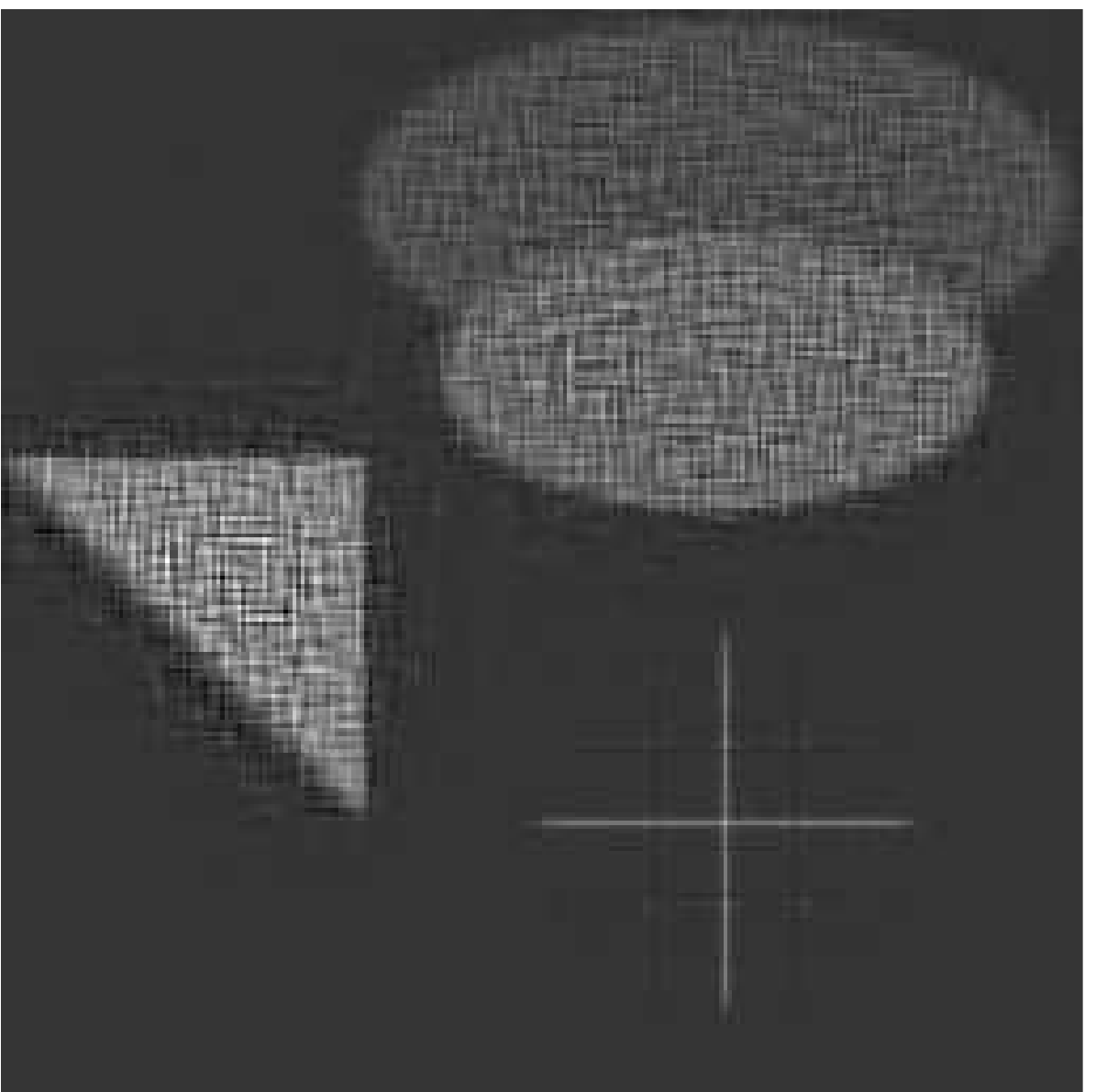}
        \end{subfigure}
  \\$T = 60$ \ &
        \begin{subfigure}{0.16\textwidth}
        \centering
        \includegraphics[width=1\textwidth]{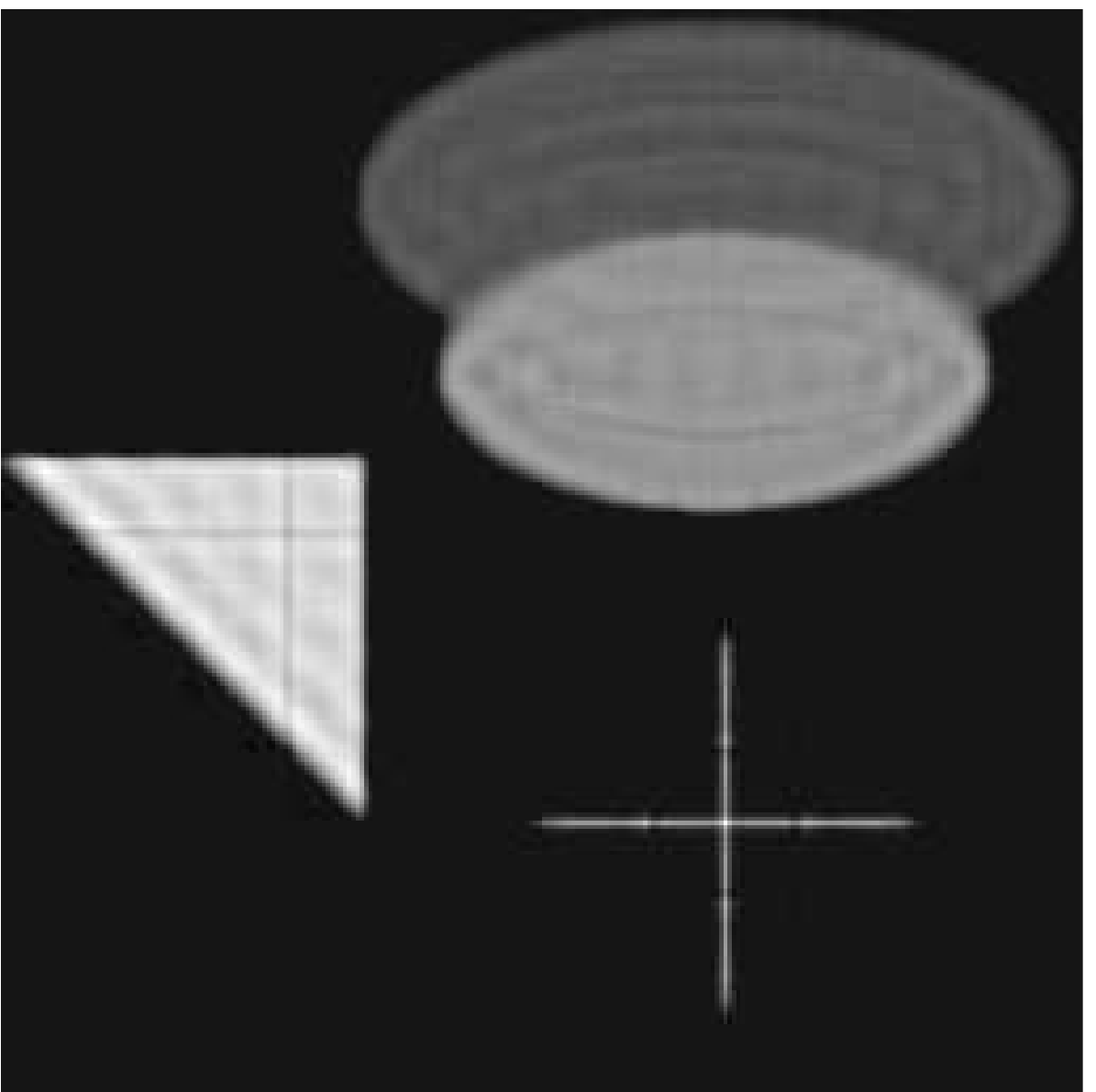}
        \end{subfigure}
        \begin{subfigure}{0.16\textwidth}
        \centering
        \includegraphics[width=1\textwidth]{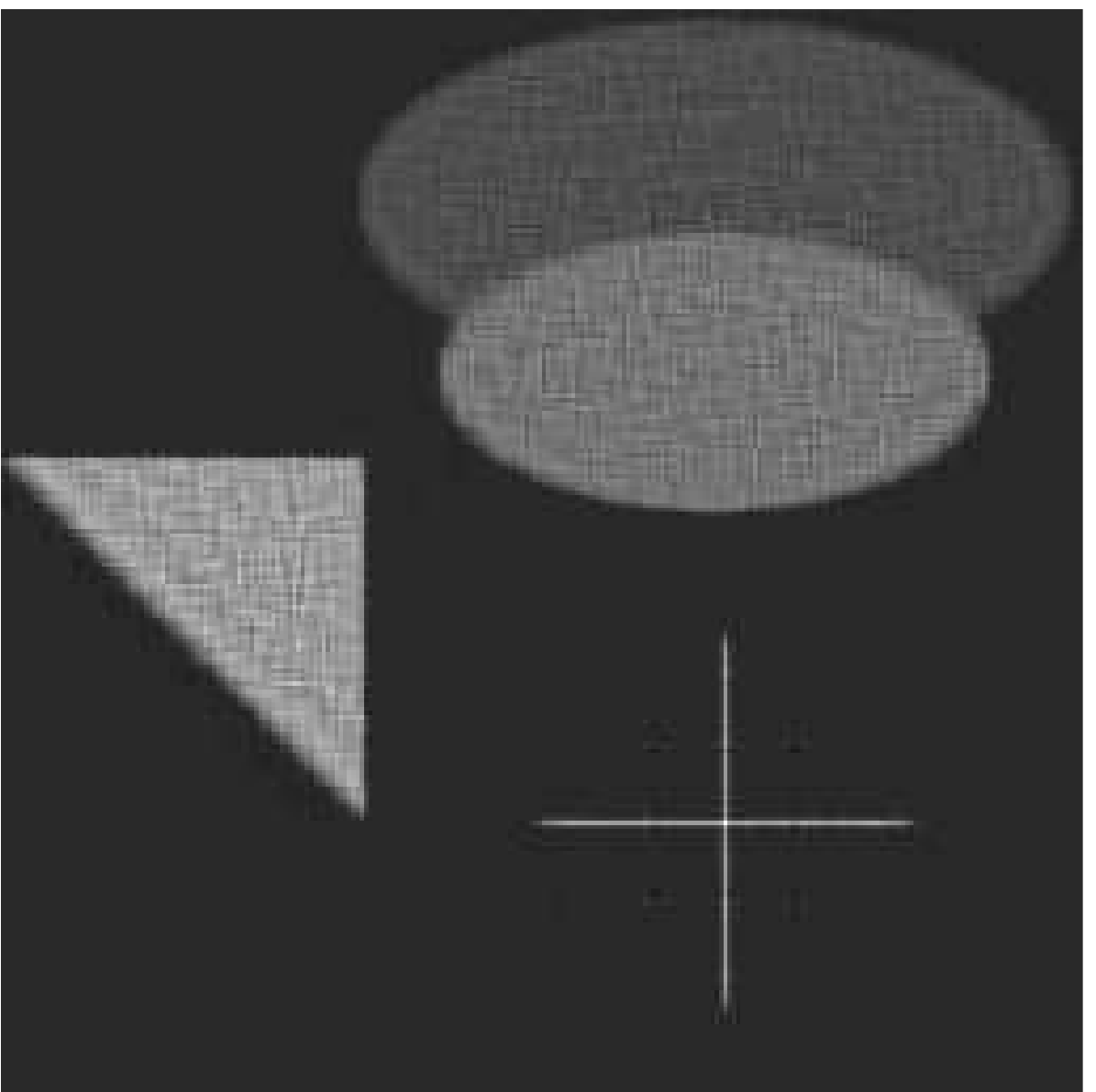}
        \end{subfigure}
        \begin{subfigure}{0.16\textwidth}
        \centering
        \includegraphics[width=1\textwidth]{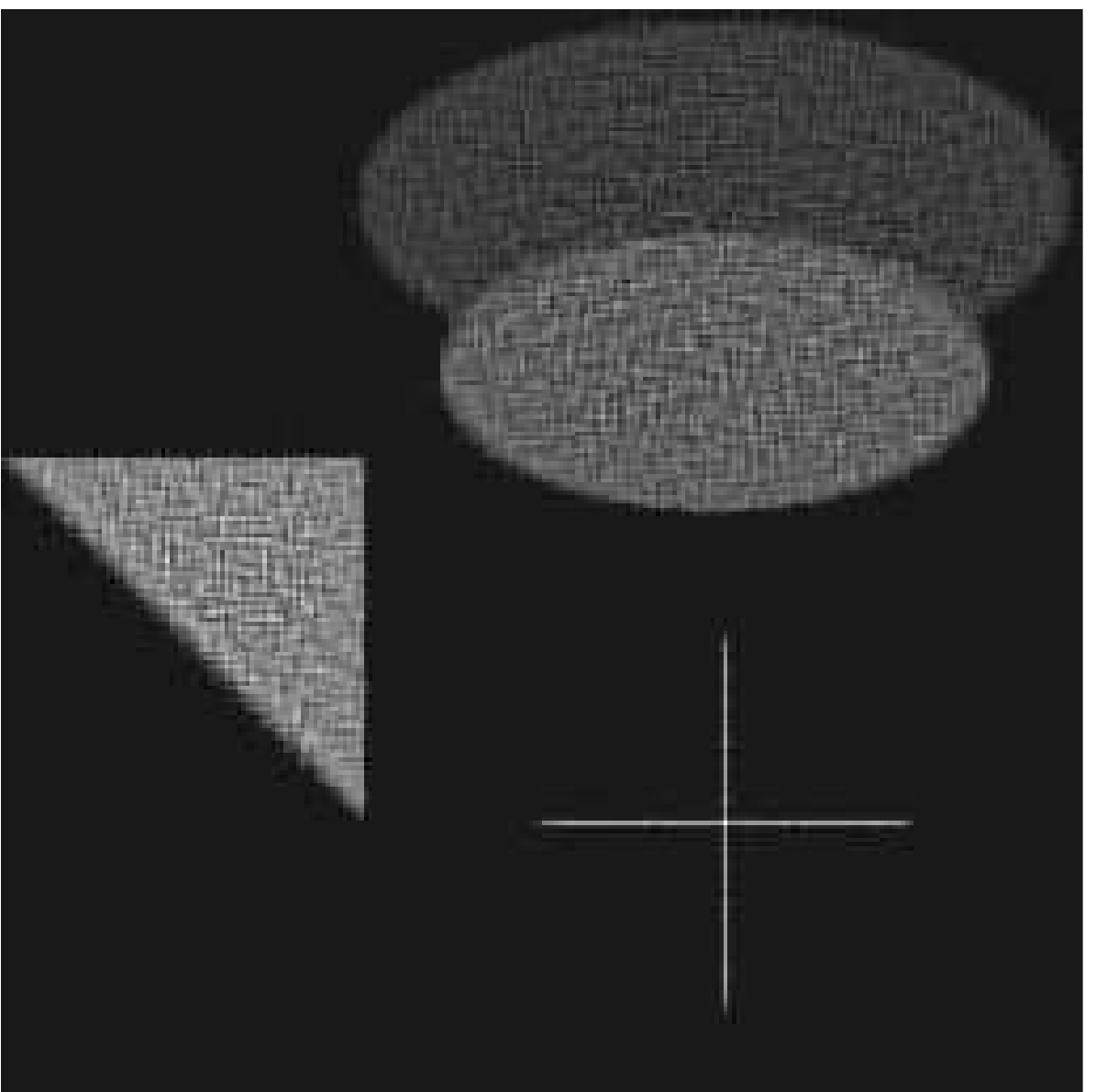}
        \end{subfigure}
        \begin{subfigure}{0.16\textwidth}
        \centering
        \includegraphics[width=1\textwidth]{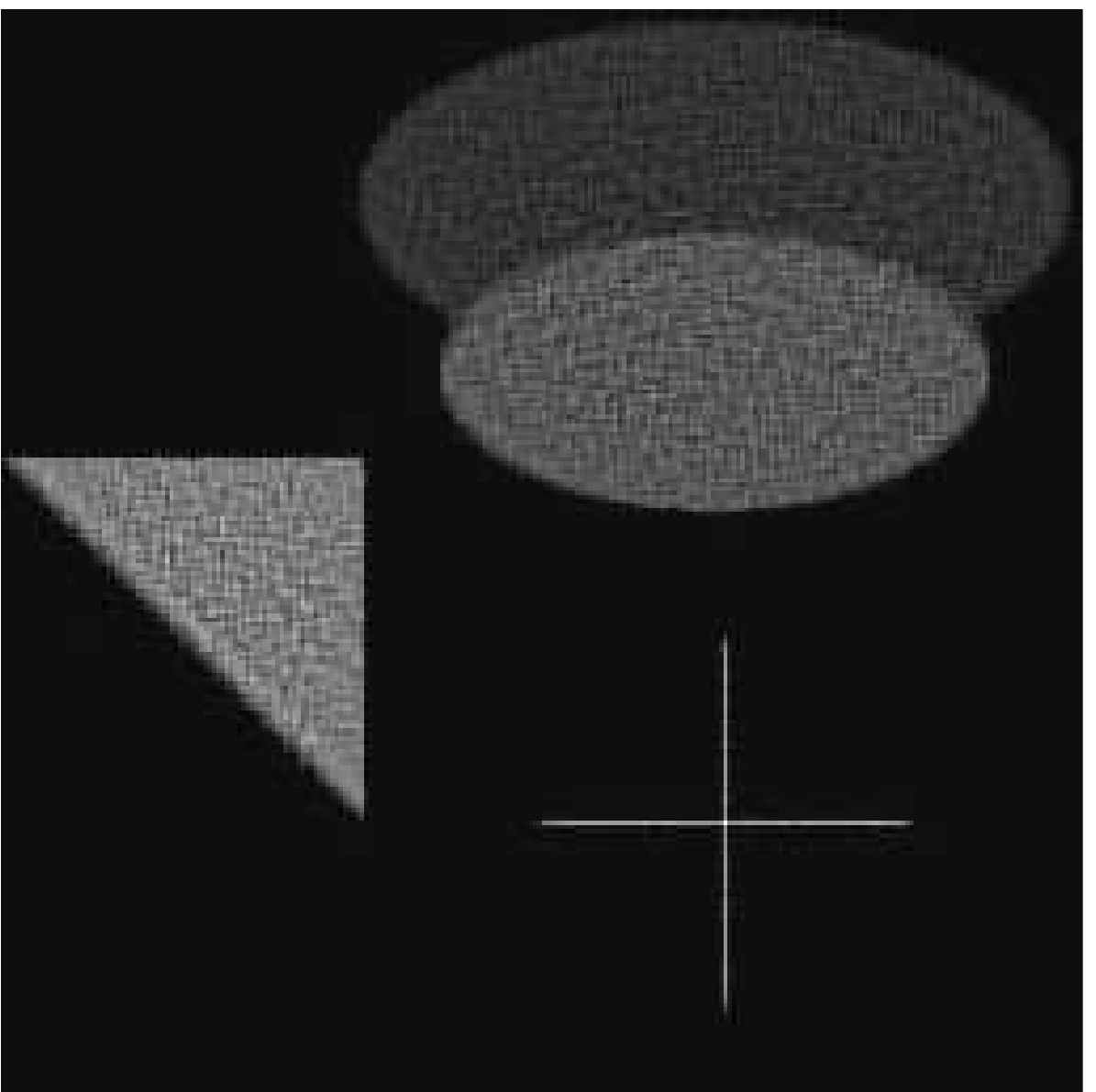}
        \end{subfigure}
        \begin{subfigure}{0.16\textwidth}
        \centering
        \includegraphics[width=1\textwidth]{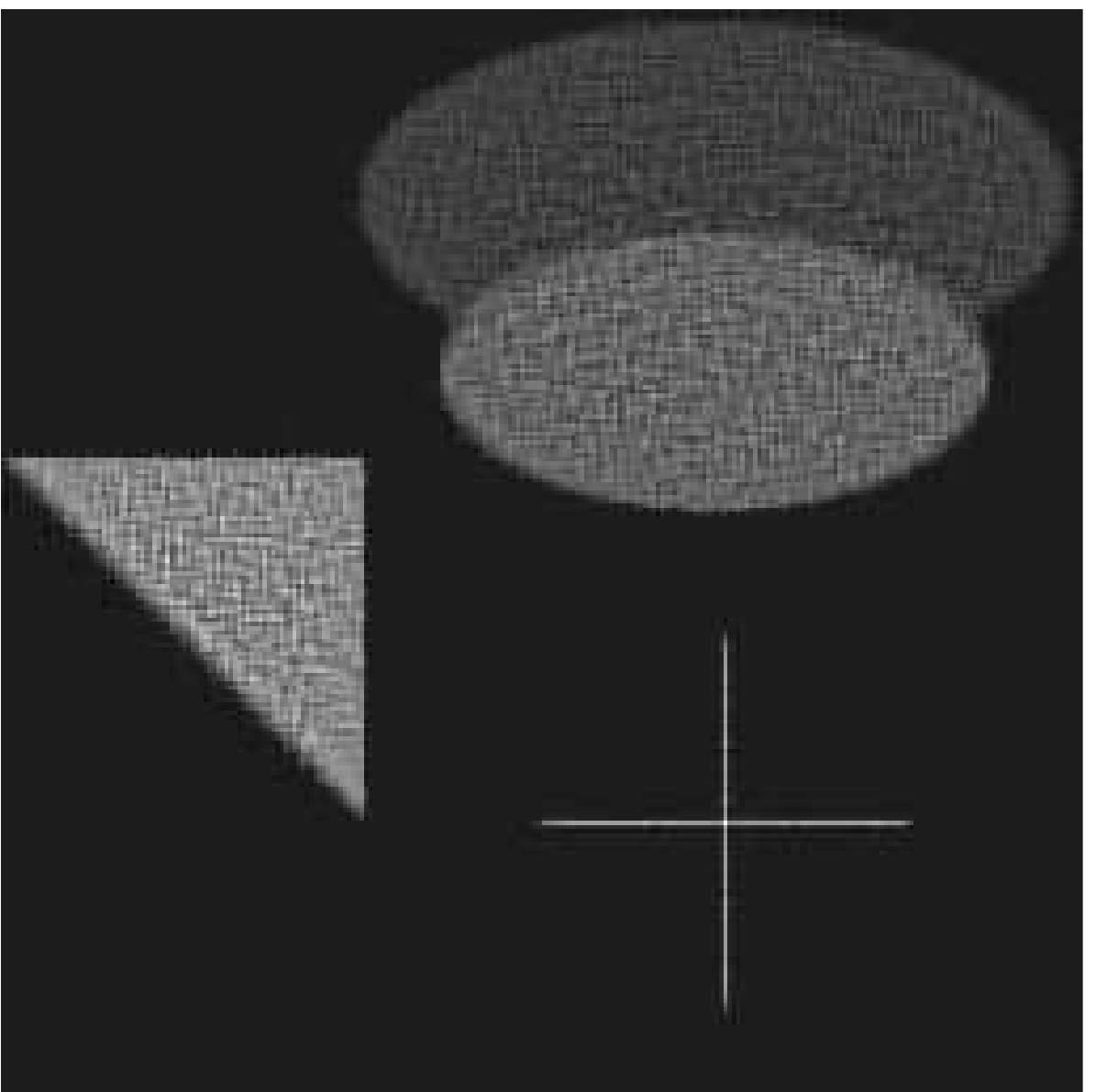}
        \end{subfigure}
   \\$T = 100$ &
        \begin{subfigure}{0.16\textwidth}
        \centering
        \includegraphics[width=1\textwidth]{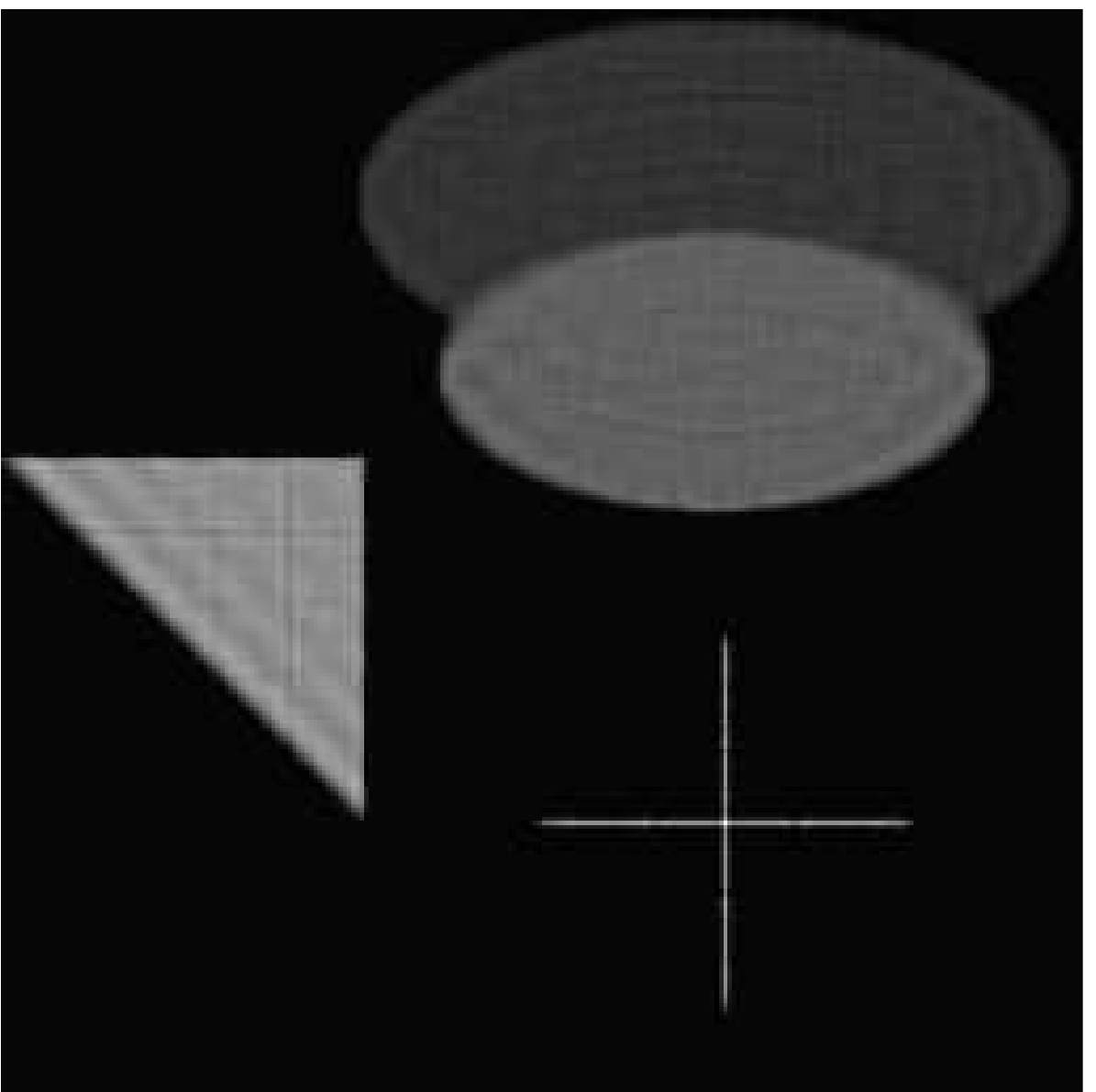}
        \end{subfigure}
        \begin{subfigure}{0.16\textwidth}
        \centering
        \includegraphics[width=1\textwidth]{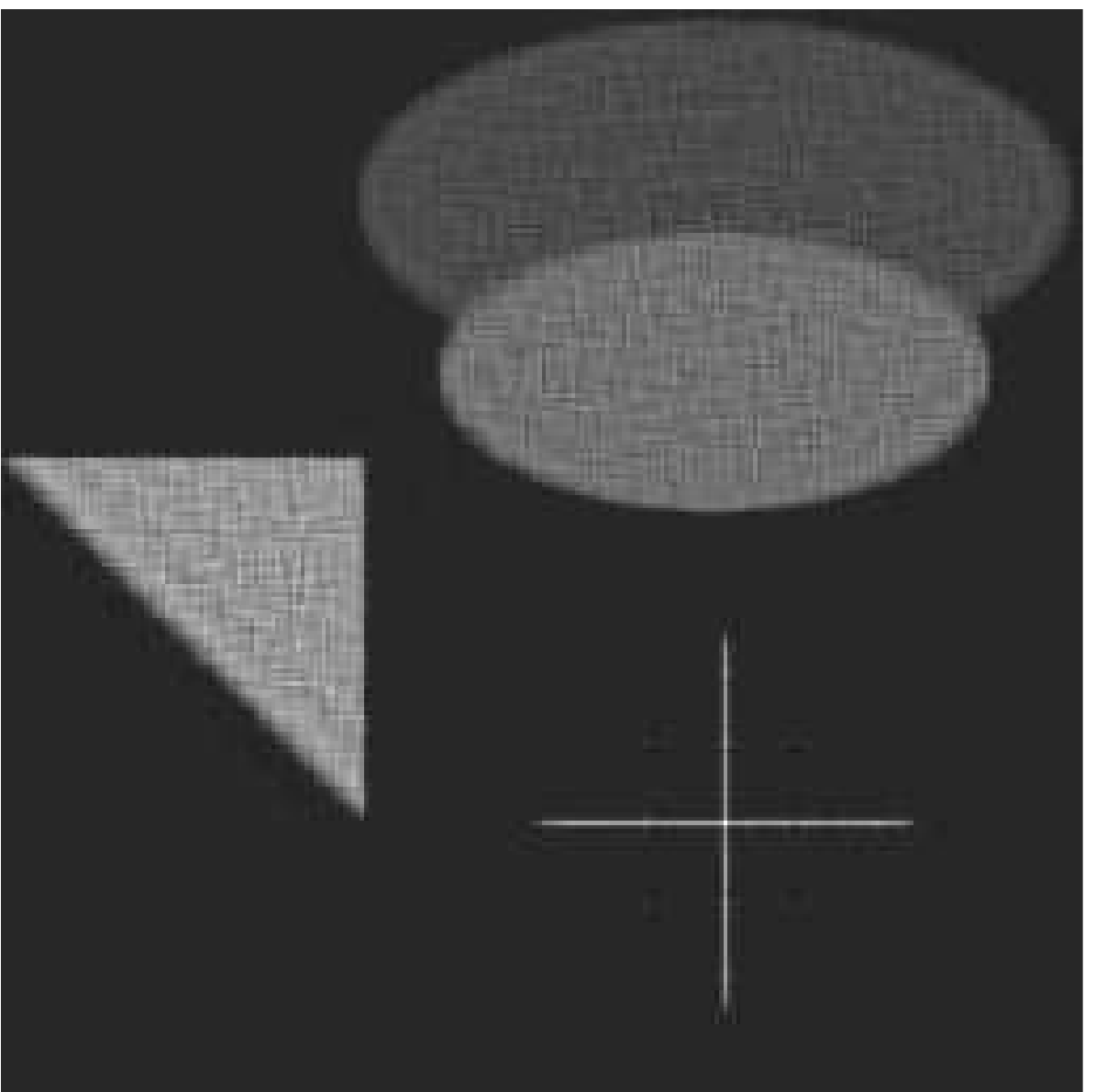}
        \end{subfigure}
        \begin{subfigure}{0.16\textwidth}
        \centering
        \includegraphics[width=1\textwidth]{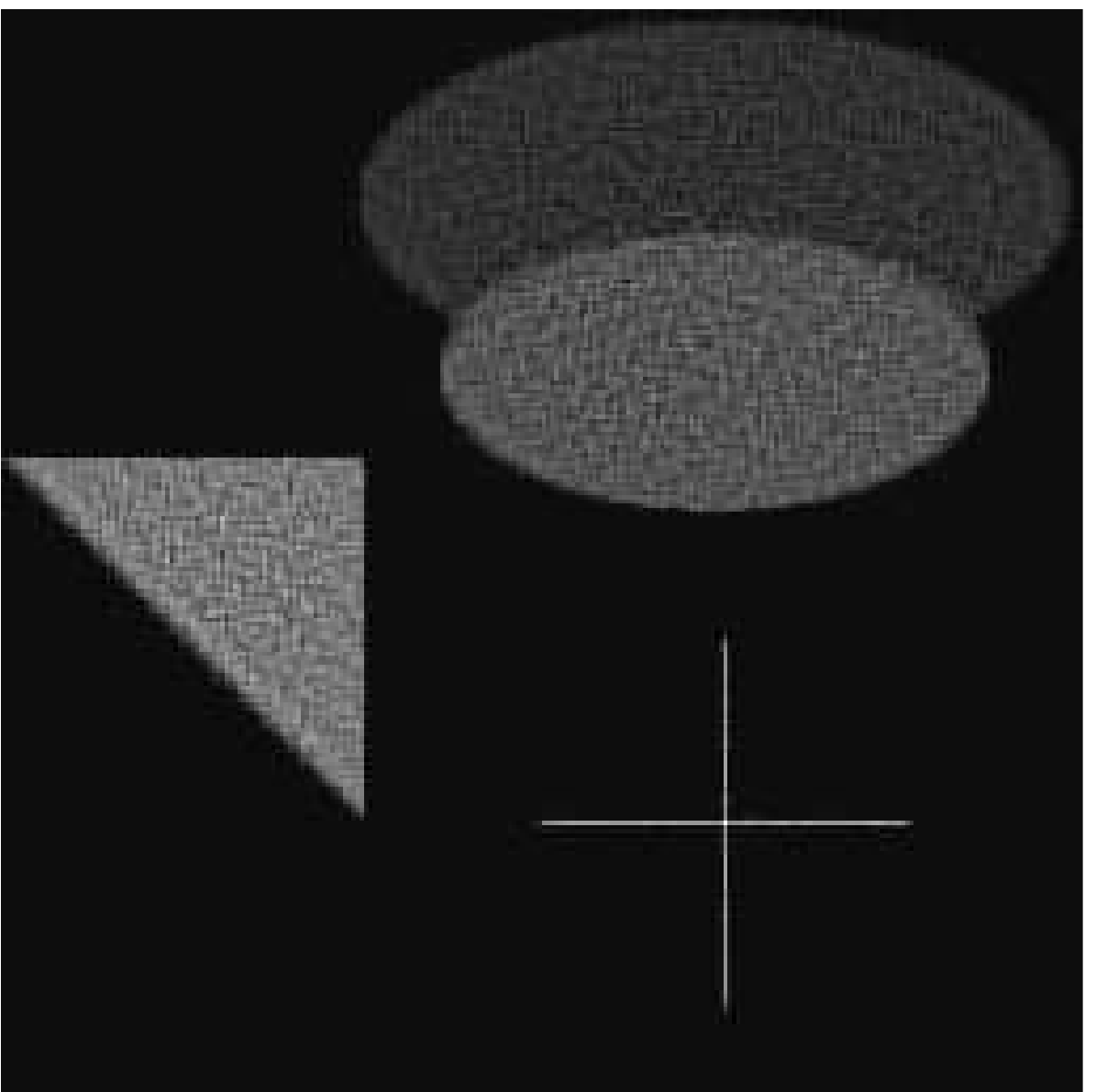}
        \end{subfigure}
        \begin{subfigure}{0.16\textwidth}
        \centering
        \includegraphics[width=1\textwidth]{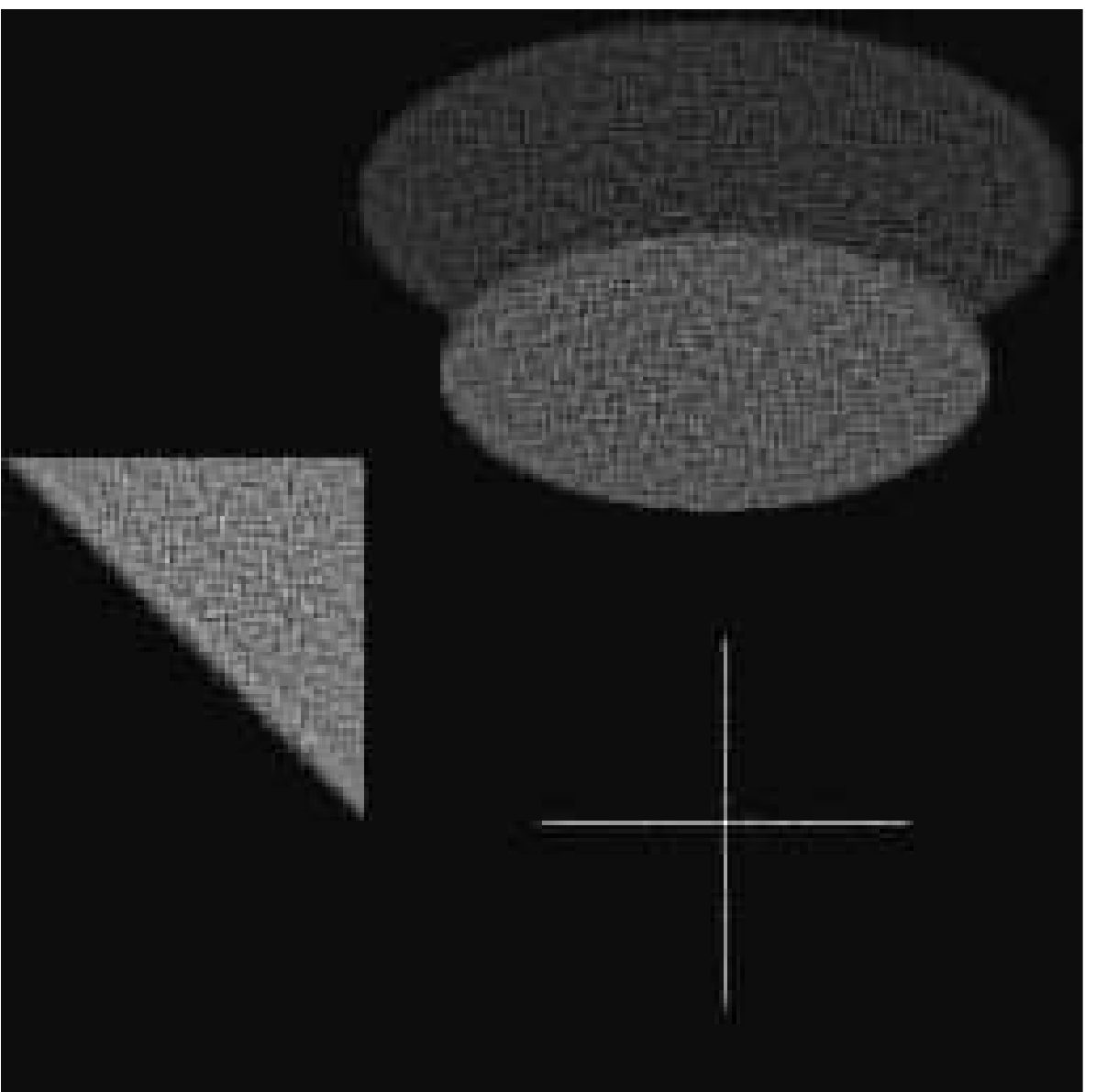}
        \end{subfigure}
        \begin{subfigure}{0.16\textwidth}
        \centering
        \includegraphics[width=1\textwidth]{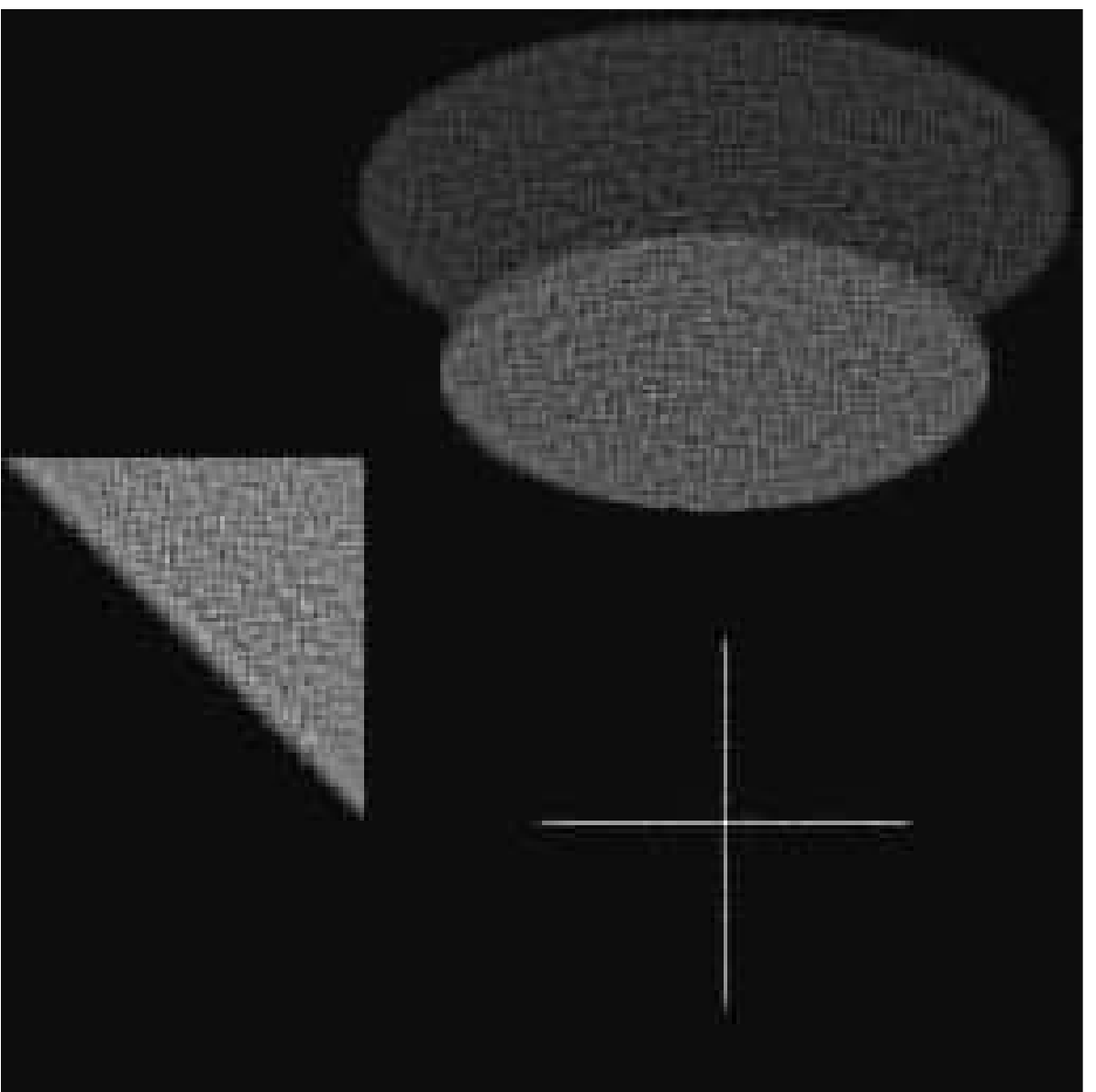}
        \end{subfigure}
  \end{tabular}
  \caption{\footnotesize Reconstruction of the  test image from Figure~\ref{fig:afcewvcawfecvwa} via FISTA, SGD+,    S2GD and mS2GD after $T=20, 60, 100$ epochs (one epoch corresponds to work equivalent to the computation of one gradient.)}
  \label{fig:adsfaewfawefa}
 \end{figure*}


\section{Conclusion}
\add{
We have proposed mS2GD---a mini-batch semi-stochastic gradient method---for minimizing a strongly convex composite function. Such optimization problems arise frequently in inverse problems in signal processing and statistics. Similarly to SAG, SVRG, SDCA and S2GD, our algorithm also outperforms existing deterministic method such as ISTA and FISTA. Moreover, we have shown that the method is by design amenable to a simple parallel implementation. Comparisons to state-of-the-art algorithms suggest that mS2GD, with a small-enough mini-batch size, is competitive in theory and faster in practice than other competing methods even without parallelism. The method can be efficiently implemented for sparse data sets.
}

\appendices

\section{Technical Results}

\begin{lemma}[Lemma 3.6 in \cite{proxsvrg}]\label{nonexpansiveness}
Let $R$ be a closed convex function on $\R^d$ and $\x,\y \in \dom(R)$, then
$
\| \prox_R(\x) - \prox_R(\y)\| \leq \|\x-\y\|.
$
\end{lemma}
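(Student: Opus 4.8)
The plan is to exploit the first-order optimality conditions of the two proximal subproblems together with the monotonicity of the subdifferential of a convex function. First I would set $u \eqdef \prox_R(\x)$ and $v \eqdef \prox_R(\y)$. Since $R$ is closed and convex, the objective $\tfrac12\|\cdot-\z\|^2 + R(\cdot)$ is strongly convex and closed for each fixed $\z$, so each of $u,v$ is well defined, unique, and lies in $\dom(R)$. Applying Fermat's rule to the minimization problem defining $u$ (the quadratic term is everywhere differentiable, so its gradient adds to $\partial R$) gives the optimality condition $\x - u \in \partial R(u)$, and analogously $\y - v \in \partial R(v)$.

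Next I would invoke the monotonicity of the subdifferential: because $R$ is convex, $\partial R$ is a monotone operator, so for the two subgradients above we have $\ve{(\x - u) - (\y - v)}{u - v} \geq 0$. Rearranging this single inequality, and expanding the inner product, yields $\ve{\x - \y}{u - v} \geq \|u - v\|^2$.

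Finally, applying the Cauchy--Schwarz inequality to the left-hand side, $\ve{\x - \y}{u - v} \leq \|\x - \y\|\,\|u - v\|$, and chaining it with the previous bound produces $\|\x - \y\|\,\|u - v\| \geq \|u - v\|^2$. If $\|u - v\| = 0$ the desired inequality holds trivially; otherwise dividing both sides by $\|u - v\|$ delivers $\|u - v\| \leq \|\x - \y\|$, which is exactly the claim $\|\prox_R(\x) - \prox_R(\y)\| \leq \|\x - \y\|$. (In fact the intermediate step $\ve{\x-\y}{u-v}\geq\|u-v\|^2$ establishes the stronger \emph{firm} nonexpansiveness, of which the stated bound is an immediate corollary.)

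The only mildly delicate point---the main obstacle, such as it is---is justifying the optimality characterization $\x - u \in \partial R(u)$ rigorously when $R$ is extended-real-valued, so that the subdifferential sum rule for $\tfrac12\|\cdot-\x\|^2 + R(\cdot)$ must be applied with care; this follows from standard convex analysis because the smooth quadratic summand is finite and differentiable everywhere, guaranteeing that the subdifferentials add. Everything else is routine algebra.
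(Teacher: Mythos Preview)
Your argument is correct and is in fact the standard proof: the optimality conditions $\x-u\in\partial R(u)$ and $\y-v\in\partial R(v)$ combined with monotonicity of $\partial R$ give firm nonexpansiveness, from which the stated bound follows by Cauchy--Schwarz. The paper does not supply its own proof of this lemma; it merely quotes it as Lemma~3.6 of \cite{proxsvrg} and remarks that it is a classical result due to Moreau and Rockafellar, so there is nothing to compare against beyond noting that your derivation is exactly the textbook one.
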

Note that  contractiveness of the proximal operator is a standard result in optimization literature \cite{moreau1962,rockafellar1970}.

\begin{lemma}\label{randvar}
Let $\{\xi_i\}_{i=1}^n$ be  vectors in $\R^d$
and $\bar{\xi} \eqdef \frac1n \sum_{i=1}^n \xi_i \in \R^d$.
Let $\hat S$ be  a random subset of $\setn$ of size $\tau$, chosen uniformly at random from all subsets of this cardinality. Taking expectation with respect to $\hat{S}$, we have
\begin{equation}
\label{eq:varianceBound}
 \textstyle 
\Exp \left[ \left\|\frac1\tau \sum_{i\in \hat S} \xi_i - \bar{\xi}  \right\|^2 \right]
\leq
\frac1{n\tau}
\frac{  n-\tau}{ (n-1)}
\sum_{i=1}^n \left\| \xi_i\right\|^2.
\end{equation}

\end{lemma}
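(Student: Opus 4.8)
The plan is to reduce the claim to a second-moment computation for the Bernoulli indicators associated with uniform sampling without replacement. First I would introduce $\eta_i \eqdef \mathbf{1}_{\{i\in\hat S\}}$, so that the sample average becomes $\tfrac1\tau\sum_{i\in\hat S}\xi_i = \tfrac1\tau\sum_{i=1}^n \eta_i\xi_i$ and its deviation from the mean is $\tfrac1\tau\sum_{i=1}^n(\eta_i-\tfrac\tau n)\xi_i$. By symmetry each index lies in a uniformly chosen $\tau$-subset with probability $\Exp[\eta_i]=\tau/n$, which both confirms that the estimator is unbiased (the population-level analogue of the identity $\Exp_{i_t}[G_t]=\nabla F(y_t)$ used in Section~\ref{sec:algorithms}B) and sets up the variance calculation.

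Next I would expand the squared norm into a double sum and move the expectation inside, obtaining $\Exp\|\cdot\|^2 = \tfrac1{\tau^2}\sum_{i,j}\Cov(\eta_i,\eta_j)\ve{\xi_i}{\xi_j}$. Two second-order quantities are needed: the variance $\Var(\eta_i)=\tfrac\tau n(1-\tfrac\tau n)=\tfrac{\tau(n-\tau)}{n^2}$, obtained from $\eta_i^2=\eta_i$; and, for $i\neq j$, the joint inclusion probability $\Prob[i,j\in\hat S]=\tfrac{\tau(\tau-1)}{n(n-1)}$ (a ratio of binomial coefficients), which after subtracting $(\tau/n)^2$ yields the negative covariance $\Cov(\eta_i,\eta_j)=-\tfrac{\tau(n-\tau)}{n^2(n-1)}$. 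The negativity is the crucial structural point: sampling without replacement induces negative correlation between distinct indicators, and this is exactly what manufactures the finite-population correction factor $\tfrac{n-\tau}{n-1}$ in the target bound.

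I would then substitute both covariances, split the double sum into its diagonal ($i=j$) and off-diagonal parts, and simplify the latter via the identity $\sum_{i\neq j}\ve{\xi_i}{\xi_j} = \|\sum_i\xi_i\|^2 - \sum_i\|\xi_i\|^2 = n^2\|\bar\xi\|^2 - \sum_i\|\xi_i\|^2$. Collecting the terms with common factor $\tfrac{\tau(n-\tau)}{n^2}$ and canceling against $\tfrac1{\tau^2}$ gives the exact value $\tfrac{n-\tau}{n\tau(n-1)}\bigl(\sum_i\|\xi_i\|^2 - n\|\bar\xi\|^2\bigr)$; discarding the nonnegative term $n\|\bar\xi\|^2$ produces precisely the stated inequality \eqref{eq:varianceBound}.

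The main obstacle here is bookkeeping rather than conceptual: one must pin down the second-order inclusion probability, and hence the sign and magnitude of $\Cov(\eta_i,\eta_j)$, exactly, since any slip there corrupts the correction factor, and then push the diagonal/off-diagonal algebra through without sign errors. Everything else is routine. If one prefers to bypass the indicator bookkeeping entirely, the identical result follows from the classical finite-population variance formula for the sample mean, but the indicator route keeps the argument fully self-contained.
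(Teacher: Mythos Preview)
Your proof is correct and is essentially the same argument as the paper's: both expand the squared norm into a double sum, use the single and pairwise inclusion probabilities $\tau/n$ and $\tau(\tau-1)/(n(n-1))$ for uniform sampling without replacement, perform the diagonal/off-diagonal split, and arrive at the exact identity $\tfrac{n-\tau}{n\tau(n-1)}\bigl(\sum_i\|\xi_i\|^2 - n\|\bar\xi\|^2\bigr)$ before discarding the nonnegative term. Your indicator/covariance packaging is just a relabeling of the paper's direct computation of $\Exp\bigl[\sum_{i,j\in\hat S}\xi_i^T\xi_j\bigr]$.
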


Following from the proof of Corollary 3.5  in \cite{proxsvrg}, by applying Lemma \ref{randvar} with $\xi_i := \nabla f _{i}(y_{k, t}) - \nabla f _{i}(\x_k)$, we have the bound for variance as follows.

\begin{theorem}
[Bounding Variance]\label{boundvariance} Let  $\alpha(b)\eqdef \tfrac{n-b}{b (n-1)}$. Considering the definition of $G_{k, t}$ in Algorithm \ref{alg:mS2GD}, conditioned on $y_{k,t}$, we have $\Exp[G_{k, t}]=\nabla F (y_{k, t})$ and the variance satisfies,
\begin{align}
&\Exp\left[ \|G_{k, t} - \nabla F(y_{k, t})\|^2 \right] 
\nonumber
\\ &\leq   
4L \alpha(b)    [P(y_{k, t})-P(\x_*) + P(\x_k)-P(\x_*)].
  \label{eq:fa<wvgfawvgfwagva}
\end{align}

\end{theorem}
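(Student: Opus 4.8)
The plan is to recognize $G_{k,t} - \nabla F(y_{k,t})$ as precisely a subsampling error of the type controlled by Lemma~\ref{randvar}, and then to dominate the resulting sum of squared gradient differences by the functional suboptimality gap of $\primal$. First I would set $\xi_i \eqdef \nabla f_i(y_{k,t}) - \nabla f_i(\x_k)$, so that $\bar\xi = \frac1n\sum_{i=1}^n \xi_i = \nabla F(y_{k,t}) - \nabla F(\x_k)$. Since $g_k = \nabla F(\x_k)$ and $A_{kt}$ is a uniformly random subset of $\setn$ of size $b$, the definition of $G_{k,t}$ in Step~\ref{step:8} reads $G_{k,t} = \nabla F(\x_k) + \frac1b\sum_{i\in A_{kt}}\xi_i$, whence $G_{k,t} - \nabla F(y_{k,t}) = \frac1b\sum_{i\in A_{kt}}\xi_i - \bar\xi$. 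Unbiasedness, $\Exp[G_{k,t}] = \nabla F(y_{k,t})$, then follows immediately from $\Exp[\frac1b\sum_{i\in A_{kt}}\xi_i] = \bar\xi$, a consequence of uniform sampling.

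Next I would apply Lemma~\ref{randvar} with $\tau = b$ to obtain
\begin{equation*}
\Exp\big[\|G_{k,t}-\nabla F(y_{k,t})\|^2\big] \leq \tfrac{1}{nb}\tfrac{n-b}{n-1}\sum_{i=1}^n\|\xi_i\|^2 = \tfrac{\alpha(b)}{n}\sum_{i=1}^n\|\nabla f_i(y_{k,t})-\nabla f_i(\x_k)\|^2,
\end{equation*}
using $\alpha(b) = \frac{n-b}{b(n-1)}$. It therefore remains to show that $\frac1n\sum_i\|\nabla f_i(y_{k,t})-\nabla f_i(\x_k)\|^2 \leq 4L[\primal(y_{k,t})-\primal(\x_*)+\primal(\x_k)-\primal(\x_*)]$, after which the desired bound \eqref{eq:fa<wvgfawvgfwagva} follows by multiplying through by $\alpha(b)$.

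For this remaining sum I would center each term at the optimal point via $\|\nabla f_i(x)-\nabla f_i(\x_k)\|^2 \le 2\|\nabla f_i(x)-\nabla f_i(\x_*)\|^2 + 2\|\nabla f_i(\x_k)-\nabla f_i(\x_*)\|^2$, reducing the task to bounding $\frac1n\sum_i\|\nabla f_i(x)-\nabla f_i(\x_*)\|^2$ for $x\in\{y_{k,t},\x_k\}$. The key ingredient is the standard smoothness estimate for a convex $L$-smooth function minimized at a given point: applying it to $g_i(x)\eqdef f_i(x)-f_i(\x_*)-\nabla f_i(\x_*)^T(x-\x_*)$, which is convex, $L$-smooth, and minimized at $\x_*$, yields $\|\nabla f_i(x)-\nabla f_i(\x_*)\|^2 \le 2L\,g_i(x)$. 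Averaging over $i$ and using $F = \frac1n\sum_i f_i$ gives $\frac1n\sum_i\|\nabla f_i(x)-\nabla f_i(\x_*)\|^2 \le 2L(F(x)-F(\x_*)-\nabla F(\x_*)^T(x-\x_*))$.

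The main obstacle, and the only place the nonsmooth regularizer enters, is converting this $F$-gap into the $\primal$-gap that appears in the claim. Here I would invoke first-order optimality of $\x_*$ for $\primal = F + R$, namely $-\nabla F(\x_*)\in\partial R(\x_*)$, so that convexity of $R$ gives $-\nabla F(\x_*)^T(x-\x_*) \le R(x)-R(\x_*)$ and hence $F(x)-F(\x_*)-\nabla F(\x_*)^T(x-\x_*) \le \primal(x)-\primal(\x_*)$. Combining this with the centering split (a factor $2$ on each of the two terms against the factor $2L$) produces exactly the constant $4L$ and the two-gap sum, and substituting into the variance estimate above yields \eqref{eq:fa<wvgfawvgfwagva}. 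I expect the delicate point to be this optimality-condition step, since it is what allows the argument to handle the composite objective $\primal$ rather than merely its smooth part $F$; the rest is the sampling lemma together with routine smoothness inequalities.
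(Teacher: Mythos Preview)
Your proposal is correct and follows essentially the same route as the paper: applying Lemma~\ref{randvar} with $\xi_i = \nabla f_i(y_{k,t}) - \nabla f_i(\x_k)$, then bounding $\tfrac1n\sum_i\|\nabla f_i(x)-\nabla f_i(\x_*)\|^2 \le 2L[\primal(x)-\primal(\x_*)]$ via the auxiliary function $g_i$ and the optimality condition $-\nabla F(\x_*)\in\partial R(\x_*)$, which is exactly the content of Corollary~3.5 in \cite{proxsvrg} that the paper invokes. The only implicit assumption worth flagging is that your co-coercivity step requires each $f_i$ to be convex, which the paper assumes in its problem setup (see the abstract and introduction) though it is not restated in Assumption~\ref{ass1}.
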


   
\section{Proofs} 

\subsection{Proof of Lemma \ref{randvar}}
As in the statement of the lemma, by $\Exp[\cdot]$ we denote expectation with respect to the random set $\hat S$. First, note that
\begin{align*}
 \eta \eqdef & \textstyle \Exp\left[\left\|\frac1\tau \sum_{i\in \hat S} \xi_i - \bar{\xi} \right\|^2 \right]
= \Exp\left[ 
  \frac1{\tau^2} \left\|\sum_{i\in \hat S} \xi_i \right\|^2 \right]
  - \|\bar{\xi} \|^2 \\
&= \textstyle \frac1{\tau^2}
\Exp\left[ 
   \sum_{i\in \hat S}\sum_{j\in \hat S} \xi_i ^T \xi_j  \right]
  - \|\bar{\xi} \|^2.
\end{align*}
If we let $C\eqdef \|\bar{\xi} \|^2=  \frac1{n^2} \left( \sum_{i,j} \xi_i^T\xi_j \right)$, we can thus write
\begin{align*}
\eta &=
\textstyle
\tfrac1{\tau^2}
\left( 
  \tfrac{\tau (\tau-1)}{n(n-1)}  \sum_{i \neq j} \xi_i ^T \xi_j 
  + \tfrac{\tau}{n} \sum_{i=1}^n \xi_i^T\xi_i
  \right)
  - C
\\
=& 
\textstyle
\tfrac1{\tau^2}
\left( 
  \tfrac{\tau (\tau-1)}{n(n-1)}  \sum_{i, j} \xi_i ^T \xi_j 
  + \left(\tfrac{\tau}{n}-\tfrac{\tau (\tau-1)}{n(n-1)}\right) \sum_{i=1}^n \xi_i^T\xi_i
  \right)  - C
\\
=&
\textstyle
\tfrac1{n\tau}
\left[
-
\left(
-  \tfrac{(\tau-1)}{(n-1)} 
+\tfrac{\tau}{n}
\right)
\sum_{i, j} \xi_i ^T \xi_j 
  +  
  \tfrac{  n-\tau }{n-1}    \sum_{i=1}^n \xi_i^T\xi_i
\right]
\\
=&
\textstyle
\tfrac1{n\tau}
\tfrac{  n-\tau}{ (n-1)}
\left[
      \sum_{i=1}^n \xi_i^T\xi_i
      -
\tfrac1n
\sum_{i, j} \xi_i ^T \xi_j 
\right]
\leq
\textstyle
\tfrac1{n\tau}
\tfrac{  n-\tau}{ (n-1)}
\sum_{i=1}^n \left\| \xi_i\right\|^2,
\end{align*}
where in the last step we have used the bound
$
\tfrac1n
\sum_{i, j} \xi_i ^T \xi_j
= n
\left\|
\sum_{i=1}^n \tfrac1n\xi_i
\right\|^2
\geq 0.$

\subsection{Proof of Theorem \ref{s2convergence}}

The proof is following the steps in \cite{proxsvrg}.
For convenience, let us define the stochastic gradient mapping
\begin{equation*}
d_{k, t} = \tfrac1\stepsize(y_{k, t}- y_{k, t+1}) = \tfrac1\stepsize(y_{k, t} - \prox_{\stepsize R}(y_{k, t}-\stepsize G_{k, t})),
\end{equation*}
then the iterate update can be written as $y_{k, t+1} = y_{k, t} - \stepsize d_{k, t}.$ Let us estimate the change of $\|y_{k, t+1}-\x_*\|$. It holds that
\begin{align*}
&\|y_{k, t+1}-\x_*\|^2 = \|y_{k, t}-\stepsize d_{k, t} - \x_*\|^2
\\
&= \|y_{k, t}-\x_*\|^2 - 2\stepsize d_{k, t}^T(y_{k, t-1}-\x_*) + \stepsize^2\|d_{k, t}\|^2.
\tagthis\label{eq:safwavgfwafa}
\end{align*}

Applying Lemma 3.7 in \cite{proxsvrg} (this is why we need to assume that $h\leq 1/L$) with $\x=y_{k, t}$, $v=G_{k, t}$, $\x^+=y_{k, t+1}$, $g=d_{k, t}$, $\y=\x_*$ and $\Delta = \Delta_{k, t} = G_{k, t}-\nabla F (y_{k, t})$, we get
\begin{gather*}
-d_{k, t}^T(y_{k, t}-\x_*) + \tfrac\stepsize2\|d_{k, t}\|^2 
\leq  P(\x_*)-P(y_{k, t+1})
\\
-\tfrac{\mu_F}2\|y_{k, t}-\x_*\|^2 - \tfrac{\mu_R}2\|y_{k, t+1}-\x_*\|^2 - \Delta_{k, t}^T(y_{k, t+1}-\x_*),
\tagthis\label{eq:vaawfwavfwafdewafca}
\end{gather*}
and therefore,
\add{
\begin{gather*}
\|y_{k, t+1}-\x_*\|^2 \overset{\eqref{eq:safwavgfwafa},\eqref{eq:vaawfwavfwafdewafca}}{\leq}  
2\stepsize
\left( 
P(\x_*)-P(y_{k, t+1}) \right.
\\
\left. -    \Delta_{k, t}^T(y_{k, t+1}-\x_*)
\right)  + \|y_{k, t}-\x_*\|^2
\\
=
 \|y_{k, t}-\x_*\|^2 
 - 2\stepsize\Delta_{k, t}^T(y_{k, t+1}-\x_*) 
 \\ - 2\stepsize[P(y_{k, t+1})-P(\x_*)].
 \tagthis\label{eq:safawvfdwavfda}
\end{gather*}
}

In order to bound $-\Delta_{k, t}^T(y_{k, t+1}-\x_*)$, let us define the proximal full gradient update as\footnote{Note that this quantity is never computed during the algorithm. We can use it in the analysis nevertheless.}
$
\bar{y}_{k, t+1} = \prox_{\stepsize R}(y_{k, t} - \stepsize\nabla F (y_{k, t})).
$
We get

\begin{align*}
& - \Delta_{k, t}^T(y_{k, t+1}-\x_*) 
\\
&=
 - \Delta_{k, t}^T(y_{k, t+1}-\bar{y}_{k, t+1}) 
 - \Delta_{k, t}^T(\bar{y}_{k, t+1}-\x_*)
\nonumber
\\
&= 
- \Delta_{k, t}^T(\bar{y}_{k, t+1}-\x_*)
-  \Delta_{k, t}^T 
[
\prox_{\stepsize R}(y_{k, t} - \stepsize G_{k, t})
\\
&\qquad - \prox_{\stepsize R}(y_{k, t-1} - \stepsize\nabla F (y_{k, t-1}))
]
\nonumber
\end{align*}

Using Cauchy-Schwarz and Lemma~\ref{nonexpansiveness}, we  conclude that
 
\begin{align*}
& - \Delta_{k, t}^T(y_{k, t+1}-\x_*) 
\\
&\leq 
 \|\Delta_{k, t}\| \| (y_{k, t} - \stepsize G_{k, t})-(y_{k, t}-\stepsize\nabla F (y_{k, t}))\|
\\
&\qquad - \Delta_{k, t}^T(\bar{y}_{k, t+1}-\x_*), 
\nonumber
\\
&=  \stepsize \|\Delta_{k, t}\|^2- \Delta_{k, t}^T(\bar{y}_{k, t+1}-\x_*).
\tagthis\label{eq:asfaevfwavfa}
\end{align*}

\add{
Further, we obtain 
$ \|y_{k, t+1}-\x_*\|^2 
\overset{\eqref{eq:asfaevfwavfa},\eqref{eq:safawvfdwavfda}}{\leq}
  \left\|
y_{k, t}-\x_*
\right\|^2 
+  2\stepsize 
\left(   \stepsize \|\Delta_{k, t}\|^2- \Delta_{k, t}^T(\bar{y}_{k, t+1}-\x_*) 
- [P(y_{k, t+1})-P(\x_*)]\right). 
$
}
By taking expectation, conditioned on $y_{k, t}$\footnote{For simplicity, we omit the $\Exp[ \cdot \,|\, y_{k, t}]$ notation in further analysis} we obtain
\add{
\begin{gather*}
\Exp[\|y_{k, t+1} -\x_*\|^2]
  \overset{\eqref{eq:asfaevfwavfa},\eqref{eq:safawvfdwavfda}}{\leq}
  \left\|
y_{k, t}-\x_*
\right\|^2 
\\  +  2\stepsize 
\left(   \stepsize \Exp[\|\Delta_{k, t}\|^2] - \Exp[P(y_{k, t+1})-P(\x_*)]\right),
\tagthis\label{eq:asfavgfwavgfwavgfa}
\end{gather*}
}
where we have used that
$\Exp [\Delta_{k, t}] = \Exp[G_{k, t}] - \nabla F (y_{k, t})=0$
and hence $\Exp[- \Delta_{k, t}^T(\bar{y}_{k, t+1}-\x_*)] = 0$\footnote{$\bar{y}_{k, t+1}$ is constant, conditioned on $y_{k, t}$}.
Now, if we substitute \eqref{eq:fa<wvgfawvgfwagva}
into \eqref{eq:asfavgfwavgfwavgfa} and decrease index $t$ by 1, we obtain
\add{
\begin{gather*}
\Exp[\|y_{k, t} -\x_*\|^2]
\overset{\eqref{eq:asfaevfwavfa},\eqref{eq:safawvfdwavfda}}{\leq}
 \left\|
y_{k, t-1}-\x_*
\right\|^2 
\\
+  \theta  [P(y_{k, t-1})-P(\x_*)
+ P(\x_k)-P(\x_*)] \\
- 2h\Exp[P(y_{k, t})-P(\x_*)], \tagthis\label{eq:vgafrvgwaevgfa}
\end{gather*}
}
where \begin{equation}\label{eq:theta}\theta\eqdef 8 L h^2 \alpha(b)\end{equation} and $\alpha(b)=\frac{n-b}{b (n-1)}$.
\add{
Note that 
\eqref{eq:vgafrvgwaevgfa}
is equivalent to
\begin{gather*}
\Exp[\|y_{k, t}-\x_*\|^2]
+  2\stepsize 
(  \Exp[P(y_{k, t})-P(\x_*)])
\\
 \leq
  \left\|
y_{k, t-1}-\x_*
\right\|^2 
\\
+ \theta 
     \left(P(y_{k, t-1})-P(\x_*) 
     + P(\x_k)-P(\x_*)\right). \tagthis\label{eq:aswdevfwavfaawca}
\end{gather*}
}

Now, by the definition of 
$\x_k$ \add{in Algorithm~\ref{alg:mS2GD}
} 
we have that
\add{
\begin{align*}
\Exp[P(\x_{k+1})] 
&=
\tfrac1{m}
\sum_{t=1}^m 
   \Exp[P(y_{k, t})]. \tagthis\label{eqn:s2exp}
\end{align*}
}
By summing \eqref{eq:aswdevfwavfaawca}
 for $1\leq t \leq m$, we  get on the left hand side
\begin{align*}
LHS =& \sum_{t=1}^m 
   \Exp[\|y_{k, t}-\x_*\|^2] 
+      2\stepsize 
 \Exp[P(y_{k, t})-P(\x_*)]  \tagthis\label{eqn:s2LHS}
\end{align*}
and for the right hand side we have:
\begin{gather*}
RHS  = \sum_{t=1}^m 
\left\{  \Exp \|y_{k, t-1}-\x_*\|^2  + 
  \right.
 \\
\left.  \add{\theta } \Exp[P(y_{k, t-1}) - P(\x_*) + P(\x_k) - P(\x_*)] \right\}
\\
\leq \sum_{t=0}^{m-1} 
 \Exp\|y_{k, t}-\x_*\|^2 
+  \add{ \theta } 
   \sum_{t=0}^{m} \Exp[P(y_{k, t}) - P(\x_*)  ]  
\\ +  
\add{\theta}
 \Exp[  P(\x_k) - P(\x_*)]
 \add{m}. \tagthis\label{eqn:s2RHS}
\end{gather*}
Combining \eqref{eqn:s2LHS} and \eqref{eqn:s2RHS}
and using the fact that $LHS\leq RHS$, we have
\begin{gather*}
\Exp[\|y_{k, m} - \x_*\|^2]  + 
\add{2\stepsize}
\sum_{t=1}^m  \Exp[P(y_{k, t})-P(\x_*)]
\\
\leq
 \Exp\|y_{k, 0}-\x_*\|^2  +
  \add{\theta}
 \Exp[  P(\x_k) - P(\x_*)]
 \add{m}
\\ 
+  \add{\theta }
 \sum_{t=1}^{m} 
   \Exp[P(y_{k, t}) - P(\x_*)  ]
 + \add{\theta}
   \Exp[P(y_{k, 0}) - P(\x_*)  ].
  \end{gather*}  
Now, using \eqref{eqn:s2exp}, we obtain
\begin{gather*}
\Exp[\|y_{k, m}-\x_*\|^2] 
+ 
\add{2\stepsize m}
  \Exp[P(\x_{k+1}) -P(\x_*)]
\\
\leq
 \Exp\|y_{k, 0}-\x_*\|^2 
 +
  \add{\theta m} \Exp[  P(\x_k) - P(\x_*)]
\\
+ 
\add{\theta m} 
      \Exp[P(\x_{k+1}) - P(\x_*)  ]
+  
 \add {\theta} 
   \Exp[P(y_{k, 0}) - P(\x_*)  ].   
\tagthis \label{eq:vfrwavfwafaewfcwa}
  \end{gather*}


Strong convexity~\eqref{strongconv} and optimality of $\x_*$ imply that $0\in \partial P(\x_*)$, and hence for all $\x \in \R^d$ we have
\begin{equation*}
\|\x - \x_*\|^2\leq\tfrac2\mu [P(\x)-P(\x_*)]\tagthis\label{eqn:strongconvex}.
\end{equation*}
Since  $\Exp\|\y_{k, m} - \x_*\|^2\geq 0$ and $\y_{k, 0} = \x_k$, by combining \eqref{eqn:strongconvex}
and \eqref{eq:vfrwavfwafaewfcwa} we get
\begin{gather*}
\add{m(2h -\theta)}
 \Exp[P(\x_{k+1}) -P(\x_*)]
\\
\leq
( P(\x_k)-P(\x_*))
\left( \tfrac2{\mu} 
  +  \add {\theta} 
 \left(\add{m+1}  \right) \right).
\end{gather*} 
Notice that in view of our assumption on $h$ and  \eqref{eq:theta}, we have \add{$
2h>\theta$},  and hence
\begin{equation*}
\Exp[P(\x_{k+1})-P(\x_*)]\leq \decrease[P(\x_k) - P(\x_*)],
\end{equation*}
where 
\add{
$
\decrease = 
\tfrac{    2
  }
  {
 m \mu (2h-\theta)
  }
+
\tfrac{  \theta(m+1)   } { m(2h-\theta)}.
$
}
Applying the above linear convergence relation recursively with chained expectations, we finally obtain
$
\Exp [P(\x_k)-P(x_*)] \leq \decrease^k[P(\x_0) - P(x_*)].
$

 \subsection{Proof of Theorem \ref{thm:optimalM}}

Clearly,  if we choose some value of $\stepsize$ then the value of $m$ will be determined from \eqref{s2rho} (i.e. we need to choose $m$ such that we will get desired rate).
Therefore, $m$ as a function of $\stepsize$ obtained from \eqref{s2rho} is
\begin{equation}\label{eq:fsarewavgfwavgfw}
m(\stepsize) = \tfrac{1 + 4 \alpha(b) \stepsize^2 L \mu}
            {\stepsize \mu (\decrease - 4 \alpha(b) \stepsize L
   ( \decrease + 1 )  )}.
\end{equation}
Now, we can observe that the nominator is always positive
and the denominator is positive only if
$
\decrease   > 4 \alpha(b) \stepsize L
   ( \decrease + 1)
$, which implies
   $\tfrac{1}{4 \alpha(b) L
   } 
   \cdot  \tfrac{\decrease}{ \decrease + 1} >  \stepsize 
$ (note that $\tfrac{\decrease}{ \decrease + 1}\in [0,\frac12]$).
Observe that this condition is stronger than the one in the assumption of Theorem \ref{s2convergence}.
It is easy to verify that
$$\lim_{\stepsize \searrow 0} m(\stepsize) = +\infty,
\qquad
\lim_{\stepsize \nearrow \frac{1}{4 \alpha(b) L
   } 
   \cdot   \frac{\decrease}{ \decrease + 1}   } m(\stepsize) = +\infty.
$$
Also note that $m(\stepsize)$ is differentiable (and continuous) at any 
$\stepsize \in (0,\frac{1}{4 \alpha(b) L
   } 
   \cdot   \frac{\decrease}{ \decrease + 1} )=:I_\stepsize$.
The derivative of $m$ is given by
$
m'(\stepsize) = 
\tfrac{-\decrease + 4 \alpha(b) \stepsize L (2 + (2 + \stepsize \mu) \decrease)}
     {\stepsize^2 \mu (\decrease - 4 \alpha(b) \stepsize L (1 + \decrease))^2}.
$ Observe that $m'(\stepsize)$ is defined and continuous for any $\stepsize \in I_\stepsize$.
Therefore there have to be some stationary points (and in case that there is just on $I_\stepsize$) it will be the global minimum on $I_\stepsize$.
The FOC gives
\begin{align*}
\tilde \stepsize^b
&=
 \tfrac{-2 \alpha(b) L (1 + \decrease) + \sqrt{
 \alpha(b) L (\mu \decrease^2 + 4 \alpha(b) L (1 + \decrease)^2)}}
 {2 \alpha(b) L \mu \decrease}
 \\
&=
 \sqrt{ 
 \tfrac{ 
   1  }
 {4 \alpha(b) L  \mu   } 
+
 \tfrac{ 
    (1 + \decrease)^2 }
 {  \mu^2 \decrease^2} 
 }
 -
  \tfrac{    1 + \decrease   }
 {  \mu \decrease}.
\tagthis \label{eq:avfawefwafewafwafe} 
\end{align*}
If this $\tilde\stepsize^b \in I_\stepsize$
and also $\tilde\stepsize^b \leq \frac1{L}$
then this is the optimal choice and plugging 
\eqref{eq:avfawefwafewafwafe}
into \eqref{eq:fsarewavgfwavgfw}
gives us \eqref{eq:vfewdfwaefvawvgfeefewafa}.

\paragraph{Claim \#1} It always holds that  $\tilde\stepsize^b \in I_\stepsize$.
We just need to verify that
$$
 \sqrt{ 
 \tfrac{ 
   1  }
 {4 \alpha(b) L  \mu   } 
+
 \tfrac{ 
    (1 + \decrease)^2 }
 {  \mu^2 \decrease^2} 
 }
 -
  \tfrac{    1 + \decrease   }
 {  \mu \decrease}
 < \tfrac{1}{4 \alpha(b) L
   } 
   \cdot   \tfrac{\decrease}{ \decrease + 1}, 
$$
which is equivalent to 
$
\mu \decrease^2 + 4 \alpha(b) L (1 + \decrease)^2
 > 
 2 (1 + \decrease) \sqrt{\alpha(b) L (\mu \decrease^2 + 4 \alpha(b) L (1 + \decrease)^2)}. $
Because both sides are positive, we can square them to obtain the equivalent condition
$$
\mu \decrease^2 (\mu \decrease^2 + 4 \alpha(b) L (1 + \decrease)^2) > 0.
$$
\paragraph{Claim \#2} If $\tilde \stepsize^b > \frac1{L}$ then 
$\stepsize^b_* = \frac1L$.
The only detail which needs to be verified is that the denominator
of \eqref{eq:fasfawefwafewa} is positive
(or equivalently we want to show that
$ \decrease > 4 \alpha(b)   (1+\decrease)$.
To see that, we need to realize that in that case
we have $\tfrac1L \leq \tilde \stepsize^b \leq \frac{1}{4 \alpha(b) L
   } 
   \cdot   \frac{\decrease}{ \decrease + 1}$,
which implies that
$4 \alpha(b)   (1+\decrease) < \decrease.
$

\add{
\subsection{Proof of Corollary \ref{thm:minibatch}}\label{thm:proof3}

By substituting definition of $\tilde \stepsize^b$ in Theorem~\ref{thm:optimalM}, we get
\begin{equation}\label{eqn:b0}
\tilde \stepsize^b < \tfrac1L
\quad \Longleftrightarrow \quad 
b < b_0 \eqdef \tfrac{8\rho n\kappa + 8n\kappa + 4\rho n}
{\rho n\kappa + (7\rho+8)\kappa + 4\rho},
\end{equation}
where $\kappa = L/\mu$. Hence, it follows that if $b < \lceil b_0 \rceil$, then $h^b = \tilde \stepsize^b$ and $m^b$ is defined in~\eqref{eq:vfewdfwaefvawvgfeefewafa}; otherwise, $h^b=\frac1L$ and $m^b$ is defined in~\eqref{eq:fasfawefwafewa}. Let $e$ be the base of the natural logarithm. By selecting $b_0 =\frac{8 n\kappa + 8 e n\kappa + 4n}
{n\kappa + (7+8e)\kappa + 4}$, choosing mini-batch size $b<\lceil b_0 \rceil$, and running the inner loop of mS2GD for
\begin{equation}\label{eqn:maxiter}
m_b = \left\lceil 8 e\alpha(b)\kappa \left(e + 1 + \sqrt{
 \tfrac1{4 \alpha(b) \kappa} +   (1 + e)^2}\right) \right\rceil
 \end{equation}
 iterations with constant stepsize 
\begin{equation}\label{eqn:stepsizehb}
h^b = \sqrt{ \left( \tfrac{1+e}{\mu} \right)^2 + \tfrac1{4\mu\alpha(b)L}} - \tfrac{1+e}{\mu},
\end{equation}
we can achieve a convergence rate 
\begin{align*}
\rho &\overset{\eqref{s2rho}}{=}
\tfrac{    1
  }
  {
  m_b 
\stepsize^b \mu
(
1
-
  4\stepsize^b  L \alpha(b)  
)
  }
+
\tfrac{      
  4\stepsize^b  L \alpha(b) 
\left(      
 m_b
+ 
  1
\right)  
  }
  {
  m_b 
(
1
-
  4\stepsize^b   L \alpha(b)  
)
  } \overset{\eqref{eqn:maxiter}, \eqref{eqn:stepsizehb}}{=}
\tfrac{1}{e}. \tagthis\label{eqn:convergencerate}
 \end{align*}
 Since
$k = \left\lceil \log(1 / \epsilon) \right\rceil \geq \log(1 / \epsilon)$ if and only if $ 
e^k  \geq 1 / \epsilon$ if and only if
$ 
e^{-k}  \leq  \epsilon,
$
 we can conclude that $\rho^k \overset{\eqref{eqn:convergencerate}}{=} (e^{-1})^k = e^{-k} \leq \epsilon$. Therefore, running mS2GD for $k$ outer iterations achieves $\epsilon$-accuracy solution defined in \eqref{eq:epsilonaccuracy}. Moreover, since in general $\kappa \gg e, n\gg e$, it can be concluded that
$$ b_0 \overset{\eqref{eqn:b0}}{=}\tfrac{8(1+e) n\kappa + 4n}
{n\kappa + (7+8e)\kappa + 4} \approx 8 \left( e+1 \right) \approx 29.75,$$
then with the definition $\alpha(b) = \frac{(n-b)}{b(n-1)}$, we derive
 \begin{align*}
 & b m_b 
 \overset{\eqref{eqn:maxiter}}{=}
 \left\lceil 8 e\kappa\tfrac{(n-b)}{(n-1)} \left(e + 1 + \sqrt{
 \tfrac1{4 \alpha(b) \kappa} +   (1 + e)^2}\right) \right\rceil
\\&
\overset{1\leq b < 30}{\leq} 
 \left\lceil 8 e\kappa \left((e + 1) + \sqrt{
 \tfrac{b}{4 \kappa} +   (1 + e)^2}\right) \right\rceil = \mathcal{O}(\kappa),
 \end{align*}
so from \eqref{eqn:complexity}, the total complexity can be translated to
$ \mathcal{O} \left( (n + \kappa) \log(1 / \epsilon) \right).$
This result shows that we can reach efficient speedup by mini-batching as long as the mini-batch size is smaller than some threshold $b_0 \approx 29.75$, which finishes the proof for Corollary \ref{thm:minibatch}.

 }

\section{Proximal lazy updates for $\ell_1$ and $\ell_2$-regularizers}\label{section:proxl}

\subsection{Proof of Lemma~\ref{lemma:L2regularizer}}

For any $s \in \{1,2,\dots,\tau\}$
we have
$
\tilde y_s
=\prox_{hR}(\tilde y_{s-1} - hg)
= \beta 
  (\tilde y_{s-1} - hg),
$
where  $\beta \eqdef 1/(1+\lambda \stepsize)$.
Therefore,
\[
\textstyle \tilde y_{\tau} 
= \beta^\tau \tilde y_{0} 
- h\left(\sum_{j=1}^\tau \beta^j\right) g
=
\beta^{\tau} y  - \frac{\stepsize\beta}{1-\beta}
                \left[1 - \beta^{\tau} \right] g.
\]

\subsection{Proof of Lemma~\ref{lemma:L1regularizer}}

\begin{proof}
For any $s \in \{1,2,\dots,\tau\}$ and $j\in \{1,2,\dots,d\}$, 
\begin{align*}
\vc{\tilde y_{s}}{j}  &= \arg\min_{x\in\R} \frac12(x - \tilde y_{s-1}^j + hg^j)^2+ \lambda h |x| 
\\
&= 
\begin{cases}
\tilde y_{s-1}^j -  (\lambda+ g^j)h, &\text{ if } \tilde y_{s-1}^j  >(\lambda+g^j) h,\\
\tilde y_{s-1}^j + ( \lambda - g^j) h, &\text{ if }\tilde y_{s-1}^j   <-(\lambda-g^j) h,\\
0, &\text{ otherwise, }
\end{cases}
\\
&= 
\begin{cases}
\tilde y_{s-1}^j -  M, &\text{ if } \tilde y_{s-1}^j  > M,\\
\tilde y_{s-1}^j - m, &\text{ if }\tilde y_{s-1}^j   <m ,\\
0, &\text{ otherwise.}
\end{cases}
\end{align*}
where $M \eqdef (\lambda+g^j) h$, $m \eqdef -(\lambda-g^j) h$ and $M-m = 2\lambda h > 0$.  Now, we will distinguish several cases based on $\vc{g}{j}$:
\begin{enumerate}[topsep=1.5ex,itemsep=1.5ex]

\item[(1)] When $g^j \geq \lambda$, then $M>m = - (\lambda - g^j)h \geq 0$, thus by letting $p = \left\lfloor \frac{y^j}{M}\right\rfloor $, we have that: if $y^j < m$, then
$\tilde y^j_\tau = y^j - \tau m;$ if $m  \leq y^j < M$, then
$ \tilde y_{\tau}^j =  -(\tau-1)m;$ and if $y^j \geq M$, then
\begin{align*}
\tilde y_{\tau}^j &= 
\begin{cases}
y^j - \tau M, &\text{ if }\tau \leq p,\\
y^j - pM -(\tau - p)m, &\text{ if }\tau > p \text{ \& }y^j - pM < m,\\
-(\tau - p-1)m, &\text{ if }\tau > p \text{ \& }y^j - pM\geq m,
\end{cases}
\\&= 
\begin{cases}
y^j - \tau M, &\text{ if }\tau \leq p,\\
\min\{y^j - pM, m\} -(\tau - p)m, &\text{ if }\tau > p.
\end{cases}
\end{align*}
\item[(2)] When $-\lambda<g^j<\lambda$, then $M = (\lambda + g^j)h> 0,   m = - (\lambda - g^j)h  < 0$, thus we have that 
\begin{align*}
&\tilde y_{\tau}^j =
\begin{cases}
\max\{  y ^j - \tau M, 0\},&\text{ if } y^j\geq 0,\\
\min\{  y^j - \tau m, 0\},&\text{ if } y^j< 0.
\end{cases}
\end{align*}
\item[(3)] When $g^j \leq - \lambda$, then $m < M = (\lambda + g^j)h \leq 0$, thus by letting $q = \left\lfloor\frac{y^j}{m}\right\rfloor$, we have that: if $y^j \leq m$, then
 \begin{align*}
 \tilde y_\tau^j &= 
 \begin{cases}
 y^j - \tau m, &\text{ if }\tau \leq q,\\
  y^j - p m -(\tau-q)M, &\text{ if }\tau > q \text{ \& }y^j - qm > M,\\
 -(\tau-q - 1)M, &\text{ if }\tau > q \text{ \& }y^j - qm \leq M,
 \end{cases}
 \\&= 
  \begin{cases}
 y^j - \tau m, &\text{ if }\tau \leq q,\\
 \max\{y^j - q m, M\} -(\tau-q)M, &\text{ if }\tau > q;
 \end{cases}
 \end{align*}
 if $m < y^j \leq M$, then
$\tilde y_\tau^j = -(\tau-1)M;$ if $y^j >  M$, then
$\tilde y_\tau^j = y^j - \tau M.$
\end{enumerate}

Now, we will perform a few simplifications:
Case (1). When $y^j < M$, we can conclude that
$
\tilde y_\tau^j= \min\{y^j, m\} - \tau m.
$
Moreover, since the following equivalences hold if $g^j \geq \lambda$: $y^j \geq M \ \Leftrightarrow \ \tfrac{y^j}{M} \geq 1
\ \Leftrightarrow \ p\geq 1$, and
$y^j < M\ \Leftrightarrow \ \tfrac{y^j}{M} < 1
\ \Leftrightarrow \ p\leq 0$,
the situation  simplifies to
\begin{align*}
\tilde y_\tau^j&\quad =
\begin{cases}
y^j - \tau M, &\text{if }p\geq\tau,\\
\min\{y^j - pM, m\} -(\tau - p)m, &\text{if }1 \leq p<\tau,\\
\min\{y^j, m\} - \tau m, &\text{if }p\leq 0,
\end{cases}
\\&\quad =
\begin{cases}
y^j - \tau M, &\text{if }p\geq\tau,\\
\min\{y^j - [p]_+ M, m\} -(\tau - [p]_+)m, &\text{if }p<\tau,
\end{cases}
\end{align*}
where $[\cdot]_+ \eqdef \max\{\cdot, 0\}$. For Case (3), when $y^j > m$, we can conclude that
$
\tilde y_\tau^j = \max\{y^j, M\} - \tau M,
$
and in addition, the following equivalences hold when $g^j \leq -\lambda$:
\begin{align*}
y^j \leq m \ \Leftrightarrow \  \tfrac{y^j}{m} \geq 1
\  \Leftrightarrow \ q\geq 1,
\\
y^j > m\  \Leftrightarrow \  \tfrac{y^j}{m} < 1
\ \Leftrightarrow \ q\leq 0,
\end{align*}
which  summarizes the situation as follows:
\begin{align*}
\tilde y_\tau^j&\quad=
\begin{cases}
y^j - \tau m, &\text{if }q\geq\tau,\\
 \max\{y^j - q m, M\} -(\tau-q)M, &\text{if }1 \leq q<\tau,\\
 \max\{y^j, M\} - \tau M, &\text{if }q\leq 0,
\end{cases}
\\&\quad=
\begin{cases}
y^j - \tau m, &\text{if }q\geq\tau,\\
 \max\{y^j - [q]_+ m, M\} -(\tau-[q]_+)M, &\text{if }q<\tau.
\end{cases}\qedhere
\end{align*}

\end{proof}

\ifCLASSOPTIONcaptionsoff
  \newpage
\fi



%
%
%

\bibliographystyle{./IEEEtran}
\bibliography{./IEEEabrv,./literature}

\end{document}